%% file: arxiv_asgd.tex
\documentclass{article}

\usepackage[preprint]{neurips_2026}

\usepackage[utf8]{inputenc} 
\usepackage[T1]{fontenc}    
\usepackage[pagebackref=true]{hyperref}       
\renewcommand*{\backref}[1]{}
\renewcommand*{\backrefalt}[4]{%
    \ifcase #1 \relax
    \or
        (Cited on page #2.)%
    \else
        (Cited on pages #2.)%
    \fi
}
\usepackage{url}            
\usepackage{booktabs}       
\usepackage{amsfonts}       
\usepackage{nicefrac}       
\usepackage{microtype}      
\usepackage{xcolor}         

\usepackage{subcaption}
\usepackage{amsmath}
\allowdisplaybreaks

\usepackage{etoolbox}
\newtoggle{preprint}
\togglefalse{preprint} 

\newcommand{\textstyleifpreprint}{\iftoggle{preprint}{\textstyle}{}}

\title{
  Rescaled Asynchronous SGD:\\
  Optimal Distributed Optimization\\
  under Data and System Heterogeneity
}
\author{%
  Ammar~Mahran \\
  KAUST \\
  \texttt{majiedammar.mahran@kaust.edu.sa} \\
  \And
  Artavazd~Maranjyan \\
  KAUST \\
  \texttt{artavazd.maranjyan@kaust.edu.sa} \\
  \And
  Peter~Richt\'{a}rik \\
  KAUST \\
  \texttt{peter.richtarik@kaust.edu.sa} \\
}

\input{macros.tex}

\usepackage{thmtools}
\usepackage{thm-restate}
\usepackage{cleveref}
\crefname{assumption}{assumption}{assumptions}
\crefname{theorem}{theorem}{theorems}
\crefname{corollary}{corollary}{corollaries}
\crefname{lemma}{lemma}{lemmas}

\newcommand{\algcomment}[1]{\hfill $\triangleright$ #1}

\newcommand{\var}[2][]{\mathbb{V}{#1}\rbr{#2}}
\newcommand{\cov}[2][]{\mathbb{C}{#1}\rbr{#2}}

\newcommand{\rasgd}{\algname{Rescaled~ASGD}}
\newcommand{\rasgdtitle}{Rescaled~ASGD}
\newcommand{\vasgd}{\algname{Vanilla~ASGD}}

\newcommand{\mishchenkoasgd}{\algname{Delay-Adaptive~ASGD}}
\newcommand{\koloskovaasgd}{\algname{Concurrent~ASGD}}

\definecolor{linkcolor}{rgb}{0.45,0,0.05} 
\hypersetup{ %
  colorlinks=true,
  linkcolor=linkcolor,
  citecolor=linkcolor,
  filecolor=linkcolor,
  urlcolor=linkcolor,
}

\begin{document}

\maketitle

\begin{abstract}
  Asynchronous stochastic gradient descent (ASGD) is a standard way to exploit heterogeneous compute resources in distributed learning:
  instead of forcing fast workers to wait for slow ones, the server updates the model whenever a gradient arrives.
  \vasgd applies each arriving gradient with the same weight.
  When local data distributions are heterogeneous, this becomes problematic: faster workers contribute more updates, and we show theoretically that the method is biased toward a frequency-weighted average of the local objectives rather than the desired global objective.
  Existing remedies typically move away from the simple ASGD template by introducing gathering phases, buffering, or extra memory.
  We show that this is unnecessary.
  Keeping the standard ASGD mechanism, we recover the correct objective by rescaling worker-specific stepsizes in proportion to their computation times, so that each worker contributes the same aggregate learning rate over a cycle.
  In the non-convex setting, under smoothness and bounded heterogeneity assumptions, we prove that the resulting method, \rasgd, converges to stationary points of the correct global objective in the fixed-computation model.
  Its time complexity matches the known lower bound in the leading term, while the effects of staleness and data heterogeneity appear only in lower-order terms.
  Experiments confirm that the method converges to the correct objective and is competitive with state-of-the-art baselines.
\end{abstract}

\section{Introduction} 
\label{sec:introduction}

We consider the non-convex distributed optimization problem
\begin{equation}
\label{eq:objective-intro}
  \textstyleifpreprint
    \min_{\xx \in \R^d} \nbr{F(\xx) \coloneqq \frac{1}{n} \sum_{i=1}^n F_i(\xx)} ,
\end{equation}

where $F_i : \R^d \to \R, \xx \mapsto \ExpSub{\xi \sim \cD_i}{f_i(\xx; \xi)}$ denotes the local objective function of worker $i = 1, \dots, n$, which is the expected value of the sample loss $f_i(\xx; \xi)$ over data points $\xi$ drawn from their local distribution $\cD_i$.
In parallel stochastic gradient descent methods, workers collaboratively solve this problem by computing stochastic gradients of their local objectives, which are then processed by a central server to find an $\varepsilon$-stationary point, that is, a random vector $\bar\xx \in \R^d$ such that $\mathbb{E}\norm{\nabla F(\bar\xx)}^2 \le \varepsilon$.

In this model, workers differ across two dimensions.
First, each worker has their own local objective function $F_i$ which may differ from one another and the global objective function $F$.
These objective functions vary across workers, typically due to highly heterogeneous local data distributions $\cD_i$, as is common in federated learning \citep{mcmahan_communication-efficient_2017,caldas_leaf_2019,kairouz_advances_2021,wang_field_2021}, for example.
While early asynchronous literature primarily focused on the homogeneous data setting \citep[e.g.,][]{tyurin_optimal_2023,maranjyan_ringmaster_2025}, recent extensions to the heterogeneous setting \citep{mishchenko_asynchronous_2022,koloskova_sharper_2022, nguyen_federated_2022} often require restrictive similarity assumptions or algorithmic modifications.
In the present work, our focus is on the general case allowing for data heterogeneity ($F_i \neq F_j$) under a relaxed similarity assumption (\Cref{assumption:bounded-heterogeneity}).

Second, workers may also differ in the time it takes them to compute stochastic gradients, due to, for example, hardware differences \citep{dutta_slow_2018,li_federated_2020-1,maranjyan_mindflayer_2025, maranjyan_ata_2025, maranjyan_gradskip_2025}.
We denote the time needed for worker $i$ to compute a stochastic gradient by $\tau_i$ throughout the text and explicitly allow for these to differ across workers.

\begin{table*}[t]
    \caption{
    Comparison of worst-case wall-clock time complexities for parallel stochastic first-order methods in the fixed-computation time model with data and system heterogeneity to achieve $\varepsilon$-stationarity in \eqref{eq:objective-intro}.
    We denote the arithmetic mean and maximum, respectively, of the workers' computation times by $\tau_A, \tau_{\max}$.
    Problem parameters include the initial optimality gap $\Delta \coloneqq F(\xx^0) - F^*$ and global smoothness constant $L$ (\Cref{assumption:global-function-nc}), the target stationarity $\varepsilon > 0$, and a bound on the stochastic gradient variance $\sigma^2$ (\Cref{assumption:bounded-gradient-variance}).
    \textbf{Asymptotic Optimality:} time complexity matches the lower bound established by \citet{tyurin_optimal_2023} in the leading term and achieves $\varepsilon$-stationarity for arbitrarily small $\varepsilon$.
    \textbf{No Idle Workers:} all workers remain busy and computational resources are fully utilized.
    \textbf{No Memory Overhead:} no gradients or model iterates need to be stored for later use.
    }
    \label{table:complexities}
    \centering 
    \begin{threeparttable}
    \resizebox{\textwidth}{!}{
    \begin{tabular}[t]{c c c c c}
      \toprule
      \bf \makecell{Method \\ (Reference)} & \bf \makecell{Time Complexity \\ (Leading Term)} & \bf \makecell{Asymptotic \\ Optimality} & \bf \makecell{No Idle \\ Workers} & \bf \makecell{No Memory \\ Overhead} \\
      \midrule
      \makecell{\naiveminibatch \\ \citep{cotter_better_2011,dekel_optimal_2012}} & $\frac{\Delta L \sigma^2}{n \varepsilon^2} \tau_{\max}$& \crossmarkred & \crossmarkred  & \checkmarkgreen \\
      \addlinespace
      \makecell{\malenia \\ \citep{tyurin_optimal_2023}} & $\frac{\Delta L \sigma^2}{n \varepsilon^2} \tau_A$ & \checkmarkgreen & \checkmarkgreen & \crossmarkred \\
      \addlinespace
      \makecell{\ringleader \\ \citep{maranjyan_ringleader_2026}} & $\frac{\Delta L' \sigma^2}{n \varepsilon^2} \tau_A$ \textsuperscript{{\color{linkcolor} (\dag)}} & \crossmarkred & \checkmarkgreen & \crossmarkred \\
      \addlinespace
      \makecell{\koloskovaasgd \\ \citep{koloskova_sharper_2022}} & $\frac{\Delta L (\sigma^2 + \zeta^2)}{n \varepsilon^2} \tau_{\max}$ \textsuperscript{{\color{linkcolor} (\ddag)}} & \crossmarkred & \crossmarkred & \crossmarkred \\
      \addlinespace
      \makecell{\mishchenkoasgd \\ \citep{mishchenko_asynchronous_2022}} & $\frac{\Delta L \sigma^2}{n \varepsilon^2} \tau_A$ & \crossmarkred \textsuperscript{{\color{linkcolor} (\ddag)}} & \checkmarkgreen & \checkmarkgreen \\
      \addlinespace
      \makecell{\rasgd \\ \Cref{thm:r-asgd-bound}, \Cref{corollary:r-asgd-time-complexity}} & $\frac{\Delta L \sigma^2}{n \varepsilon^2} \tau_A$ & \checkmarkgreen & \checkmarkgreen & \checkmarkgreen \\
      \bottomrule
    \end{tabular}
    }

    \resizebox{\textwidth}{!}{
        \begin{minipage}{\textwidth}
            \begin{tablenotes}[para,flushleft]
                \footnotesize
                \item[{\color{linkcolor} (\dag)}]
                \citet{maranjyan_ringleader_2026} rely on a constant $L' \ge L$ to bound data heterogeneity. 
                In certain settings, $L' = \cO(L)$, and \ringleader achieves optimal time complexity.
                \\
                \item[{\color{linkcolor} (\ddag)}]
                \citet{koloskova_sharper_2022,mishchenko_asynchronous_2022} rely on a stricter assumption on the data heterogeneity, $\norm{\nabla F_i(\xx) - \nabla F(\xx)}^2 \le \zeta^2$. 
                In \koloskovaasgd, the constant $\zeta^2$ affects the leading term of the time complexity.
                In \mishchenkoasgd, the gradient norm can be bounded only up to this constant, therefore not achieving arbitrary $\varepsilon$-stationarity, and thus no asymptotic optimality.
                If the computation times are equal across workers, \mishchenkoasgd achieves arbitrary $\varepsilon$-stationarity with the leading term of the time complexity scaling with $\tau_A$.
                See \Cref{sec:wall_clock_time_complexities_of_concurrent_and_delay_adaptive_asgd} for details.
            \end{tablenotes}
        \end{minipage}
    }
    \end{threeparttable}
\end{table*}

\paragraph{Naive Distributed Optimization} 
\label{par:naive_minibatch_sgd}

In \naiveminibatch \citep{cotter_better_2011,dekel_optimal_2012}, each worker $i$ computes a stochastic gradient in $\tau_i$ units of time and sends it to the central server.
Once the server has received gradients from all workers, gradients are averaged, and the model is updated and sent back to the workers for computation of another round of gradients.
Each round thus takes $\tau_{\max} \coloneqq \max_i \tau_i$ units of time, leaving faster workers idle after delivering their gradients while they wait for the global model update.
While this fully synchronized procedure simplifies the theoretical analysis to that of standard SGD, it severely underutilizes fast workers whose idle time could be spent more productively.


\paragraph{Leveraging Faster Workers} 
\label{par:leveraging_faster_workers}

\citet{tyurin_optimal_2023} introduce an improved variant, \malenia, that addresses worker idleness.
In this algorithm, workers compute a number of stochastic gradients at the current model until the server receives at least one stochastic gradient from each.
Thus, faster workers do not idly wait after computation of a single stochastic gradient, but continue computing stochastic gradients at the same model iterate.
These gradients are then averaged per worker, so that each worker effectively delivers one stochastic gradient with reduced variance.
Once the server has obtained at least one gradient from each worker, the model is updated and sent back to all workers for the next round.
While the gradient gathering phase is asynchronous, the model update happens synchronously ensuring all workers compute gradients at the same model iterate in each round.

Building on this, \citet{maranjyan_ringleader_2026} introduce an asynchronous algorithm, \ringleader, in which the number of model updates is greater, allowing for greater exploration of the search space.
Here, the gradient gathering phase is followed by $n$ asynchronous updates, one from each worker.
Notably, \ringleader utilizes a gradient table in which gradients from all workers are stored to ensure that once an asynchronous update is taken, the step direction approximates the gradient of the global objective function rather than that of the workers' local objective functions.
As in \naiveminibatch and \malenia, the gathering phase causes the slowest worker to dictate when updates from all workers can be applied.
Nevertheless, analysis shows that the leading term of the wall-clock time complexity of \malenia and \ringleader scales with the arithmetic mean of the computation times, not their maximum.
This speed-up stems from the variance reduction of having faster workers compute multiple stochastic gradients per round.


\begin{algorithm}[t]
\caption{Asynchronous SGD}
\label{algo:asgd}
\begin{algorithmic}[1]
\STATE \textbf{Input:} initial point $\xx_0\in \R^{d}$, stepsizes $\gamma_k > 0$

\STATE Set $\yy_{0,i} = \xx_0, \quad \forall i$
\algcomment{$\yy_{k,i} = $ worker $i$'s model before update $k$.}%

\STATE Each worker $i$ begins calculation of $\nabla f_i(\yy_{0,i})$

\FOR{$k = 0,1,\dots$}
\STATE \textit{Some} worker $i_k$ delivers stochastic gradient $\nabla f_{i_k}(\yy_{k,i_k})$
\STATE Server updates $\xx_{k+1} = \xx_k - \gamma_k \nabla f_{i_k}(\yy_{k,i_k})$
\algcomment{Model update with stale gradient.}%

\STATE Update worker model:
\newlength{\lhswidth}
\settowidth{\lhswidth}{$\yy_{k+1,i_k}$}
\item[] \quad $\makebox[\lhswidth][l]{$\yy_{k+1,i_k}$} = \xx_{k+1}$ 
\algcomment{Update model held by $i_k$.}%
\item[] \quad $\makebox[\lhswidth][l]{$\yy_{k+1,j}$} = \yy_{k,j} , \quad \forall j \neq i_k$ 
\algcomment{Models held by other workers unchanged.}%

\STATE Worker $i_k$ begins calculating $\nabla f_{i_k}(\xx_{k+1})$
\ENDFOR
\end{algorithmic} 
\end{algorithm}

\paragraph{Asynchronous SGD} 
\label{par:fully_asynchronous_sgd}

In contrast to these methods, \asyncsgd never synchronizes workers nor delays the application of updates from any workers.
Instead, the server updates the model whenever it receives a gradient from a worker, and immediately forwards the updated model to the same worker for computation of their next stochastic gradient.
This method is described in \Cref{algo:asgd}.\footnote{Henceforth, \asyncsgd refers to \Cref{algo:asgd} specifically, while "asynchronous SGD" refers to the wider class of asynchronous SGD methods.}

\asyncsgd avoids the problem of idle workers, but introduces two novel problems: \textbf{staleness} and \textbf{objective inconsistency}.
Because the server updates its model whenever a gradient arrives from any worker, the gradient applied in a given iteration typically does not correspond to the model of the previous iteration;
in other words, the server performs model updates with stale gradients.
Moreover, as faster workers contribute more gradient updates than slower ones, the search trajectory may be biased towards faster workers' local objective functions, resulting in the wrong objective function being targeted.

\citet{koloskova_sharper_2022} analyze a modified version of \Cref{algo:asgd}.
Here, workers compute stochastic gradients in parallel and the server updates the model whenever an update arrives, just like in \Cref{algo:asgd}.
Unlike \asyncsgd, the server then selects a worker uniformly at random to compute a stochastic gradient at the updated model.
Crucially, workers are sampled uniformly at random, and even workers who are still busy computing a stochastic gradient may be sampled for computation of another gradient at this updated iterate.
Thus, the total number of gradients in flight is kept constant throughout this algorithm.
While theoretical analysis shows this method to converge to a stationary point, it does not leverage faster workers to compute more gradients as the shares of model updates by all workers are, by construction, equal in the long run.

\citet{mishchenko_asynchronous_2022} offer the latest analysis of \Cref{algo:asgd} under arbitrary delays.
In the heterogeneous data setting, the authors can only establish a bound on the gradient norm with a non-vanishing term capturing the degree of data heterogeneity, i.e., a measure of how much the gradients of the local objective functions differ from that of the global objective function.
As the authors rightly point out, "we cannot really expect good performance for arbitrarily heterogeneous losses under arbitrary delays."


\paragraph{From Gradient to Gradient \textit{Step}} 
\label{par:motivation}

Fortunately, there is often more temporal structure in practice than assumed in the analysis of \citet{mishchenko_asynchronous_2022} that we can exploit.
To a first-order approximation, computation times may, in many settings, be assumed to be fixed over time. 
To see how this added structure can be leveraged, let us first consider what happens when \naiveminibatch takes a step of stepsize $\alpha$:
The model $\xx^0$ is moved in direction $-\alpha \cdot \frac{1}{n} \sum_{i=1}^n \nabla f_i(\xx^0)$, so that the contribution of worker $i$ to this update is $-\nicefrac{\alpha}{n} \nabla f_i(\xx^0)$.
In expectation, this becomes $-\nicefrac{\alpha}{n} \nabla F_i(\xx^0)$.

Next, consider what \asyncsgd can do in the same time, and let us assume, for simplicity, that worker $i$ computes $K_i$ stochastic gradients at points $\yy_0^0, \dots, \yy_{K_i-1}^0$ in the time needed for one \naiveminibatch update.
If each step of worker $i$ has stepsize $\gamma_i$, then the contribution of worker $i$ over this time is $-\sum_{k=0}^{K_i-1} \gamma_i \nabla f_i(\yy_k^0)$.
Provided that the $K_i$ points $\yy_0^0, \dots, \yy_{K_i-1}^0$ are close to $\xx^0$ and the evaluated gradients do not change abruptly, this contribution is approximately equal to $-K_i \gamma_i \nabla F_i(\xx^0)$ in expectation.
Comparison with the contribution made by worker $i$ in \naiveminibatch suggests choosing a worker-specific stepsize $\gamma_i = \nicefrac{\alpha}{n K_i}$ to approximate the ideal gradient step $-\alpha \nabla F(\xx^0)$ through the sum of all workers' contributions.

The sequence of stochastic gradient steps thus computed by \asyncsgd is, in general, not an unbiased estimator of the true gradient step $-\alpha \nabla F(\xx^0)$.
However, our analysis shows that this bias can be controlled efficiently.
Moreover, by letting faster workers take more steps we achieve greater variance reduction and faster convergence.


\paragraph{Contributions and Insights} 
\label{par:contributions}

Our primary conceptual insight is a shift in perspective: 
asynchronous methods do not need to approximate the true global gradient direction in every single iteration. 
Instead, we show that by approximating the global gradient \textit{step}---split across workers and over time---we can establish convergence to an $\varepsilon$-stationary point.

We show that rescaling worker-specific stepsizes (\rasgd) neutralizes objective inconsistency under data and system heterogeneity. 
In the fixed-computation model, \rasgd achieves a near-optimal wall-clock time complexity to reach an $\varepsilon$-stationary point, matching known lower bounds in the leading term (cf. \Cref{table:complexities}). 
As an additional result of our analysis, we show that \vasgd, which employs the same stepsizes for all workers, targets a frequency-weighted average of the local objectives.

\rasgd maintains the standard ASGD template (\Cref{algo:asgd}) without introducing memory overhead, gathering phases, or worker idleness. 
Through proof-of-concept experiments on heterogeneous data (cf. \Cref{sec:experiments}), we confirm that \rasgd accurately targets the global objective and remains competitive with memory- and synchronization-heavy baselines.


\section{Problem Setup} 
\label{sec:problem_setup}

Let $\xi_{k}$ denote the random sample drawn by worker $i_k$ for the stochastic gradient delivered in iteration $k$. 
As is standard, we assume the collection of samples $\{\xi_{k}\}_{k \in \mathbb{N}}$ to be mutually independent.
Moreover, we make the following assumptions about the stochastic gradients:

\begin{assumption}[Unbiased Gradients]
\label{assumption:unbiasedness}
    The stochastic gradients are unbiased, that is
    \begin{equation*}
      \ExpSub{\xi \sim \cD_i}{\nabla f_i(\xx, \xi)} = \nabla F_i(\xx) , \quad \forall \xx \in \R^d, \quad \forall i \in \nbr{1, \dots, n} .
    \end{equation*}
\end{assumption}

\begin{assumption}[Bounded Variance]
\label{assumption:bounded-gradient-variance}
    The stochastic gradients have bounded variance $\sigma^2 \ge 0$, that is,
    \begin{equation*}
    \textstyleifpreprint
      \mathbb{E}_{\xi \sim \cD_i}{\norm{\nabla f_i(\xx, \xi) - \nabla F_i(\xx)}^2} \le \sigma^2 , \quad \forall \xx \in \R^d, \quad \forall i \in \nbr{1, \dots, n} .
    \end{equation*}
\end{assumption}

Recall the standard definition of smoothness.

\begin{definition}[Smoothness]
  A differentiable function $F: \R^d \to \R$ is called $L_F$-smooth if
  \begin{equation*}
    \norm{\nabla F(\xx) - \nabla F(\yy)} \le L_F \norm{\xx-\yy} , \quad \forall \xx, \yy \in \R^d .
  \end{equation*}

  By convention, $L_F$ denotes the smallest such constant.
\end{definition}

For our analysis, we require the functions involved to be smooth.

\begin{assumption}[Global Objective Function]
\label{assumption:global-function-nc}
    The global objective function $F$ is differentiable and $L$-smooth,
    and bounded below by $F^* \coloneqq \inf_{\xx \in \R^d} F(\xx) > - \infty$.
\end{assumption}

We denote the initial optimality gap by $\Delta \coloneqq F(\xx^0) - F^*$.

\begin{assumption}[Local Objective Functions]
\label{assumption:local-f-smooth}
    Each local objective function $F_i$ is differentiable and $L_i$-smooth.
\end{assumption}

We denote by $L_{\max} \coloneqq \max_i L_i$ the maximum of these local smoothness constants.

\begin{assumption}[Bounded Heterogeneity]
\label{assumption:bounded-heterogeneity}
    There exist constants $\zeta, \rho \ge 0$ such that
    \begin{equation*}
      \norm{\nabla F_i(\xx)}^2 \le \zeta^2 + \rho^2 \norm{\nabla F(\xx)}^2 , \quad \forall \xx \in \R^d , \quad \forall i \in \nbr{1, \dots, n} .
    \end{equation*}
\end{assumption}

\Cref{assumption:unbiasedness,assumption:bounded-gradient-variance,assumption:global-function-nc} are standard in first-order methods \citep{ghadimi_stochastic_2013, bubeck_convex_2015, bottou_optimization_2018}.

Limiting function heterogeneity as in \Cref{assumption:bounded-heterogeneity} is necessary to establish convergence for \asyncsgd and similar  methods that take steps based on gradients from a single worker.
While early asynchronous literature often relied on the overly restrictive assumption of bounded gradients ($\rho = 0$) \citep{bertsekas_incremental_2011,recht_hogwild_2011}, recent analyses \citep{mishchenko_asynchronous_2022,koloskova_sharper_2022} relax this but still constrain the gradient scaling. 
Our assumption is even weaker, allowing us to encompass a significantly broader class of functions.
In \Cref{sec:on_the_bounded_data_heterogeneity_assumption}, we offer a deeper discussion on these bounded heterogeneity assumptions.



\section{\asyncsgdtitle~with Cyclic Update Schedule and \rasgdtitle} 
\label{sec:asyncsgd_with_a_cyclic_update_schedule}

Following \citet{mishchenko_asynchronous_2022,tyurin_optimal_2023}, we assume worker $i$ to take a fixed amount of time, $\tau_i$, to compute one stochastic gradient.
These times may differ across workers, but do not vary over time for a given worker.
To facilitate the theoretical analysis and derive the convergence guarantees in \Cref{sec:theory}, we must ensure that workers deliver a fixed number of updates over a given time horizon.
To that end, we make the following assumption:

\begin{assumption}[Harmonic Periods]
\label{assumption:computation-times}
  The set of computation times $\{\tau_1, \dots, \tau_n\}$ satisfies $\nicefrac{\tau_i}{\tau_j} \in \mathbb{N}$ or $\nicefrac{\tau_j}{\tau_i} \in \mathbb{N}$ for all $i,j \in \{1, \dots, n\}$.
\end{assumption}

To the best of our knowledge, this structural assumption is a novelty in the optimization literature and merits some discussion.
First, while it imposes structure on the computation times, it preserves the primary characteristic of interest: worker heterogeneity. 
Second, unlike the functional assumptions introduced in \Cref{sec:problem_setup}, this constraint can be strictly enforced in practice. 
Real-time systems scheduling often uses harmonic task sets to maximize resource utilization in exactly this manner \citep{han1997better,sudvarg2024elastic}. 
For example, a small delay can be added to round the computation times to the next-largest power of two, slowing down any given worker by a factor of two at worst.
Third, this structure renders the theoretical analysis tractable and allows us to derive the interpretable bounds on the wall-clock time complexity presented in \Cref{sec:theory}.

The performance of \rasgd remains robust in practice even when this assumption is relaxed.
In \Cref{sec:experiments}, we demonstrate that our method remains robust in settings with non-harmonic periods and stochastically fluctuating computation times.
This suggests that while our fixed-computation model imposes strict theoretical structure, the conceptual insights generalize to much broader, unstructured settings.


\paragraph{Cyclic Update Schedule} 
\label{par:cyclic_update_schedule}

Under \Cref{assumption:computation-times}, \asyncsgd as presented in \Cref{algo:asgd} becomes more structured.
Assuming, for ease of exposition, that the server processes updates delivered by multiple workers at the same time in a fixed order, the sequence of worker indices $i_k$ is fully deterministic and follows a cyclic update schedule.
After $\tau_{\max}$ time units, each worker will have sent $K_i \coloneqq \frac{\tau_{\max}}{\tau_i}$ updates in a fixed order that will be repeated exactly over the next $\tau_{\max}$ time units.

We refer to one such pass as a \emph{cycle}, to $\tau_{\max}$ as the (wall-clock) cycle duration, and define $K \coloneqq \sum_{i=1}^n K_i$ as the total number of updates received from all workers over the course of one cycle.
We may now reformulate \Cref{algo:asgd} in this more structured form as shown in \Cref{algo:asgd-iv}.
Here, $m$ denotes the cycle and $k$ the iteration within a cycle.

\begin{algorithm}[htbp]
\caption{Asynchronous SGD with Cyclic Update Schedule}
\label{algo:asgd-iv}
\begin{algorithmic}[1]

\STATE \textbf{Input:} initial point $\xx^0\in \R^{d}$, stepsizes $\gamma_k^m > 0$

\STATE Set $\xx_0^0 = \xx^0$
\algcomment{$\xx_k^m = $ server model before update $k$ in cycle $m$.}%
\STATE Set $\yy_{0,i}^0 = \xx^0 , \quad \forall i$
\algcomment{$\yy_{k,i}^m = $ worker $i$'s model before update $k$ in cycle $m$.}%

\STATE Each worker $i$ begins calculation of $\nabla f_i(\yy_{0,i}^0)$

\FOR{$m = 0, 1, 2, \dots$}
    \FOR{$k = 0, \dots, K - 1$}
        \STATE Worker $i_k$ delivers stochastic gradient $\nabla f_{i_k}(\yy_{k,i_k}^m)$
        \STATE Server updates $\xx_{k+1}^m = \xx_k^m - \gamma_k^m \nabla f_{i_k}(\yy_{k,i_k}^m)$
        \algcomment{Model update with stale gradient.}%

        \STATE Update worker model:
        \newlength{\lhswidthcycle}
        \settowidth{\lhswidthcycle}{$\yy_{k+1,i_k}^m$}

        \item[] \quad $\makebox[\lhswidthcycle][l]{$\yy_{k+1,i_k}^m$} = \xx_{k+1}^m$
        \algcomment{Update model held by $i_k$.}%

        \item[] \quad $\makebox[\lhswidthcycle][l]{$\yy_{k+1,j}^m$} = \yy_{k,j}^m , \quad \forall j \neq i_k$ 
        \algcomment{Models held by other workers unchanged.}%

        \STATE Worker $i_k$ begins calculating $\nabla f_{i_k}(\xx_{k+1}^m)$
    \ENDFOR

    \STATE Set $\yy_{0,i}^{m+1} = \yy_{K,i}^m, \quad \forall i$
    \algcomment{Last point seen in current cycle.}%

    \STATE Set $\xx^{m+1} = \xx_K^m$
    \algcomment{End-of-cycle update.}%
    \STATE Set $\xx_0^{m+1} = \xx_K^m$
    \algcomment{First point of new cycle.}%
\ENDFOR

\end{algorithmic}
\end{algorithm}

\Cref{algo:asgd-iv} initializes a shared point $\xx^0$, after which all workers proceed asynchronously without any synchronization or idle periods. 
We refer to $\nbr{\xx^m}_{m \in \N}$ as the cycle iterates and to $\nbr{\xx_k^m}_{k =1, \dots, K}$ as the inner iterates within cycle $m$. 
Note that each cycle comprises $K = \sum_{i=1}^n K_i = \sum_{i=1}^n \nicefrac{\tau_{\max}}{\tau_i} = \frac{n \tau_{\max}}{\tau_H}$ updates, where $\tau_H$ is the harmonic mean of the computation times.

Let $\yy_{k,i}^m$ be the local model held by worker $i$ at the start of inner iteration $k$ in cycle $m$. 
This model corresponds to a past server iterate $\xx_{k'}^{m'}$, originating either from the previous cycle ($m'=m-1$, $k' \ge k$) or the current one ($m'=m$, $k'<k$). 
As a result, the gradient staleness, or delays---that is, the number of server updates between a worker reading the model and applying its gradient---is bounded by $K$.
Bounded staleness of this form is a well-established setting in the asynchronous optimization literature \citep{agarwal_distributed_2011,recht_hogwild_2011,lian_asynchronous_2015}.

By definition, cycle iterates $\xx^m$ capture the server state at the start of cycle $m$.
Thus, each worker $i$ contributes exactly $K_i$ updates between $\xx^m$ and $\xx^{m+1}$.
This additional structure allows us to analyze the sequence of cycle iterates.


\paragraph{Rescaled ASGD} 
\label{par:rescaled_asgd}

We aim to find a stationary point of the objective function \eqref{eq:objective-intro}. 
Our proposed method sets the stepsizes in \Cref{algo:asgd-iv} to $\gamma_k^m = \gamma_{i_k} \propto \tau_{i_k}$. 
By doing so, even though the number of gradients delivered varies across workers, the aggregate stepsizes taken along each worker's descent direction over a cycle, $\gamma_i K_i = \gamma_i \cdot \nicefrac{\tau_{\max}}{\tau_i}$, are equal.
The accumulated gradient update taken by \rasgd over the course of a cycle $m$ can then be decomposed (cf. \Cref{lemma:cycle-step-decomposition}) into the exact scaled global gradient $\alpha \nabla F(\xx^m)$, a bias term $\mathbf{b}_m$, and a noise term $\boldsymbol{\nu}_m$:
  \begin{equation*}
  \textstyleifpreprint \sum_{k=0}^{K-1} \gamma_{i_k} \nabla f_{i_k}(\yy_{k,i_k}^m) = \alpha \nabla F(\xx^m) + \mathbf{b}_m + \boldsymbol{\nu}_m ,
  \end{equation*}
where $\alpha \coloneqq \sum_{k=0}^{K-1} \gamma_{i_k}$ is the cycle stepsize.

The bias term $\mathbf{b}_m$, unlike the noise term $\boldsymbol{\nu}_m$, does not vanish in expectation.
As we will see next, however, the impact of this bias can be efficiently controlled by scaling down the stepsizes.



\section{Convergence Guarantees} 
\label{sec:theory}

Our main result establishes that \rasgd with properly chosen, worker-specific stepsizes targets the equal-weighted average $F$ defined in \eqref{eq:objective-intro}.
For a chosen stepsize parameter $\gamma > 0$, we set the worker-specific stepsizes as
\begin{equation}
  \label{eq:learning-rates-equal}\textstyleifpreprint
  \gamma_i \coloneqq \gamma \cdot \tau_i \cdot \frac{\tau_H}{n \tau_{\max}} ,
\end{equation}
where the last factor serves as a normalizing constant. 
Under this rescaling, we obtain the following convergence guarantee, the proof of which, as well as those of all other results in this section, is deferred to \Cref{sec:proofs}.

\begin{restatable}[Convergence to the Equal-Weighted Average]{theorem}{RASGDIterationBound}
  \label{thm:r-asgd-bound}
    Let the worker-specific stepsizes be chosen according to \eqref{eq:learning-rates-equal} and suppose \Cref{assumption:computation-times,assumption:unbiasedness,assumption:bounded-gradient-variance,assumption:global-function-nc,assumption:local-f-smooth,assumption:bounded-heterogeneity} hold.
    If the stepsize parameter satisfies $0 < \gamma \le \min\nbr{\frac{1}{6 L \tau_H}, \frac{1}{5 L_{\max} \rho \tau_{\max}}}$,
    then, after $M \ge 1$ cycles, the cycle iterates $\xx^m$ generated by \Cref{algo:asgd-iv} satisfy
    \begin{equation}
      \label{eq:rasgd-norm-bound}
        \frac{1}{M} \sum_{m=0}^{M-1} \Exp{\norm{\nabla F(\xx^m)}^2} \le c_0 \cdot \frac{\Delta}{\gamma \tau_H M} + c_1 \cdot \frac{\gamma \tau_A L \sigma^2}{K} + c_2 \cdot \gamma^2 \tau_A^2 L_{\max}^2 \rbr{\sigma^2 + \zeta^2} 
    \end{equation}
    for some absolute constants $c_0, c_1, c_2 > 0$.
\end{restatable}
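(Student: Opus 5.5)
We analyze the cycle iterates $\xx^m$ as a perturbed gradient method on $F$ with cycle stepsize $\alpha = \sum_k \gamma_{i_k} = \sum_i K_i\gamma_i = \gamma\tau_H$, since $K_i\gamma_i = \gamma\tau_H/n$ for every $i$.

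\textbf{Decomposition.} Write the accumulated cycle update as in \Cref{lemma:cycle-step-decomposition}:
\[
\xx^{m+1}=\xx^m-\alpha\nabla F(\xx^m)-\mathbf{b}_m-\boldsymbol{\nu}_m ,
\]
with noise $\boldsymbol{\nu}_m=\sum_k\gamma_{i_k}\bigl(\nabla f_{i_k}(\yy^m_{k,i_k};\xi)-\nabla F_{i_k}(\yy^m_{k,i_k})\bigr)$ and bias $\mathbf{b}_m=\sum_k\gamma_{i_k}\bigl(\nabla F_{i_k}(\yy^m_{k,i_k})-\nabla F_{i_k}(\xx^m)\bigr)$. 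The bias has exactly this form because $\sum_k\gamma_{i_k}\nabla F_{i_k}(\xx^m)=\frac{\gamma\tau_H}{n}\sum_i\nabla F_i(\xx^m)=\alpha\nabla F(\xx^m)$.

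\textbf{Descent inequality.} Apply $L$-smoothness of $F$ to get
\[
F(\xx^{m+1})\le F(\xx^m)-\alpha\langle\nabla F(\xx^m),\nabla F(\xx^m)+\mathbf{b}_m+\boldsymbol{\nu}_m\rangle+\tfrac L2\norm{\alpha\nabla F(\xx^m)+\mathbf{b}_m+\boldsymbol{\nu}_m}^2 .
\]
\begin{itemize}
\item \emph{Cross term with $\boldsymbol{\nu}_m$.} Its summands form a martingale difference sequence, but $\yy_{k,i}$ may lie in the current cycle. I will handle this by conditioning term by term. Alternatively, I will pair $\boldsymbol{\nu}_m$ with $\nabla F(\xx^m)$: this is mean zero because $\xx^m$ is measurable before the cycle, while workers reading earlier points have means measurable before their sample.
\item \emph{Bias cross term.} Use Young's inequality: $-\alpha\langle\nabla F,\mathbf{b}\rangle\le\frac\alpha4\norm{\nabla F}^2+\alpha\norm{\mathbf{b}}^2/\alpha^2\cdot\alpha$, i.e. control $\norm{\mathbf{b}_m}^2/\alpha$.
\item \emph{Quadratic term.} It gives $3L\alpha^2\norm{\nabla F}^2+3L\norm{\mathbf{b}}^2+3L\Exp\norm{\boldsymbol{\nu}}^2$. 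By independence, $\Exp\norm{\boldsymbol{\nu}_m}^2\le\sigma^2\sum_k\gamma_{i_k}^2=\sigma^2\frac{\gamma^2\tau_H^2}{n^2}\sum_i\frac1{K_i}$. Now $\sum_i 1/K_i=n\tau_A/\tau_{\max}$ and $K=n\tau_{\max}/\tau_H$, which gives $\gamma^2\tau_H\tau_A\sigma^2/K$. Dividing by $\alpha$ produces the $c_1$ term.
\end{itemize}
The condition $\gamma\le 1/(6L\tau_H)$ absorbs $L\alpha^2\norm{\nabla F}^2$.

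\textbf{Bias bound (main obstacle).} By $L_i$-smoothness, $\norm{\mathbf{b}_m}^2\le\alpha\sum_k\gamma_{i_k}L_{\max}^2\norm{\yy^m_{k,i_k}-\xx^m}^2$. Each $\yy$ is some inner iterate $\xx^{m'}_{k'}$ from this cycle or the previous one. Its distance to $\xx^m$ is a partial sum of at most one cycle of steps, hence bounded by
\[
2\Bigl(\textstyle\sum\gamma\Bigr)^2\max\norm{\nabla F_i}^2+\text{noise}.
\]
For worker $i$, however, the relevant window is only its own period $\tau_i$, so the total stepsize within it is at most about $\gamma\tau_i$ (all workers contribute $\gamma\tau_H/n$ per $\tau_{\max}$). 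This is where $\tau_A$, rather than $\tau_{\max}$, will enter after weighting by $\gamma_i$.

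Using \Cref{assumption:bounded-heterogeneity}, the drift is bounded by
\[
\gamma^2\tau_i^2\bigl(\zeta^2+\sigma^2+\rho^2\max_{\text{window}}\norm{\nabla F(\xx^{\cdot}_{\cdot})}^2\bigr).
\]
The gradients at inner iterates must be related back to $\norm{\nabla F(\xx^m)}^2$ and $\norm{\nabla F(\xx^{m-1})}^2$. I plan to prove an auxiliary inner-iterate drift lemma by induction within the cycle, i.e. a Gronwall-type estimate with $\gamma\le1/(5L_{\max}\rho\tau_{\max})$ keeping the growth factor bounded. This yields
\[
\Exp\norm{\mathbf{b}_m}^2/\alpha\lesssim\alpha\gamma^2\tau_A^2L_{\max}^2(\sigma^2+\zeta^2)+\tfrac{\alpha}{8}\bigl(\norm{\nabla F(\xx^m)}^2+\norm{\nabla F(\xx^{m-1})}^2\bigr).
\]
The previous-cycle gradient terms are then absorbed by telescoping over $m$, with $\yy^0=\xx^0$ handling the boundary. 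Summing over $m=0,\dots,M-1$ and dividing by $\alpha M/4$ gives \eqref{eq:rasgd-norm-bound} with $\Delta/(\gamma\tau_H M)$ as the leading term. The delicate part is making the $\rho$-dependent gradient feedback in the drift close using only the stated stepsize constraint.
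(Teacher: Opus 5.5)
Your overall architecture matches the paper's: the same cycle-step decomposition, the descent lemma on the cycle iterates with $\alpha=\gamma\tau_H\le 1/(6L)$, and the noise bound with $A=\sum_k\gamma_{i_k}^2=\gamma^2\tau_H\tau_A/K$. Two small corrections there. The noise bound holds by martingale-difference orthogonality, not independence. The extra factor $\alpha$ in front of $\mathbf{b}_m+\boldsymbol{\nu}_m$ in your descent inequality should be dropped, since these terms already carry the stepsizes.

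The genuine gap is in the bias step. You claim that for worker $i$ ``the relevant window is only its own period $\tau_i$'', which would make the drift of order $\gamma^2\tau_i^2(\cdots)$. This is false. $\mathbf{b}_m$ compares $\nabla F_{i_k}$ at the read point $\yy_{k,i_k}^m$ with $\nabla F_{i_k}$ at the cycle start $\xx^m$, not at the delivery point. So $\yy_{k,i_k}^m-\xx^m$ is the sum of all updates between $\xx^m$ and the read, which can span almost a full cycle regardless of $\tau_{i_k}$. For example, take $\tau_1=1$, $\tau_2=\dots=\tau_{n-1}=T/2$, $\tau_n=T$ with $T$ large (so $\tau_H\approx n$), identical $F_i=F$ and $\sigma=0$ (so $\zeta=0$, $\rho=1$). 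Every read of worker $1$ after mid-cycle already contains $n-2$ steps of size $\gamma\tau_H/(2n)$. Hence $\norm{\yy-\xx^m}^2\gtrsim\gamma^2(n-2)^2\norm{\nabla F(\xx^m)}^2/4$, whereas your bound gives only $\gamma^2\tau_1^2\rho^2\norm{\nabla F(\xx^m)}^2=\gamma^2\norm{\nabla F(\xx^m)}^2$.

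The refinement is also unnecessary. The coarse one-cycle bound you wrote first already suffices, because the stepsizes over any $K$ consecutive updates sum to exactly $\alpha=\gamma\tau_H$ (not $\gamma\tau_{\max}$), and $\tau_H\le\tau_A$. Here is how to make it rigorous.

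First, use the $\gamma$-weighted Jensen inequality twice:
$\norm{\mathbf{b}_m}^2\le\alpha L_{\max}^2\sum_k\gamma_{i_k}\norm{\yy_{k,i_k}^m-\xx^m}^2$ and $\norm{\yy_{k,i_k}^m-\xx^m}^2\le\alpha\sum_{(l,c)\in\cI_k^m}\gamma_{i_l}\norm{\nabla f_{i_l}(\yy_{l,i_l}^c)}^2$.

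Second, swap the sums using the paper's counting fact: the destinations whose paths contain a fixed update lie within at most $K$ consecutive updates, so their weights sum to at most $\alpha$. This gives $\sum_{m<M}\Exp{\norm{\mathbf{b}_m}^2}\le\alpha^3L_{\max}^2P_M$, where $P_M=\sum_{c<M}\sum_l\gamma_{i_l}\Exp{\norm{\nabla f_{i_l}(\yy_{l,i_l}^c)}^2}$.

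Third, bound $P_M$ as in the paper, via variance, heterogeneity, and $L$-smoothness of $F$ around $\xx^c$. This reintroduces the drift as a term at most $2\rho^2L^2\alpha^2P_M$. Close this feedback by absorbing $P_M$ globally over all cycles, which is exactly the paper's $Q_M$ device. A within-cycle Gronwall induction is the wrong tool here: paths reach into the previous cycle, and it would involve expectations of maxima over windows.

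Since $\alpha\le\gamma\tau_{\max}\le 1/(5L_{\max}\rho)$, both absorptions go through. The last term comes out as $10\gamma^2\tau_H^2L_{\max}^2(\sigma^2+\zeta^2)\le 10\gamma^2\tau_A^2L_{\max}^2(\sigma^2+\zeta^2)$, which implies the theorem. In the paper, $\tau_A$ enters only because it uses unweighted Cauchy--Schwarz ($A\sum_k$ together with $\abs{\cI_k^m}\le K$), which produces $(AK/\alpha)^2=\gamma^2\tau_A^2$. No per-worker window is involved, and your weighted route, once repaired, is slightly sharper.
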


Note that the bound on the expected gradient norm is established for the cycle iterates $\xx^m$ only.
Over the course of a cycle, the inner iterates are pulled toward the workers' local objectives, whose stationary points do not generally coincide with those of the global objective.
The presence of $K$, the total number of updates per cycle, in the second term reflects this cycle-level perspective.

The last term in \eqref{eq:rasgd-norm-bound} reflects the effect of the cycle bias and scales with $\gamma^2$.
By scaling down the stepsize parameter $\gamma$, we can shrink this term and achieve $\varepsilon$-stationarity for arbitrarily small $\varepsilon > 0$.
The following corollary provides a bound on the worst-case time complexity to reach such an $\varepsilon$-stationary point.

\begin{restatable}[Time Complexity for the Equal-Weighted Average]{corollary}{RASGDTimeComplexity}
  \label{corollary:r-asgd-time-complexity}
    In the setting of \Cref{thm:r-asgd-bound}, the worst-case wall-clock time complexity to find an $\varepsilon$-stationary point of $F$ is 
    \begin{equation}
      \mathcal{O}\rbr{\frac{\Delta L \sigma^2}{n \varepsilon^2} \tau_A + \frac{\Delta L_{\max} \sqrt{\sigma^2 + \zeta^2}}{\varepsilon^{1.5}} \frac{\tau_{\max}}{\tau_H} \tau_A + \frac{\Delta L_{\max} \rho}{\varepsilon} \frac{\tau_{\max}}{\tau_H} \tau_{\max} + \frac{\Delta L}{\varepsilon} \tau_{\max}} .
    \end{equation}
\end{restatable}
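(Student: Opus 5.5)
The plan is to derive the corollary directly from the bound \eqref{eq:rasgd-norm-bound} of \Cref{thm:r-asgd-bound}. First, the number of cycles $M$ needed to reach $\varepsilon$-stationarity for a well-chosen $\gamma$. Then, conversion to wall-clock time, using that each cycle lasts exactly $\tau_{\max}$ time units, so the total time is $T = M\tau_{\max}$. The output point $\bar\xx$ is a cycle iterate $\xx^m$ sampled uniformly from $m\in\{0,\dots,M-1\}$, so its expected squared gradient norm equals the left-hand side of \eqref{eq:rasgd-norm-bound}.

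\textbf{Choosing $\gamma$.} I will require each of the last two terms of \eqref{eq:rasgd-norm-bound} to be at most $\varepsilon/3$.
\begin{itemize}
\item[] \emph{Variance term:} $c_1\gamma\tau_A L\sigma^2/K \le \varepsilon/3$ holds when $\gamma \lesssim K\varepsilon/(\tau_A L\sigma^2)$.
\item[] \emph{Bias term:} $c_2\gamma^2\tau_A^2L_{\max}^2(\sigma^2+\zeta^2)\le\varepsilon/3$ holds when $\gamma\lesssim \sqrt{\varepsilon}/(\tau_A L_{\max}\sqrt{\sigma^2+\zeta^2})$.
\item[] \emph{Admissibility:} $\gamma\le \min\{1/(6L\tau_H),\,1/(5L_{\max}\rho\tau_{\max})\}$, as required by \Cref{thm:r-asgd-bound}.
\end{itemize}
Let $\gamma$ be the minimum of these four quantities, up to absolute constants.

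\textbf{Bounding $M$ and $T$.} With this $\gamma$, it suffices to take $M \gtrsim \Delta/(\gamma\tau_H\varepsilon)$ to make the first term at most $\varepsilon/3$. Since $1/\gamma$ is the maximum of the four reciprocals, $M$ splits into a sum of four contributions. Multiplying each by $\tau_{\max}$, using $1/\min\le\sum 1/\cdot$:
\begin{itemize}
\item[] \emph{Variance term:} $\frac{\Delta}{\tau_H\varepsilon}\cdot\frac{\tau_A L\sigma^2}{K\varepsilon}\cdot\tau_{\max}$. Substituting $K=n\tau_{\max}/\tau_H$ gives $\frac{\Delta L\sigma^2}{n\varepsilon^2}\tau_A$, the leading term.
\item[] \emph{Bias term:} $\frac{\Delta}{\tau_H\varepsilon}\cdot\frac{\tau_A L_{\max}\sqrt{\sigma^2+\zeta^2}}{\sqrt\varepsilon}\cdot\tau_{\max}$, which is the second term.
\item[] \emph{First admissibility constraint:} $\frac{\Delta}{\tau_H\varepsilon}\cdot 6L\tau_H\cdot\tau_{\max}=\mathcal O\big(\frac{\Delta L}{\varepsilon}\tau_{\max}\big)$.
\item[] \emph{Second admissibility constraint:} $\frac{\Delta}{\tau_H\varepsilon}\cdot 5L_{\max}\rho\tau_{\max}\cdot\tau_{\max}$, which is the third term.
\end{itemize}
Summing the four contributions yields the stated complexity.

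\textbf{Rounding $M$.} The theorem needs an integer $M\ge1$. Taking a ceiling adds at most one cycle, i.e.\ at most $\tau_{\max}$ extra time. This is absorbed into $\frac{\Delta L}{\varepsilon}\tau_{\max}$ whenever $\varepsilon\lesssim \Delta L$. In the trivial regime $\varepsilon \gtrsim \Delta L$, the standard estimate $\|\nabla F(\xx^0)\|^2\le 2L\Delta$ for $L$-smooth functions bounded below shows that $\xx^0$ is already $\mathcal O(\varepsilon)$-stationary.

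\textbf{Main obstacle.} There is no real obstacle, since all the difficulty lies in \Cref{thm:r-asgd-bound}. The only care needed is the bookkeeping in the variance term: the factor $1/K$ from the theorem combines with $\tau_{\max}/\tau_H$ through $K = n\tau_{\max}/\tau_H$. This substitution is what produces $\tau_A$ rather than $\tau_{\max}$ in the leading term.
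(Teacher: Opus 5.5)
Your proposal is correct and follows the paper's proof essentially step for step. You pick $\gamma$ as the minimum of the two accuracy constraints and the two admissibility constraints, solve for $M$, bound $1/\min$ by the sum of reciprocals, and multiply by $\tau_{\max}$ using $K = n\tau_{\max}/\tau_H$. Your handling of the integer constraint $M \ge 1$, via the bound $\|\nabla F(\xx^0)\|^2 \le 2L\Delta$, and your explicit choice of output point are small details the paper leaves implicit.
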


This time complexity is notable for several reasons. 
First, its first and last terms, $\frac{\Delta L \sigma^2}{n \varepsilon^2} \tau_A$ and $\frac{\Delta L}{\varepsilon} \tau_{\max}$, match the theoretical lower bound established by \citet{tyurin_optimal_2023}.
The middle terms reflect penalties incurred from the combination of data heterogeneity and staleness.

For small $\varepsilon$, the leading term dominates the complexity bound.
This scales efficiently with the arithmetic mean of computation times, $\tau_A$, rather than their maximum, $\tau_{\max}$.
The local function parameters ($L_{\max}$) and data heterogeneity measures ($\zeta, \rho$) affect only lower-order terms, showing that the slowdown caused by function heterogeneity and gradient staleness is not of first-order concern.
The $\nicefrac{\tau_{\max}}{\tau_H}$ multiplier in these terms represents the average number of gradients delivered per worker during a cycle and captures the impact of staleness.


\paragraph{Objective Inconsistency under Equal Stepsizes} 
\label{par:vanilla_sgd_targets_a_rate_weighted_average}

Our analysis allows us to precisely quantify the objective inconsistency for other stepsize choices (see \Cref{thm:convergence-bound-general} for a more general result).
In \vasgd, all workers use the same stepsize $\gamma_i = \nicefrac{\gamma}{K}$, so that $\nicefrac{\gamma_i}{\tau_i} \propto \nicefrac{1}{\tau_i}$.
Consequently, \Cref{algo:asgd-iv} now targets a frequency-weighted average of the local functions,
\begin{equation}
  \label{eq:f-tilde}
  \tilde{F}(\xx) \coloneqq \sum_{i=1}^n \tilde{w}_i F_i(\xx) , \quad \tilde{w}_i \coloneqq \frac{\tau_i^{-1}}{\sum_{j=1}^n \tau_j^{-1}} \propto \tau_i^{-1} .
\end{equation}
Our next theorem makes this precise.

\begin{restatable}[Convergence to the Frequency-Weighted Average]{theorem}{VASGDIterationBound}
  \label{thm:v-asgd-bound}
    Let the workers' stepsizes be equal, $\gamma_i = \nicefrac{\gamma}{K}$, and 
    suppose \Cref{assumption:computation-times,assumption:unbiasedness,assumption:bounded-gradient-variance,assumption:global-function-nc,assumption:local-f-smooth,assumption:bounded-heterogeneity} hold for $\tilde{F}$.\footnote{That is, we replace $F, \Delta, L, \zeta, \rho$ by $\tilde{F}, \tilde\Delta, \tilde{L}, \tilde\zeta, \tilde\rho$ in \Cref{assumption:global-function-nc,assumption:bounded-heterogeneity}.}
    If the stepsize parameter satisfies $0 < \gamma \le \min\nbr{\frac{1}{6 \tilde{L}}, \frac{1}{5 L_{\max} \tilde{\rho}}}$,
    then, after $M \ge 1$ cycles, the cycle iterates $\xx^m$ generated by \Cref{algo:asgd-iv} satisfy
    \begin{equation*}
        \frac{1}{M} \sum_{m=0}^{M-1} \Exp{\norm{\nabla \tilde{F}(\xx^m)}^2} \le c_0 \cdot \frac{\tilde\Delta}{\gamma M} + c_1 \cdot \frac{\gamma \tilde{L} \sigma^2}{K} + c_2 \cdot \gamma^2 L_{\max}^2 \rbr{\sigma^2 + \tilde{\zeta}^2} 
    \end{equation*}
    for some absolute constants $c_0, c_1, c_2 > 0$.
\end{restatable}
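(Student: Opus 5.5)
The plan is to derive \Cref{thm:v-asgd-bound} from the same cycle-level argument that underlies \Cref{thm:r-asgd-bound}, and not to redo the analysis from scratch. The reference to \Cref{thm:convergence-bound-general} suggests that argument covers arbitrary worker stepsizes $\gamma_i$ and produces a bound for the weighted objective $F_w := \sum_i w_i F_i$. Here the weights are $w_i \propto \gamma_i K_i$, since worker $i$'s aggregate step over a cycle is $\gamma_i K_i$.

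\textbf{Step 1: identify the target objective.} With $\gamma_i = \gamma/K$ the cycle stepsize is $\alpha = \sum_k \gamma_{i_k} = \gamma$. Worker $i$ contributes total weight $K_i \gamma/K$, and
\[
  \frac{K_i}{K} = \frac{\tau_{\max}/\tau_i}{\sum_j \tau_{\max}/\tau_j} = \tilde w_i .
\]
So the cycle decomposition in the spirit of \Cref{lemma:cycle-step-decomposition} reads
\[
  \sum_{k=0}^{K-1}\gamma_{i_k}\nabla f_{i_k}(\yy^m_{k,i_k}) = \gamma \nabla\tilde F(\xx^m) + \mathbf b_m + \boldsymbol\nu_m ,
\]
where
\[
  \mathbf b_m = \frac{\gamma}{K}\sum_k \bigl(\nabla F_{i_k}(\yy^m_{k,i_k}) - \nabla F_{i_k}(\xx^m)\bigr)
\]
and $\boldsymbol\nu_m$ is the martingale-difference noise.

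\textbf{Step 2: one-cycle descent.} Apply the descent lemma for the $\tilde L$-smooth function $\tilde F$ between $\xx^m$ and $\xx^{m+1}$:
\[
  \tilde F(\xx^{m+1}) \le \tilde F(\xx^m) - \gamma\norm{\nabla\tilde F(\xx^m)}^2 - \langle\nabla\tilde F(\xx^m), \mathbf b_m + \boldsymbol\nu_m\rangle + \tfrac{\tilde L}{2}\norm{\gamma\nabla\tilde F(\xx^m) + \mathbf b_m + \boldsymbol\nu_m}^2 .
\]
The noise term needs care because the $\yy$'s depend on earlier samples. Write $\boldsymbol\nu_m$ as a sum of conditionally mean-zero increments; the cross term with $\nabla\tilde F(\xx^m)$ then vanishes in expectation, since $\xx^m$ is measurable at the start of the cycle. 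Its second moment is at most $K(\gamma/K)^2\sigma^2 = \gamma^2\sigma^2/K$, which after dividing by $\gamma$ gives the $\gamma\tilde L\sigma^2/K$ term. Bound the bias cross term by Young's inequality, $\tfrac{\gamma}{2}\norm{\nabla\tilde F}^2 + \tfrac{1}{2\gamma}\norm{\mathbf b_m}^2$.

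\textbf{Step 3: the main obstacle, bounding $\mathbb E\norm{\mathbf b_m}^2$.} By $L_{\max}$-smoothness and Jensen,
\[
  \norm{\mathbf b_m}^2 \le \frac{\gamma^2 L_{\max}^2}{K}\sum_k \norm{\yy^m_{k,i_k} - \xx^m}^2 .
\]
Each $\yy^m_{k,i_k}$ is an inner iterate of the current or previous cycle, so its distance to $\xx^m$ is at most the sum of at most $2K$ steps, each of size $\gamma/K$. Expanding these steps into their mean parts $\nabla F_{i}(\yy)$ and noise parts, and applying \Cref{assumption:bounded-heterogeneity} for $\tilde F$ (namely $\norm{\nabla F_i}^2 \le \tilde\zeta^2 + \tilde\rho^2\norm{\nabla\tilde F}^2$) at the stale points, gives
\[
  \mathbb E\norm{\yy - \xx^m}^2 \lesssim \gamma^2(\sigma^2 + \tilde\zeta^2) + \gamma^2\tilde\rho^2 \max_{j\text{ in cycles } m-1,m} \mathbb E\norm{\nabla\tilde F(\xx_j)}^2 .
\]
The gradient at an inner iterate is then related back to $\nabla\tilde F(\xx^m)$ and $\nabla\tilde F(\xx^{m-1})$ using $\tilde L$-smoothness, at the cost of the same drift quantity. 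This yields a self-referential inequality. I would close it by defining the maximal drift $D_m$ over cycles $m-1$ and $m$ and using $\gamma \le 1/(5L_{\max}\tilde\rho)$ and $\gamma \le 1/(6\tilde L)$ to absorb $D_m$ on the left. The result should be
\[
  D_m \lesssim \gamma^2(\sigma^2 + \tilde\zeta^2) + \gamma^2\tilde\rho^2\bigl(\norm{\nabla\tilde F(\xx^m)}^2 + \norm{\nabla\tilde F(\xx^{m-1})}^2\bigr).
\]
The term $\tfrac{1}{\gamma}\norm{\mathbf b_m}^2 \lesssim \gamma L_{\max}^2 D_m$ then contributes $\gamma^3 L_{\max}^2(\sigma^2 + \tilde\zeta^2)$ together with a fraction of $\gamma$ times the gradient norms. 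That fraction is at most $1/25$ by the stepsize condition, so it is absorbed by the $-\gamma/2$ descent term, with the $m-1$ index handled by an index shift when summing.

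\textbf{Step 4: telescope.} Sum over $m = 0,\dots,M-1$, using $\xx^{-1} := \xx^0$ with no drift in the first cycle, and divide by $\gamma M$. The bound follows: $\tilde\Delta/(\gamma M)$ from telescoping, $\gamma\tilde L\sigma^2/K$ from the noise, and $\gamma^2 L_{\max}^2(\sigma^2 + \tilde\zeta^2)$ from the bias. This is exactly the form of \Cref{thm:r-asgd-bound} with $\gamma\tau_H \mapsto \gamma$, because $\alpha = \gamma$ here instead of $\gamma\tau_H$.
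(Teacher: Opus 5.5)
Your Steps 1, 2 and 4 follow the paper's route. The paper proves this theorem by plugging $\gamma_i=\gamma/K$ (which is $\propto\tilde w_i\tau_i$) into \Cref{thm:convergence-bound-general}. There $\alpha=\gamma$, $A=\gamma^2/K$ and $\gamma_{\max}=\gamma/K$, so $A/\alpha=\gamma/K$, $(A/\alpha)^2K^2=\gamma^2$, and the two stepsize conditions become exactly the stated ones.

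The gap is in the closure of Step 3. The path from $\yy^m_{k,i_k}$ to $\xx^m$ contains updates of cycle $m-1$, whose gradients were evaluated at the points $\yy^{m-1}_{l,i_l}$. These can be iterates of cycle $m-2$: every worker except the last one processed at a cycle boundary receives such an iterate. Your maximum over cycles $m-1,m$ does not cover them. More importantly, bounding $\nabla\tilde F$ at such a point by $\tilde L$-smoothness costs one of two things:
\begin{itemize}
\item $\norm{\yy^{m-1}_{l,i_l}-\xx^{m-1}}^2$, which is the drift of the \emph{previous} cycle;
\item $\norm{\yy^{m-1}_{l,i_l}-\xx^{m}}^2$, which again runs through updates of cycle $m-2$.
\end{itemize}
Either quantity in turn depends on cycle $m-3$, and so on. The drift recursion therefore reaches back through all earlier cycles, and no fixed window of cycles is closed under it. So you cannot ``absorb $D_m$ on the left.'' The per-cycle inequality you state for $D_m$, with only $\nabla\tilde F(\xx^m)$ and $\nabla\tilde F(\xx^{m-1})$ on the right, does not follow. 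What the argument actually gives is a recursion of the form $D_m\le a_m+c\,(D_m+D_{m-1})$ with small $c$, which must be summed over $m$.

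The paper closes this globally in \Cref{lemma:bias-iv,lemma:bias-term}.
\begin{enumerate}
\item Set $Q_M=\sum_{m<M}\sum_k\gamma_{i_k}^2\Exp{\norm{\nabla f_{i_k}(\yy^m_{k,i_k})}^2}$.
\item Bound each gradient against the start of \emph{its own} cycle: $\norm{\nabla F_{i_k}(\yy^m_{k,i_k})}^2\le\tilde\zeta^2+2\tilde\rho^2\norm{\nabla\tilde F(\xx^m)}^2+2\tilde\rho^2\tilde L^2\norm{\yy^m_{k,i_k}-\xx^m}^2$. The drift cost is then the same quantity that controls $\mathbf b_m$.
\item Use a double-counting argument. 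Each path has at most $K$ updates, and a fixed update lies only on the paths attached to the next $K$ updates, whose squared stepsizes sum to at most $A$. This gives $\sum_{m,k}\gamma_{i_k}^2\Exp{\norm{\yy^m_{k,i_k}-\xx^m}^2}\le AKQ_M$ and $\sum_{m,k}\Exp{\norm{\yy^m_{k,i_k}-\xx^m}^2}\le K^2Q_M$.
\item Now $Q_M$ appears on both sides with coefficient $2AK\tilde L^2\tilde\rho^2=2\gamma^2\tilde L^2\tilde\rho^2\le\tfrac{2}{25}$, using $\tilde L\le L_{\max}$. It is absorbed, and one obtains $\sum_m\Exp{\norm{\mathbf b_m}^2}\le 2\gamma^4L_{\max}^2M(\sigma^2+\tilde\zeta^2)+4\gamma^4L_{\max}^2\tilde\rho^2\sum_m\Exp{\norm{\nabla\tilde F(\xx^m)}^2}$.
\item In the descent inequality this leaves the coefficient $10\gamma^2L_{\max}^2\tilde\rho^2\le\tfrac25$ on the averaged gradient norm, which is absorbed on the left.
\end{enumerate}
Because the thresholds $\tfrac16$ and $\tfrac15$ are fixed, you do need to track these constants. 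In particular, use that paths have length at most $K$, not $2K$.
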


As in \Cref{thm:r-asgd-bound}, the bias term on the right-hand side scales with $\gamma^2$, so that \Cref{algo:asgd-iv} converges to an $\varepsilon$-stationary point of the frequency-weighted average $\tilde{F}$ if the stepsize is sufficiently small.
To better understand the workings of \Cref{algo:asgd-iv} with equal stepsizes, we must consider its wall-clock time complexity.

\begin{restatable}[Time Complexity for the Frequency-Weighted Average]{corollary}{VASGDTimeComplexity}
  \label{corollary:v-asgd-time-complexity}
    In the setting of \Cref{thm:v-asgd-bound}, the worst-case wall-clock time complexity to find an $\varepsilon$-stationary point of $\tilde{F}$ is 
    \begin{equation}
      \mathcal{O}\rbr{\frac{\tilde\Delta \tilde{L} \sigma^2}{n \varepsilon^2} \tau_H + \frac{\tilde\Delta L_{\max} \sqrt{\sigma^2 + \tilde{\zeta}^2}}{\varepsilon^{1.5}} \tau_{\max} + \frac{\tilde\Delta L_{\max} \tilde{\rho}}{\varepsilon} \tau_{\max} + \frac{\tilde\Delta \tilde{L}}{\varepsilon} \tau_{\max}} .
    \end{equation}
\end{restatable}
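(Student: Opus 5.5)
We derive this from \Cref{thm:v-asgd-bound} in the same way \Cref{corollary:r-asgd-time-complexity} follows from \Cref{thm:r-asgd-bound}. We pick $\gamma$ and $M$ so that each term on the right-hand side is at most $\varepsilon/3$. We then multiply the number of cycles $M$ by the wall-clock cycle duration $\tau_{\max}$, and simplify using $K = n\tau_{\max}/\tau_H$. The output point $\bar\xx$ is drawn uniformly from the cycle iterates $\xx^0,\dots,\xx^{M-1}$, so $\Exp{\norm{\nabla\tilde F(\bar\xx)}^2}$ equals the left-hand side of the bound.

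\textbf{Choice of $\gamma$.} Each term imposes its own condition on $\gamma$:
\begin{itemize}
\item the noise term needs $\gamma \lesssim \varepsilon K/(\tilde L\sigma^2)$;
\item the bias term needs $\gamma \lesssim \sqrt{\varepsilon}/(L_{\max}\sqrt{\sigma^2+\tilde\zeta^2})$;
\item the theorem itself requires $\gamma \le \min\{1/(6\tilde L),\, 1/(5L_{\max}\tilde\rho)\}$.
\end{itemize}
We take $\gamma$ to be the minimum of these four quantities, up to absolute constants. The first term then requires $M \gtrsim \tilde\Delta/(\gamma\varepsilon)$. Since $1/\min$ is bounded by a sum, we get
\begin{equation*}
M = \mathcal{O}\rbr{\frac{\tilde\Delta\tilde L\sigma^2}{K\varepsilon^2} + \frac{\tilde\Delta L_{\max}\sqrt{\sigma^2+\tilde\zeta^2}}{\varepsilon^{1.5}} + \frac{\tilde\Delta L_{\max}\tilde\rho}{\varepsilon} + \frac{\tilde\Delta\tilde L}{\varepsilon}} + 1 .
\end{equation*}

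\textbf{Wall-clock time.} Each cycle lasts exactly $\tau_{\max}$ time units under \Cref{assumption:computation-times}. The total time is therefore $M\tau_{\max}$. In the first term, substituting $K = n\tau_{\max}/\tau_H$ gives $\tau_{\max}/K = \tau_H/n$, which yields $\frac{\tilde\Delta\tilde L\sigma^2}{n\varepsilon^2}\tau_H$. The remaining three terms are multiplied by $\tau_{\max}$ unchanged, and they match the stated bound exactly.

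\textbf{Residual details.} We must absorb the additive $+1$ from the ceiling on $M$. This is the only mildly delicate step. A single cycle costs $\tau_{\max}$, which is dominated by $\tilde\Delta\tilde L\tau_{\max}/\varepsilon$ in the regime $\varepsilon \lesssim \tilde\Delta\tilde L$. This is the standard nontrivial regime, since by $\tilde L$-smoothness $\norm{\nabla\tilde F(\xx^0)}^2 \le 2\tilde L\tilde\Delta$. Otherwise, $\xx^0$ is already $\varepsilon$-stationary up to a constant factor. Everything else is routine bookkeeping with constants, so there is no real obstacle.
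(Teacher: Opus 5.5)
Your proposal is correct and follows essentially the same route as the paper: take $\gamma$ as the minimum of the two accuracy-driven constraints and the two stepsize constraints of \Cref{thm:v-asgd-bound}, require $M \gtrsim \tilde\Delta/(\varepsilon\gamma)$, bound $1/\min$ by a sum, and multiply by the cycle duration $\tau_{\max} = K\tau_H/n$. The differences are cosmetic: you split $\varepsilon$ into thirds instead of the paper's $\varepsilon/4,\varepsilon/4,\varepsilon/2$. You also spell out two points the paper leaves implicit, namely how the output point is chosen and why the ceiling on $M$ can be absorbed via $\norm{\nabla\tilde F(\xx^0)}^2 \le 2\tilde L\tilde\Delta$.
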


Note that the leading term of this complexity bound now scales with the harmonic mean $\tau_H$.
This is in contrast to \Cref{corollary:r-asgd-time-complexity}, where the arithmetic mean $\tau_A$ appears instead. 
This captures the price of neutralizing objective inconsistency: 
In \vasgd, fast workers take unscaled steps, driving the global model forward at a higher rate governed by the harmonic mean of the workers' computation speeds. 
With rescaled stepsizes, on the other hand, stepsizes of fast workers are shrunk ($\gamma_i \propto \tau_i$) to prevent them from dominating the optimization trajectory. 
While this ensures convergence to the equal-weighted average, scaling down the updates from faster workers slows the global learning progress, shifting the time complexity bottleneck from the smaller harmonic mean $\tau_H$ to the larger arithmetic mean $\tau_A$.



\section{Experiments} 
\label{sec:experiments}

We compare \rasgd against \malenia and \ringleader, two state-of-the-art methods with optimal or near-optimal theoretical wall-clock time complexities, in a data-heterogeneous setup.
We train a two-layer neural network on MNIST \citep{lecun_mnist_2010}.
To enforce maximal heterogeneity, we partition the data by label \citep{hsu_measuring_2019} across $n=10$ workers, such that each worker holds images from exactly one class.
See \Cref{sec:experimental_details} for setup and methodology details.

\begin{figure}[htbp]
  \centering
  \begin{subfigure}[b]{0.48\textwidth}
    \centering
    \includegraphics[width=\textwidth]{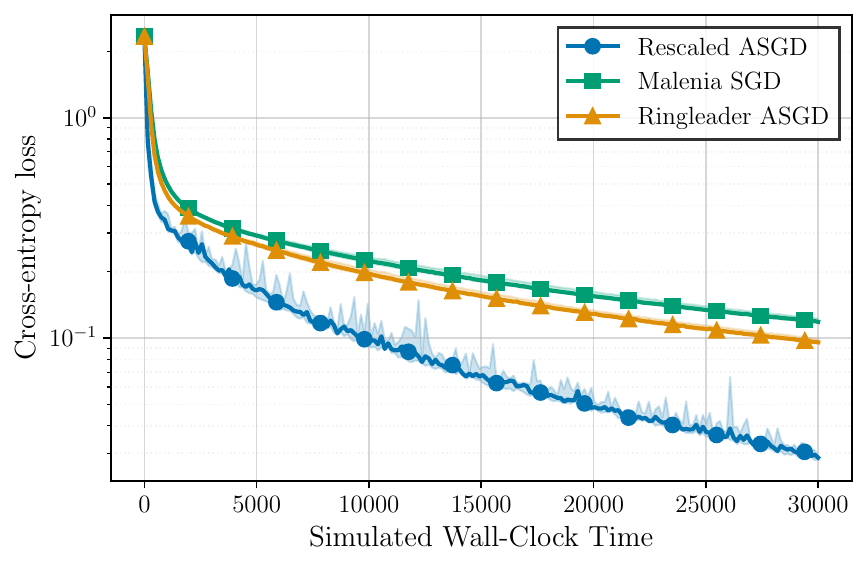}
    \caption{Fixed Times.}
    \label{fig:loss_harmonic}
  \end{subfigure}
  \hfill
  \begin{subfigure}[b]{0.48\textwidth}
    \centering
    \includegraphics[width=\textwidth]{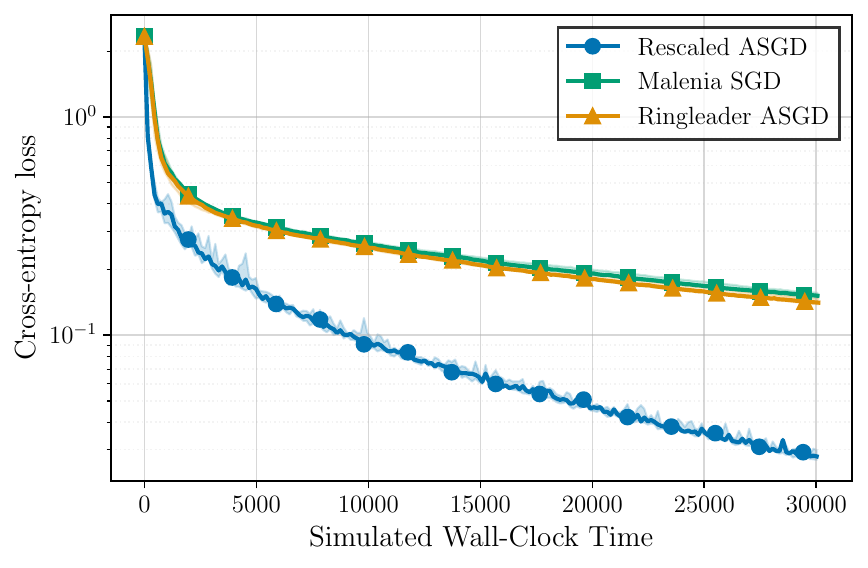}
    \caption{Fluctuating Times.}
    \label{fig:loss_exponential}
  \end{subfigure}
  \caption{Solid lines denote the median loss across five random seeds, with shaded regions indicating the minimum and maximum. \malenia and \ringleader are slowed down under fluctuating computation times as the gradient gathering phase takes longer. \rasgd shows virtually identical performance in both settings.}
  \label{fig:loss}
\end{figure}

We evaluate two computation time settings. 
First, fixed harmonic periods (\Cref{assumption:computation-times}) matching our theoretical setup.
We assign $\tau_i \in \{1,2,4,8,16\}$ to two workers each, making the fastest workers 16 times faster than the slowest.
Second, a fluctuating setting where computation times of worker $i$ are sampled from an exponential distribution with mean $\tau_i$.
While outside our theoretical framework, the underlying mechanism of equalized long-run learning progress remains.

\Cref{fig:loss} shows the loss trajectory. 
Under fluctuating times, \malenia and \ringleader degrade slightly due to stragglers slowing the gathering phase, whereas \rasgd remains unaffected.

To illustrate algorithmic differences, \Cref{fig:dynamics} plots the cumulative stepsizes over time.
For a fair comparison, worker $i$ takes a step of $\nicefrac{\alpha \tau_i}{n \tau_{\max}}$ in \rasgd, ensuring the total stepsize per cycle, taking $\tau_{\max}$ units of time, is $\alpha$.
\ringleader workers take steps of $\nicefrac{\alpha}{n}$ to match this sum, while \malenia requires no rescaling as it takes one step per cycle.

In the fixed setup (a), all methods share an average progress rate of $\alpha$ per $\tau_{\max}$ units of time, with \rasgd and \ringleader yielding smoother trajectories.
Under fluctuating times (b), \malenia and \ringleader are bottlenecked by the slowest worker due to their gathering phases.
Although faster workers compute continuously, the server model stalls until the straggler finishes, making the expected gathering duration greater than $\tau_{\max}$.
\rasgd continuously applies updates, exploring the search space unimpeded.

\begin{figure}[htbp]
  \centering
  \begin{subfigure}[b]{0.48\textwidth}
    \centering
    \includegraphics[width=\textwidth]{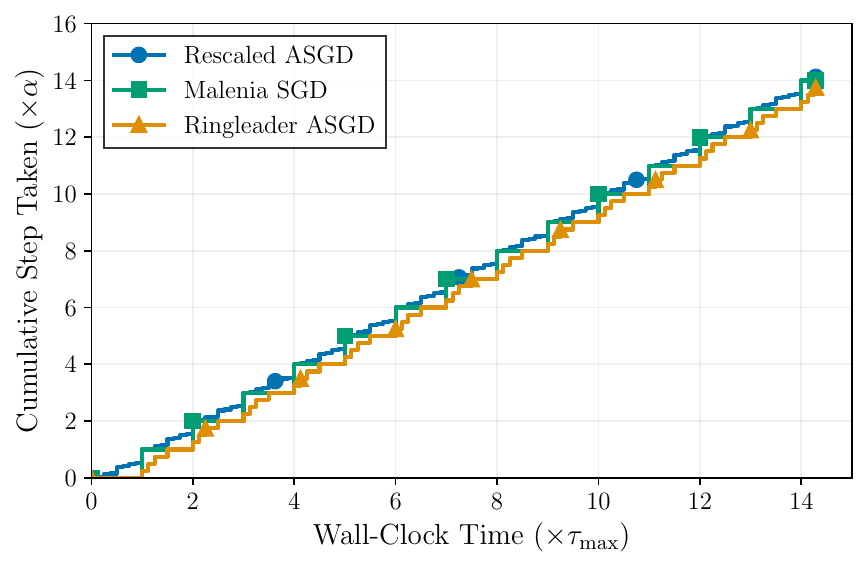}
    \caption{Fixed.}
    \label{fig:dynamics_a}
  \end{subfigure}
  \hfill
  \begin{subfigure}[b]{0.48\textwidth}
    \centering
    \includegraphics[width=\textwidth]{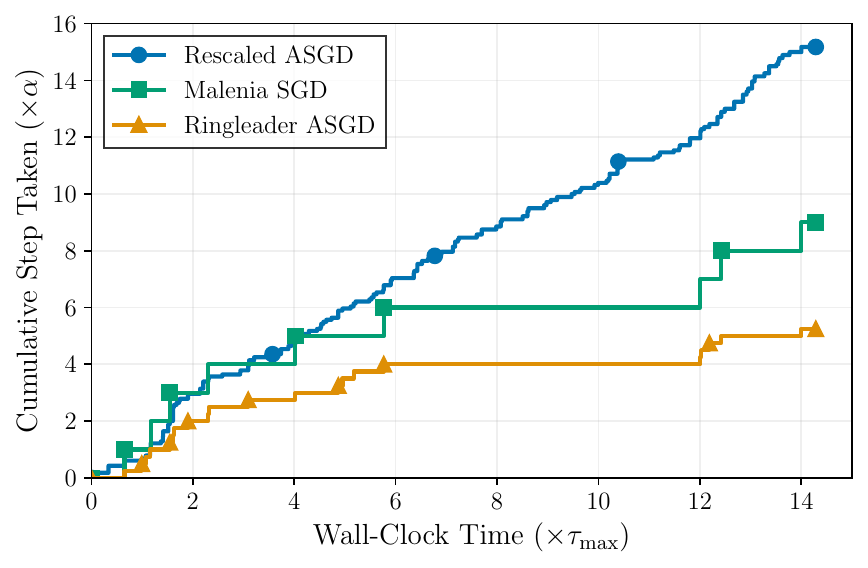}
    \caption{Fluctuating.}
    \label{fig:dynamics_b}
  \end{subfigure}
  \caption{Cumulative stepsize taken over wall-clock time. The gathering phases dilate under fluctuating computation times, causing \malenia and \ringleader to stall in the search space. \rasgd takes steps at an unimpeded rate of $\nicefrac{\alpha}{\tau_{\max}}$.}
  \label{fig:dynamics}
\end{figure}


\section{Conclusion} 
\label{sec:conclusion}

In this work, we introduced \rasgd to address the problem of objective inconsistency in asynchronous optimization caused by the interplay of data and system heterogeneity. 
By proportionally rescaling worker-specific stepsizes, we proved that \asyncsgd converges to the true global objective without introducing memory overhead, gathering phases, or worker idle times. 
Furthermore, our theoretical wall-clock time complexity matches known lower bounds in the leading term for the fixed-computation model considered here.

A primary limitation of our theoretical analysis is the reliance on the fixed-computation model with harmonic periods (\Cref{assumption:computation-times}). 
While our empirical results demonstrate that the method remains robust under stochastically fluctuating computation times, formally relaxing this structural assumption remains an open problem, likely requiring a different analytical approach. 
Future work may also explore extending this stepsize-rescaling principle to time-varying system heterogeneity.


\subsubsection*{Acknowledgments}
The research reported in this publication was supported by funding from King Abdullah University of Science and Technology (KAUST): i) KAUST Baseline Research Scheme, ii) CRG Grant ORFS-CRG12-2024-6460, and iii) Center of Excellence for Generative AI, under award number 5940.


{
\small
\bibliographystyle{plainnat}
\bibliography{references}
}


\newpage
\tableofcontents

\newpage
\appendix

\section{Related Work} 
\label{sec:related_literature}

\paragraph{Asynchronous Optimization} 
\label{par:asynchronous_sgd}

Asynchronous methods have been a cornerstone of parallel computing for over half a century \citep{chazan1969chaotic,baudet1978asynchronous,bertsekas2015parallel}.
Early research focused primarily on the homogeneous data setting, where the central challenge is managing stale gradients to ensure they do not derail the learning trajectory.
In these settings, the assumption of bounded delays has proven a fruitful abstraction for analysis \citep{agarwal_distributed_2011,recht_hogwild_2011}, a property that is naturally satisfied within the structured computation model we employ.
This line of work includes asynchronous nonconvex \sgd, perturbation-based analyses, tight delay-dependent bounds, and asynchronous variance-reduced methods \citep{lian_asynchronous_2015,mania_perturbed_2017,arjevani_tight_2020,j_reddi_variance_2015,leblond_asaga_2017,leblond_improved_2018,feyzmahdavian_asynchronous_2023,islamov_asgrad_2024}.
The more complex interplay between asynchronous updates and heterogeneous data has received comparatively less attention.

\citet{mishchenko_asynchronous_2022} examine a framework with arbitrary, unbounded delays. 
While their results for homogeneous data yield convergence rates to $\varepsilon$-stationary points comparable to ours, the heterogeneous case presents a fundamental challenge. 
Specifically, if worker computation times can vary without structure, the optimization trajectory becomes biased toward faster workers. Consequently, their guarantees in this setting necessitate a strong similarity assumption between local objectives and do not allow for convergence to an arbitrary $\varepsilon$-stationary point. 
By contrast, the structured nature of our computation model allows us to neutralize this bias through stepsize rescaling, enabling stronger guarantees without requiring local objectives to be nearly identical.

\citet{koloskova_sharper_2022} also explore asynchronous SGD under heterogeneous data but utilize a distinct mechanism that deviates from the standard \asyncsgd formulation (\Cref{algo:asgd}). 
In their model, the server assigns gradient computation tasks to workers via uniform random sampling. 
This process allows for multiple gradient tasks to be queued for a single worker. 
While this ensures that all workers contribute an equal number of gradients in the long run--matching the core motivation of our approach--it does so without utilizing faster workers' speed to increase throughput. 
In their model, faster workers are characterized by lower average delays rather than higher update frequencies. 
While this enables convergence to arbitrary $\varepsilon$-stationarity, the queuing mechanism does not reflect the common physical reality of asynchronous systems where workers deliver results as soon as they are computed and the server applies them as soon as they arrive. 
Our approach, \rasgd, allows workers to operate at their maximum physical frequency while using rescaled stepsizes to ensure that the resulting trajectory targets the correct global objective.


\paragraph{Objective Inconsistency} 
\label{par:objective_inconsistency}

The phenomenon wherein distributed methods fail to minimize the global objective \eqref{eq:objective-intro} when workers have different computation times is a well-documented challenge in distributed optimization \citep{wang_tackling_2020,islamov_asgrad_2024}. 
This inconsistency arises because the stochastic process underlying the optimization becomes implicitly weighted by the relative frequencies of worker updates, effectively targeting a surrogate objective $\tilde{F}$ rather than the equal-weighted average $F$.

\citet{wang_tackling_2020} address this challenge within the context of \textit{local SGD}, a framework where workers perform multiple local updates before periodically synchronizing with a central server \citep{mangasarian1995parallel,stich_local_2019,gorbunov_local_2020,woodworth_is_2020}. 
In their analysis, the number of local steps taken by a worker is intrinsically linked to its computation speed. 
To neutralize the resulting bias, they propose a mechanism conceptually similar to ours: 
rescaling the aggregate model difference from each worker by the reciprocal of the number of local steps performed. 
This normalization ensures that fast workers do not pull the global model disproportionately toward their local optima, thereby enabling the algorithm to reach arbitrary $\varepsilon$-stationarity for the true objective. 

Other strategies designed to mitigate bias from heterogeneous worker participation include the use of proximal regularization to constrain local drift \citep{li_federated_2020-1}, control variates to correct for client-server residuals \citep{karimireddy_scaffold_2021, mishchenko_proxskip_2022}, and staleness-aware mixing coefficients that de-weight delayed updates \citep{xie_asynchronous_2019}. 
While effective at improving stability, these methods typically focus on reducing the variance or the impact of stale information rather than enforcing the structural update parity required to eliminate objective inconsistency. 
In the absence of explicit rescaling or contribution equalization, the underlying optimization target remains skewed toward more active participants. 

Our work extends the rescaling principle to the asynchronous, single-update regime, providing a memory-efficient solution.


\paragraph{Optimal Methods in the Fixed-Computation Model} 
\label{par:optimal_methods}

The fixed-computation model offers a benchmark for evaluating the efficiency of parallel optimization algorithms. 
\citet{tyurin_optimal_2023} derive a lower bound for the wall-clock time complexity of first-order stochastic gradient methods when worker computation times $\tau_i$ are constant yet heterogeneous. 
While they propose synchronous methods that achieve this bound in both homogeneous and heterogeneous settings, their approach inherently sacrifices the continuous throughput of asynchronous systems to maintain synchronization.
Notably, in the homogeneous data setting, \vasgd can match this lower bound in regimes with high stochastic gradient variance $\sigma^2$ relative to the target precision $\varepsilon$.
Subsequent work extended the corresponding lower-bound framework beyond fixed worker computation times to arbitrary computation dynamics \citep{tyurin_tight_2024} and to variance-reduced parallel stochastic methods \citep{tovmasyan2026rennala_mvr}.
Complementary to these lower-bound results, \citet{maranjyan_first_2025} developed asynchronous methods that match the lower bounds on time complexities.

\citet{maranjyan_ringmaster_2025} improve upon these results for the homogeneous setting by introducing a variant of \asyncsgd that employs delay-adaptive learning rates and a hard-thresholding rule. 
By discarding gradients with staleness exceeding a specific threshold, they demonstrate that staleness bias can be actively managed to achieve optimal convergence regardless of the gradient variance. 
However, such a thresholding mechanism is fundamentally incompatible with the heterogeneous data setting, where it could systematically exclude updates from the slowest workers, thereby introducing the very objective inconsistency our work seeks to resolve.

\citet{maranjyan_ringleader_2026} propose an asynchronous algorithm, \ringleader, designed specifically for the heterogeneous regime. 
While their method approaches the optimal time complexity, its leading term is still constrained by a constant related to local objective similarity, similar to the limitations observed in the work of \citet{koloskova_sharper_2022} (cf. \Cref{table:complexities}). 
In contrast, \rasgd targets the equal-weighted average through stepsize rescaling rather than delay manipulation, gradient tables, or worker selection. 
This allows us to exploit the higher update frequency of faster workers, a benefit also noted in the empirical performance of \citet{maranjyan_ringleader_2026}. 
By ensuring that every worker's contribution is appropriately weighted in the model space, we maintain the exploration advantages of frequent asynchronous updates.



\section{Proofs} 
\label{sec:proofs}

\subsection{Preliminaries} 
\label{sub:preliminaries}

\paragraph{Notation} 
\label{par:notation}

In our model, randomness enters through the stochastic gradients $\nabla f_{i_k}(\yy_{k,i_k}^m, \xi_{k,i_k}^m)$ alone.
For ease of notation, we drop the dependence on the samples $\xi_{k,i_k}^m \sim \cD_{i_k}$ and write $\nabla f_{i_k}(\yy_{k,i_k}^m)$ instead.
We denote by $\cF^m$ the sigma field generated by the stochastic gradients delivered up to the beginning of cycle $m$.
Likewise, $\cF_k^m$ denotes the sigma field generated by the stochastic gradients delivered up to the beginning of iteration $k$ in cycle $m$, and define $\cF_K^m \coloneqq \cF^{m+1}$.
The iterates $\xx_k^m$ are $\cF_k^m$-measurable, as are $\yy_k^m$, while the stochastic gradients $\nabla f_{i_k}(\yy_{k,i_k}^m)$ are $\cF_{k+1}^m$-measurable as these are resolved only at the end of iteration $k$ in cycle $m$.

$\Exp{\bar{\xx}}$ denotes the unconditional expectation of a random vector $\bar{\xx}$, $\ExpSub{m}{\bar{\xx}}$ the conditional expectation conditional on all information available at the beginning of cycle $m$, i.e., conditioned on the sigma field $\cF^m$, $\ExpSub{m,k}{\bar{\xx}}$ the conditional expectation conditional on all information available at the beginning of iteration $k$ within cycle $m$, i.e., conditioned on the sigma field $\cF_k^m$.

Similarly, 
$$
  \var[_m]{\bar{\xx}} \coloneqq \mathbb{E}_m[\norm{\bar\xx - \ExpSub{m}{\bar{\xx}}}^2] = \mathbb{E}_m[\norm{\bar{\xx}}^2] - \norm{\mathbb{E}_m[\bar{\xx}]}^2
$$
denotes the conditional variance of a random vector $\bar{\xx}$ conditioned on all information available at the beginning of cycle $m$,
and
$$
  \cov[_m]{\bar{\xx},\bar{\yy}} \coloneqq \ExpSub{m}{\abr{\bar{\xx} - \ExpSub{m}{\bar{\xx}} , \bar{\yy} - \ExpSub{m}{\bar{\yy}}}} = \ExpSub{m}{\abr{\bar{\xx},\bar{\yy}}} - \abr{\ExpSub{m}{\bar{\xx}}, \ExpSub{m}{\bar{\yy}}}
$$
denotes the conditional covariance of two random vectors $\bar{\xx}, \bar{\yy}$.

We define
$$
  L_{\max} = \max_{i=1,\dots,n} L_i,\quad \gamma_{\max} \coloneqq \max_{i=1,\dots,n} \gamma_i,\quad \tau_{\max} \coloneqq \max_{i=1,\dots,n} \tau_i ~.
$$
Moreover,
$$
  \tau_A \coloneqq \frac{1}{n} \sum_{i=1}^n \tau_i \quad\text{and}\quad \tau_H \coloneqq \frac{n}{\sum_{i=1}^n \tau_i^{-1}}
$$
denote the arithmetic and harmonic mean of computation times, respectively.


\paragraph{Frequently used Assumptions, Definitions, and Standard Results} 
\label{par:frequently_used_assumptions_definitions_and_standard_results}

Below, we state a series of standard results, definitions, and the assumptions laid out in \Cref{sec:problem_setup} in simplified form for reference.
These will be used extensively in the following proofs.

Unbiased Gradients:
\begin{equation}\textstyleifpreprint
  \label{ass:unbiased}
  \ExpSub{\xi \sim D_i}{\nabla f_i(\xx, \xi)} = \nabla F_i(\xx) \tag{\text{UG}}
\end{equation}

Bounded Gradient Variance:
\begin{equation}\textstyleifpreprint
  \label{ass:variance}
  \ExpSub{\xi \sim D_i}{\norm{\nabla f_i(\xx, \xi) - \nabla F_i(\xx)}^2} \le \sigma^2 \tag{\text{BV}}
\end{equation} 

Smoothness:
\begin{equation}\textstyleifpreprint
  \label{ass:smooth}
  \norm{\nabla F_i(\xx) - \nabla F_i(\yy)} \le L_i \norm{\xx - \yy} , \norm{\nabla F(\xx) - \nabla F(\yy)} \le L \norm{\xx - \yy} \quad  \tag{\text{LS}}
\end{equation}

Lower Bound:
\begin{equation}\textstyleifpreprint
  \label{ass:lower-bound}
  F(\xx^0) - F(\xx^M) \ge F(\xx^0) - F^* = \Delta \tag{\text{LB}}
\end{equation}

Bounded Data Heterogeneity:
\begin{equation}\textstyleifpreprint
  \label{ass:bounded-heterogeneity}
  \norm{\nabla F_i(\xx)}^2 \le \zeta^2 + \rho^2 \norm{\nabla F(\xx)}^2 \tag{\text{BH}}
\end{equation}

Cycle Stepsize:
\begin{equation}\textstyleifpreprint
  \label{eq:alpha-def}
  \alpha \coloneqq \sum_{k=0}^{K-1} \gamma_{i_k} \tag{$\alpha$}
\end{equation}

Sum of Squared Stepsizes:
\begin{equation}\textstyleifpreprint
  \label{eq:A-def}
  A \coloneqq \sum_{k=0}^{K-1} \gamma_{i_k}^2 \tag{\text{A}}
\end{equation}

Cycle Noise:
\begin{equation}\textstyleifpreprint
  \label{eq:noise-def}
  \boldsymbol{\nu}_m \coloneqq \sum_{k=0}^{K-1} \gamma_{i_k} \rbr{\nabla f_{i_k}(\yy_{k,i_k}^m) - \nabla F_{i_k}(\yy_{k,i_k}^m)} \tag{\text{N}}
\end{equation}

Cycle Bias:
\begin{equation}\textstyleifpreprint
  \label{eq:bias-def}
  \mathbf{b}_m \coloneqq \sum_{k=0}^{K-1} \gamma_{i_k} \rbr{\nabla F_{i_k}(\yy_{k,i_k}^m) - \nabla F_{i_k}(\xx^m)} \tag{\text{B}}
\end{equation}

Triangle Inequality:
\begin{equation}\textstyleifpreprint
  \label{eq:triangle}
  \norm{\sum_{i=1}^n \vv_i} \le \sum_{i=1}^n \norm{\vv_i} \tag{\text{T}}
\end{equation}

Cauchy-Schwarz Inequality:
\begin{equation}\textstyleifpreprint
  \label{eq:cauchy-schwarz}
  \rbr{\sum_{i=1}^d u_i v_i}^2 \le \rbr{\sum_{i=1}^d u_i^2} \rbr{\sum_{i=1}^d v_i^2} \tag{\text{CS}}
\end{equation}

Tower Property:
\begin{equation}\textstyleifpreprint
  \label{eq:tower-property}
  \ExpSub{m}{\ExpSub{m,k}{\bar\xx}} = \ExpSub{m}{\bar\xx} \tag{\text{TP}}
\end{equation}

Descent Lemma:
\begin{equation}\textstyleifpreprint
  \label{eq:descent-lemma}
  F(\yy) \le F(\xx) + \abr{\nabla F(\xx), \yy - \xx} + \frac{L}{2} \norm{\yy - \xx}^2 \tag{\text{DL}}
\end{equation}

Jensen's Inequality:
\begin{equation}\textstyleifpreprint
  \label{eq:jensen}
  \norm{\ExpSub{m}{\bar\xx}}^2 \le \ExpSub{m}{\norm{\bar\xx}^2} \tag{\text{JN}}
\end{equation}

Young's Inequality:
\begin{equation}\textstyleifpreprint
  \label{eq:young-fancy}
  \norm{\uu} \norm{\vv} \le \frac{s}{2} \norm{\uu}^2 + \frac{1}{2s} \norm{\vv}^2 , \quad \forall s > 0 \tag{\text{YN}}
\end{equation}

Squared Sum Inequality:
\begin{equation}\textstyleifpreprint
  \label{eq:squared-sum}
  \norm{\sum_{i=1}^n \vv_i}^2 \le n \sum_{i=1}^n \norm{\vv_i}^2 \tag{\text{SS}}.
\end{equation}


\paragraph{Proof Outline} 
\label{par:proof_outline}

\Cref{thm:convergence-bound-general} states our main result in a more general form, showing that \rasgd can target any convex combination \eqref{eq:objective-appendix} of the local objective functions, and establishes a bound on the expected average squared gradient norm of the cycle iterates.
To that end, \Cref{thm:near-stationarity-abstract-bias} establishes a bound in terms of the squared norm of the cycle bias utilizing a standard descent lemma.
A bound on the norm of the expected cumulative cycle bias is derived in \Cref{lemma:bias-i,lemma:bias-ii,lemma:bias-iii,lemma:bias-iv,lemma:bias-term}.
Similarly, \Cref{lemma:noise-term} establishes a bound on the squared norm of the cycle noise.

\Cref{thm:r-asgd-bound,thm:v-asgd-bound} restate \Cref{thm:convergence-bound-general} with the appropriately chosen stepsizes.
\Cref{corollary:r-asgd-time-complexity,corollary:v-asgd-time-complexity} then follow immediately.



\subsection{Auxiliary Results} 
\label{sub:proof_of_thm:convergence-bound-general}

Throughout this subsection, we denote by
\begin{equation}
  \label{eq:objective-appendix}
  F(x) \coloneqq \sum_{i=1}^n w_i F_i(\xx)
\end{equation}
an arbitrary convex combination of the local objective functions $F_i$.
The weights $w_i \ge 0$ sum to unity and are understood to be fixed constants.
\Cref{assumption:global-function-nc,assumption:bounded-heterogeneity} are understood to refer to the objective function defined in \eqref{eq:objective-appendix}.

Note that we recover the equal-weighted average \eqref{eq:objective-intro} by setting $w_i = \nicefrac{1}{n}$.
The more general case presented in this subsection allows us to derive the results from the main text (\Cref{thm:r-asgd-bound,thm:v-asgd-bound}) as special cases of the more general \Cref{thm:convergence-bound-general}.

\begin{lemma}[Cycle Noise]
\label{lemma:noise-term}
    Under \cref{assumption:unbiasedness,assumption:bounded-gradient-variance}, the cycle noise 
    \begin{equation*}
      \boldsymbol{\nu}_m \coloneqq \sum_{k=0}^{K-1} \gamma_{i_k} \rbr{\nabla f_{i_k}(\yy_{k,i_k}^m) - \nabla F_{i_k}(\yy_{k,i_k}^m)}
    \end{equation*}
    satisfies
    \begin{equation*}
        \ExpSub{m}{\boldsymbol{\nu}_m} = 0 , \qquad
        \ExpSub{m}{\norm{\boldsymbol{\nu}_m}^2} \le A \sigma^2 .
    \end{equation*}
\end{lemma}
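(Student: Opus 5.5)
We treat $\boldsymbol{\nu}_m$ as a sum of martingale differences with respect to the filtration $\cF_0^m \subseteq \cF_1^m \subseteq \dots \subseteq \cF_K^m$. For each $k$ we write
\[
\mathbf{e}_k \coloneqq \nabla f_{i_k}(\yy_{k,i_k}^m) - \nabla F_{i_k}(\yy_{k,i_k}^m),
\]
so that $\boldsymbol{\nu}_m = \sum_{k=0}^{K-1} \gamma_{i_k} \mathbf{e}_k$. The worker index $i_k$ and stepsize $\gamma_{i_k}$ are deterministic under the cyclic schedule. The point $\yy_{k,i_k}^m$ is $\cF_k^m$-measurable, and the sample $\xi_k$ is independent of $\cF_k^m$. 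By \Cref{assumption:unbiasedness} we therefore get $\ExpSub{m,k}{\mathbf{e}_k} = 0$, and by \Cref{assumption:bounded-gradient-variance} we get $\ExpSub{m,k}{\norm{\mathbf{e}_k}^2} \le \sigma^2$. Both follow by conditioning on $\cF_k^m$ and integrating over $\xi_k \sim \cD_{i_k}$ at the frozen point.

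For the first claim, we apply the tower property \eqref{eq:tower-property} termwise. Since $\cF^m = \cF_0^m \subseteq \cF_k^m$,
\[
\ExpSub{m}{\gamma_{i_k}\mathbf{e}_k} = \gamma_{i_k}\ExpSub{m}{\ExpSub{m,k}{\mathbf{e}_k}} = 0,
\]
and summing over $k$ gives $\ExpSub{m}{\boldsymbol{\nu}_m}=0$.

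For the second claim, we expand
\[
\norm{\boldsymbol{\nu}_m}^2 = \sum_k \gamma_{i_k}^2 \norm{\mathbf{e}_k}^2 + 2\sum_{j<k} \gamma_{i_j}\gamma_{i_k}\abr{\mathbf{e}_j,\mathbf{e}_k}.
\]
Take a cross term with $j<k$. The vector $\mathbf{e}_j$ is determined by the gradient delivered at iteration $j$, so it is $\cF_{j+1}^m \subseteq \cF_k^m$-measurable. Conditioning on $\cF_k^m$ gives
\[
\ExpSub{m,k}{\abr{\mathbf{e}_j,\mathbf{e}_k}} = \abr{\mathbf{e}_j, \ExpSub{m,k}{\mathbf{e}_k}} = 0,
\]
so by the tower property every cross term vanishes under $\ExpSub{m}{\cdot}$. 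The diagonal terms are bounded by $\gamma_{i_k}^2\sigma^2$, again via the tower property. Summing and using the definition \eqref{eq:A-def} yields $\ExpSub{m}{\norm{\boldsymbol{\nu}_m}^2}\le A\sigma^2$.

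The only delicate point is the measurability bookkeeping: a gradient delivered at iteration $k$ may have been \emph{computed} at a stale point $\yy_{k,i_k}^m$ from an earlier cycle. We must check that this point, and all earlier noise vectors $\mathbf{e}_j$ with $j<k$, are fixed by $\cF_k^m$, while the fresh sample $\xi_k$ is independent of it. This holds by the stated convention that iterates are $\cF_k^m$-measurable and the samples are mutually independent. Integrability of the cross terms follows from the finite second moments.
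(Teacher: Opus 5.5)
Your proof is correct and follows essentially the same route as the paper. The mean vanishes by the tower property and unbiasedness. Each cross term $\abr{\mathbf{e}_j,\mathbf{e}_k}$ with $j<k$ vanishes after conditioning on $\cF_k^m$, because $\mathbf{e}_j$ is already $\cF_k^m$-measurable. Each diagonal term is bounded by $\gamma_{i_k}^2\sigma^2$.

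The only difference is cosmetic: you expand $\norm{\boldsymbol{\nu}_m}^2$ directly, whereas the paper writes the same computation through the conditional variance and covariance decomposition.
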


\begin{proof}
  Recall that $\yy_{k,i_k}^m$ denotes the model held by worker $i_k$ at the beginning of iteration $k$ of cycle $m$.
  Thus,
  \begin{eqnarray*}
      \ExpSub{m}{\boldsymbol{\nu}_m} & \overset{\eqref{eq:noise-def}}{=} & \ExpSub{m}{\sum_{k=0}^{K-1} \gamma_{i_k} \rbr{\nabla f_{i_k}(\yy_{k,i_k}^m) - \nabla F_{i_k}(\yy_{k,i_k}^m)}} \\
      & \overset{\eqref{eq:tower-property}}{=} & \sum_{k=0}^{K-1} \gamma_{i_k} \ExpSub{m}{\ExpSub{m,k}{\nabla f_{i_k}(\yy_{k,i_k}^m) - \nabla F_{i_k}(\yy_{k,i_k}^m)}} \\
      & \overset{\eqref{ass:unbiased}}{=} & \sum_{k=0}^{K-1} \gamma_{i_k} \ExpSub{m}{0} \\
      & = & 0 ~.
  \end{eqnarray*}

  Further, 
  \begin{eqnarray*}
      \ExpSub{m}{\norm{\boldsymbol{\nu}_m}^2} & = & \var[_m]{\boldsymbol{\nu}_m} + \norm{\ExpSub{m}{\boldsymbol{\nu}_m}}^2 \\
      & = & \var[_m]{\sum_{k=0}^{K-1} \gamma_{i_k} \rbr{\nabla f_{i_k}(\yy_{k,i_k}^m) - \nabla F_{i_k}(\yy_{k,i_k}^m)}} \\
      & = & \sum_{k = 0}^{K-1} \sum_{l = 0}^{K-1} \gamma_{i_k} \gamma_{i_l} \cov[_m]{\nabla f_{i_k}(\yy_{k,i_k}^m) - \nabla F_{i_k}(\yy_{k,i_k}^m), \nabla f_{i_l}(\yy_{l,i_l}^m) - \nabla F_{i_l}(\yy_{l,i_l}^m)} .
  \end{eqnarray*}

  Let $\ee_k^m \coloneqq \nabla f_{i_k}(\yy_{k,i_k}^m) - \nabla F_{i_k}(\yy_{k,i_k}^m)$.
  Unbiasedness of the stochastic gradients (\Cref{assumption:unbiasedness}) ensures $\ExpSub{m}{\ee_k^m} = \ExpSub{m}{\ExpSub{m,k}{\ee_k^m}} = 0$.
  Now consider a cross-term with $k < l$:
  \begin{eqnarray*}
      \cov[_m]{\ee_k^m, \ee_l^m} & = & \ExpSub{m}{\abr{\ee_k^m, \ee_l^m}} - \abr{\ExpSub{m}{\ee_k^m}, \ExpSub{m}{\ee_l^m}} \\
      & \overset{\eqref{eq:tower-property}}{=} & \ExpSub{m}{\ExpSub{m,l}{\abr{\ee_k^m, \ee_l^m}}} \\
      & = & \ExpSub{m}{\abr{\ee_k^m , \ExpSub{m,l}{\ee_l^m}}} \\
      & = & 0 ~,
  \end{eqnarray*}

  where we used that $\ee_k^m$ is $\cF_l^m$-measurable, i.e., its randomness has been resolved by the beginning of iteration $l > k$.

  With \Cref{assumption:bounded-gradient-variance}, we now find
  \begin{eqnarray*}
      \ExpSub{m}{\norm{\boldsymbol{\nu}_m}^2} & = & \var[_m]{\boldsymbol{\nu}_m} \\
      & = & \sum_{k=0}^{K-1} \gamma_{i_k}^2 \cov[_m]{\ee_k^m, \ee_k^m} \\
      & = & \sum_{k=0}^{K-1} \gamma_{i_k}^2 \ExpSub{m}{\norm{\ee_k^m}^2} \\
      & \overset{\eqref{ass:variance}}{\le} & \sum_{k=0}^{K-1} \gamma_{i_k}^2 \sigma^2 \\
      & = & A \sigma^2 ~.
  \end{eqnarray*}
\end{proof}

\begin{lemma}
    \label{lemma:bias-i}
    Under \Cref{assumption:local-f-smooth}, the cycle bias 
    \begin{equation*}
      \mathbf{b}_m \coloneqq \sum_{k=0}^{K-1} \gamma_{i_k} \rbr{\nabla F_{i_k}(\yy_{k,i_k}^m) - \nabla F_{i_k}(\xx^m)}
    \end{equation*}
    satisfies
    \begin{equation}
        \label{eq:bias-i}
        \norm{\mathbf{b}_m}^2 \le A L_{\max}^2 \sum_{k=0}^{K-1} \norm{\yy_{k,i_k}^m - \xx^m}^2 .
    \end{equation}
\end{lemma}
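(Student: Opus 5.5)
The plan is to apply the Cauchy--Schwarz inequality \eqref{eq:cauchy-schwarz} to the weighted sum defining $\mathbf{b}_m$, and then invoke local smoothness term by term.

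First, by the triangle inequality \eqref{eq:triangle},
\[
  \norm{\mathbf{b}_m} \le \sum_{k=0}^{K-1} \gamma_{i_k} \norm{\nabla F_{i_k}(\yy_{k,i_k}^m) - \nabla F_{i_k}(\xx^m)} .
\]
Squaring both sides and applying \eqref{eq:cauchy-schwarz} with $u_k = \gamma_{i_k}$ and $v_k = \norm{\nabla F_{i_k}(\yy_{k,i_k}^m) - \nabla F_{i_k}(\xx^m)}$ gives
\[
  \norm{\mathbf{b}_m}^2 \le \rbr{\sum_{k=0}^{K-1} \gamma_{i_k}^2} \sum_{k=0}^{K-1} \norm{\nabla F_{i_k}(\yy_{k,i_k}^m) - \nabla F_{i_k}(\xx^m)}^2 .
\]
The first factor is exactly $A$ by \eqref{eq:A-def}.

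Second, each summand is bounded using \eqref{ass:smooth} for the local function $F_{i_k}$:
\[
  \norm{\nabla F_{i_k}(\yy_{k,i_k}^m) - \nabla F_{i_k}(\xx^m)}^2 \le L_{i_k}^2 \norm{\yy_{k,i_k}^m - \xx^m}^2 \le L_{\max}^2 \norm{\yy_{k,i_k}^m - \xx^m}^2 .
\]
Substituting this into the previous display yields \eqref{eq:bias-i}.

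The argument is purely deterministic, since no expectations are involved, so there is no real obstacle. The only point requiring care is pairing the stepsizes with the gradient differences in Cauchy--Schwarz, so that the factor $A = \sum_k \gamma_{i_k}^2$ appears, rather than using \eqref{eq:squared-sum}, which would produce the weaker factor $K \gamma_{\max}^2$.
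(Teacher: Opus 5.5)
Your proposal is correct and matches the paper's proof step for step. The paper likewise applies the triangle inequality, then Cauchy--Schwarz pairing $\gamma_{i_k}$ with the gradient-difference norms to produce the factor $A$, and then local smoothness with $L_{i_k} \le L_{\max}$.
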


\begin{proof}
  Using our definitions of the cycle bias $\mathbf{b}_m$ and cycle stepsize $A$, we find
  \begin{eqnarray*}
      \norm{\mathbf{b}_m}^2 & \overset{\eqref{eq:bias-def}}{=} & \norm{\sum_{k=0}^{K-1} \gamma_{i_k} \rbr{\nabla F_{i_k}(\yy_{k,i_k}^m) - \nabla F_{i_k}(\xx^m)}}^2 \\
      & \overset{\eqref{eq:triangle}}{\le} & \rbr{\sum_{k=0}^{K-1} \gamma_{i_k} \norm{\nabla F_{i_k}(\yy_{k,i_k}^m) - \nabla F_{i_k}(\xx^m)}}^2 \\
      & \overset{\eqref{eq:cauchy-schwarz}}{\le} & \rbr{\sum_{k=0}^{K-1} \gamma_{i_k}^2} \rbr{\sum_{k=0}^{K-1} \norm{\nabla F_{i_k}(\yy_{k,i_k}^m) - \nabla F_{i_k}(\xx^m)}^2} \\
      & \overset{\eqref{eq:A-def}}{=} & A \sum_{k=0}^{K-1} \norm{\nabla F_{i_k}(\yy_{k,i_k}^m) - \nabla F_{i_k}(\xx^m)}^2 \\
      & \overset{\eqref{ass:smooth}}{\le} & A \sum_{k=0}^{K-1} L_{i_k}^2 \norm{\yy_{k,i_k}^m - \xx^m}^2 \\
      & \le & A L_{\max}^2 \sum_{k=0}^{K-1} \norm{\yy_{k,i_k}^m - \xx^m}^2 ,
  \end{eqnarray*}

  where we used the smoothness of the local functions (\Cref{assumption:local-f-smooth}) in the penultimate line.
\end{proof}
\begin{lemma}
    \label{lemma:bias-ii}
    Let $k' \in \nbr{0, \dots, K-1}, m' \in \nbr{m-1,m}$ denote the iteration and cycle index corresponding to the last model seen by worker $i_k$ at the beginning of iteration $k$ in cycle $m$, i.e., $\yy_{k,i_k}^m = \xx_{k'}^{m'}$.
    If we denote the set of iterations between $\yy_{k,i_k}^m$ and $\xx^m$ by 
    \begin{equation*}
      \cI_k^m \coloneqq \begin{cases} \nbr{k', \dots, K-1} \times \nbr{m-1} , & m' = m - 1 \\ \nbr{0, \dots, k' - 1} \times \nbr{m}, & m' = m \end{cases} , 
    \end{equation*}
    where $\nbr{0, \dots, k' - 1}$ is understood as the empty set when $k'=0$,
    then
    \begin{equation*}
        \norm{\yy_{k,i_k}^m - \xx^m}^2 \le K \sum_{(l,c) \in \cI_k^m} \gamma_{i_l}^2 \norm{\nabla f_{i_l}(\yy_{l,i_l}^c)}^2 .
    \end{equation*}
\end{lemma}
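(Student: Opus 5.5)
We will write the difference $\yy_{k,i_k}^m - \xx^m$ as a telescoping sum of server updates, and then apply the squared sum inequality \eqref{eq:squared-sum}. The sum will contain at most $K$ terms, which is where the factor $K$ comes from. The argument is deterministic and uses only the update rule of \Cref{algo:asgd-iv}.

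\textbf{Case $m' = m-1$.} Here $\yy_{k,i_k}^m = \xx_{k'}^{m-1}$ with $k' \ge k$. We have $\xx^m = \xx_K^{m-1}$, and each server step is $\xx_{l+1}^{m-1} = \xx_l^{m-1} - \gamma_{i_l}\nabla f_{i_l}(\yy_{l,i_l}^{m-1})$. Telescoping gives
\begin{equation*}
  \yy_{k,i_k}^m - \xx^m = \xx_{k'}^{m-1} - \xx_K^{m-1} = \sum_{l=k'}^{K-1} \gamma_{i_l} \nabla f_{i_l}(\yy_{l,i_l}^{m-1}) = \sum_{(l,c)\in\cI_k^m} \gamma_{i_l}\nabla f_{i_l}(\yy_{l,i_l}^c).
\end{equation*}

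\textbf{Case $m' = m$.} Here $\yy_{k,i_k}^m = \xx_{k'}^m$ with $k' < k$ (or $k' \le k$), and $\xx^m = \xx_0^m$. Telescoping gives
\begin{equation*}
  \yy_{k,i_k}^m - \xx^m = \xx_{k'}^{m} - \xx_0^{m} = -\sum_{l=0}^{k'-1}\gamma_{i_l}\nabla f_{i_l}(\yy_{l,i_l}^m).
\end{equation*}
When $k'=0$ the sum is empty, consistent with the convention on $\cI_k^m$.

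In both cases the index set $\cI_k^m$ has cardinality $|\cI_k^m| \le K$: it is $K-k'$ in the first case and $k'$ in the second, both at most $K$. Applying \eqref{eq:squared-sum} to the telescoped sum gives
\begin{equation*}
  \norm{\yy_{k,i_k}^m - \xx^m}^2 \le |\cI_k^m| \sum_{(l,c)\in\cI_k^m}\gamma_{i_l}^2\norm{\nabla f_{i_l}(\yy_{l,i_l}^c)}^2 \le K\sum_{(l,c)\in\cI_k^m}\gamma_{i_l}^2\norm{\nabla f_{i_l}(\yy_{l,i_l}^c)}^2 .
\end{equation*}
The sign of the sum does not matter, since we take norms.

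The only real obstacle is bookkeeping. We need to confirm that the last model seen by worker $i_k$ lies either in the current cycle or in the previous one, and never earlier. Each worker delivers at least one gradient per cycle ($K_i \ge 1$), so its model is refreshed at least once every $K$ server updates. This justifies $m'\in\{m-1,m\}$, which is taken as given in the statement.

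We also need the index conventions to match exactly. The cycle update sets $\xx^m = \xx_K^{m-1} = \xx_0^m$, and the iterates $\yy_{l,i_l}^c$ appearing in the increments are indexed by the cycle $c$ in which each step was taken. For $m=0$, every $\yy_{k,i_k}^0$ with $m'=-1$ is interpreted as $\xx^0$ and the bound is trivial. Once these conventions are checked, the inequality follows directly.
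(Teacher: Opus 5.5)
Your proof is correct and is the same as the paper's. You write $\yy_{k,i_k}^m - \xx^m = \xx_{k'}^{m'} - \xx^m$ as the signed sum of the server updates indexed by $\cI_k^m$, observe that $|\cI_k^m| \le K$, and apply \eqref{eq:squared-sum}; your explicit two-case telescoping just spells out what the paper states in a single line.
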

\begin{proof}
  Note that $\abs{\cI_k^m} \le K$ as the cyclic update schedule ensures that each worker receives a model update after at most $K$ iterations.
  The vector $\xx_{k'}^{m'} - \xx^m$ is, up to sign, the sum of the updates indexed by $\cI_k^m$; the sign depends on whether $m'=m-1$ or $m'=m$ and disappears after taking the norm.
  The claim then follows from the squared-sum inequality:
  \begin{eqnarray*}
      \norm{\yy_{k,i_k}^m - \xx^m}^2 & = & \norm{\xx_{k'}^{m'} - \xx^m}^2 \\
      & = & \norm{\sum_{(l,c) \in \cI_k^m} \gamma_{i_l} \nabla f_{i_l}(\yy_{l,i_l}^c)}^2 \\
      & \overset{\eqref{eq:squared-sum}}{\le} & \abs{\cI_k^m} \sum_{(l,c) \in \cI_k^m} \gamma_{i_l}^2 \norm{\nabla f_{i_l}(\yy_{l,i_l}^c)}^2 \\
      & \le & K \sum_{(l,c) \in \cI_k^m} \gamma_{i_l}^2 \norm{\nabla f_{i_l}(\yy_{l,i_l}^c)}^2 .
  \end{eqnarray*}
\end{proof}

\begin{lemma}
    \label{lemma:bias-iii}
    Under \cref{assumption:unbiasedness,assumption:bounded-gradient-variance,assumption:global-function-nc,assumption:bounded-heterogeneity},
    \begin{equation}
        \label{eq:bias-iii}
        \Exp{\norm{\nabla f_{i_k}(\yy_{k,i_k}^m)}^2} \le \sigma^2 + \zeta^2 + 2 \rho^2 \Exp{\norm{\nabla F(\xx^m)}^2} + 2 \rho^2 L^2 \Exp{\norm{\yy_{k,i_k}^m - \xx^m}^2} .
    \end{equation}
\end{lemma}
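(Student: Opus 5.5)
We condition on $\cF_k^m$, the information available at the beginning of iteration $k$ of cycle $m$. The point $\yy_{k,i_k}^m$ is $\cF_k^m$-measurable, while the sample drawn at iteration $k$ is independent of $\cF_k^m$. So we first establish the inequality conditionally and then take total expectations via \eqref{eq:tower-property}.

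\textbf{Step 1 (bias--variance split).} By \eqref{ass:unbiased}, $\ExpSub{m,k}{\nabla f_{i_k}(\yy_{k,i_k}^m)} = \nabla F_{i_k}(\yy_{k,i_k}^m)$. The cross term therefore vanishes, and \eqref{ass:variance} gives
\begin{equation*}
  \ExpSub{m,k}{\norm{\nabla f_{i_k}(\yy_{k,i_k}^m)}^2} = \ExpSub{m,k}{\norm{\nabla f_{i_k}(\yy_{k,i_k}^m) - \nabla F_{i_k}(\yy_{k,i_k}^m)}^2} + \norm{\nabla F_{i_k}(\yy_{k,i_k}^m)}^2 \le \sigma^2 + \norm{\nabla F_{i_k}(\yy_{k,i_k}^m)}^2 .
\end{equation*}

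\textbf{Step 2 (heterogeneity).} Apply \eqref{ass:bounded-heterogeneity} at the point $\yy_{k,i_k}^m$, with $F$ the objective \eqref{eq:objective-appendix}:
\begin{equation*}
  \norm{\nabla F_{i_k}(\yy_{k,i_k}^m)}^2 \le \zeta^2 + \rho^2 \norm{\nabla F(\yy_{k,i_k}^m)}^2 .
\end{equation*}

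\textbf{Step 3 (move to the cycle iterate).} Write $\nabla F(\yy_{k,i_k}^m) = \nabla F(\xx^m) + \rbr{\nabla F(\yy_{k,i_k}^m) - \nabla F(\xx^m)}$. Then \eqref{eq:squared-sum} with $n=2$ and $L$-smoothness of $F$ from \eqref{ass:smooth} give
\begin{equation*}
  \norm{\nabla F(\yy_{k,i_k}^m)}^2 \le 2\norm{\nabla F(\xx^m)}^2 + 2 L^2 \norm{\yy_{k,i_k}^m - \xx^m}^2 .
\end{equation*}

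\textbf{Step 4 (combine).} Chaining Steps 1--3 gives the pointwise bound
\begin{equation*}
  \ExpSub{m,k}{\norm{\nabla f_{i_k}(\yy_{k,i_k}^m)}^2} \le \sigma^2 + \zeta^2 + 2\rho^2 \norm{\nabla F(\xx^m)}^2 + 2\rho^2 L^2 \norm{\yy_{k,i_k}^m - \xx^m}^2 .
\end{equation*}
Taking unconditional expectations yields \eqref{eq:bias-iii}.

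The only point needing care is the measurability in Step 1: the cross term must vanish. This holds because $\yy_{k,i_k}^m$ is fixed given $\cF_k^m$ and the fresh sample $\xi_k$ is independent of $\cF_k^m$. No other step is delicate; each is a direct application of an assumption already listed in the paper.
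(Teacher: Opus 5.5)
Your proposal is correct and follows the paper's proof essentially step for step. Both arguments condition on $\cF_k^m$, split into variance plus squared mean via unbiasedness, apply bounded heterogeneity at $\yy_{k,i_k}^m$, pass to $\xx^m$ using the squared-sum inequality and $L$-smoothness of $F$, and then take unconditional expectations.
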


\begin{proof}
    By \cref{assumption:unbiasedness,assumption:bounded-gradient-variance},
    \begin{eqnarray*}
        \ExpSub{m,k}{\norm{\nabla f_{i_k}(\yy_{k,i_k}^m)}^2} & \overset{\eqref{ass:unbiased}}{=} & \ExpSub{m,k}{\norm{\nabla f_{i_k}(\yy_{k,i_k}^m) - \nabla F_{i_k}(\yy_{k,i_k}^m)}^2} + \ExpSub{m,k}{\norm{\nabla F_{i_k}(\yy_{k,i_k}^m)}^2} \\
        & \overset{\eqref{ass:variance}}{\le} & \sigma^2 + \norm{\nabla F_{i_k}(\yy_{k,i_k}^m)}^2 .
    \end{eqnarray*}

    For the latter term, using \cref{assumption:global-function-nc,assumption:bounded-heterogeneity}, we derive the bound
    \begin{eqnarray*}
        \norm{\nabla F_{i_k}(\yy_{k,i_k}^m)}^2 & \overset{\eqref{ass:bounded-heterogeneity}}{\le} & \zeta^2 + \rho^2 \norm{\nabla F(\yy_{k,i_k}^m)}^2 \\
        & \overset{\eqref{eq:squared-sum}}{\le} & \zeta^2 + 2 \rho^2 \norm{\nabla F(\xx^m)}^2 + 2 \rho^2 \norm{\nabla F(\yy_{k,i_k}^m) - \nabla F(\xx^m)}^2 \\
        & \overset{\eqref{ass:smooth}}{\le} & \zeta^2 + 2 \rho^2 \norm{\nabla F(\xx^m)}^2 + 2 \rho^2 L^2 \norm{\yy_{k,i_k}^m - \xx^m}^2 .
    \end{eqnarray*}

    Taking the unconditional expectation then gives
    \begin{eqnarray*}
        \Exp{\norm{\nabla f_{i_k}(\yy_{k,i_k}^m)}^2} & \le & \sigma^2 + \zeta^2 + 2 \rho^2 \Exp{\norm{\nabla F(\xx^m)}^2} + 2 \rho^2 L^2 \Exp{\norm{\yy_{k,i_k}^m - \xx^m}^2} .
    \end{eqnarray*}
\end{proof}

\begin{lemma}
  \label{lemma:bias-iv}
  Let
  \begin{equation*}
      Q_M \coloneqq \sum_{m=0}^{M-1} \sum_{k=0}^{K-1} \gamma_{i_k}^2 \Exp{\norm{\nabla f_{i_k}(\yy_{k,i_k}^m)}^2} .
  \end{equation*}
  For $\gamma_{\max} \le \frac{1}{2 K L \rho}$, we have
  \begin{equation*}
      Q_M \le 2 A M \rbr{\sigma^2 + \zeta^2} + 4 A \rho^2 \sum_{m=0}^{M-1} \Exp{\norm{\nabla F(\xx^m)}^2} .
  \end{equation*}
\end{lemma}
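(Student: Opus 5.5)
The plan is to plug \Cref{lemma:bias-iii} into each summand of $Q_M$ and then use \Cref{lemma:bias-ii} to bound the staleness term again in terms of $Q_M$. The stepsize condition should make the coefficient of that self-referential term at most $\nicefrac{1}{2}$, so it can be absorbed into the left-hand side.

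\emph{First step.} Applying \Cref{lemma:bias-iii} to each $\Exp{\norm{\nabla f_{i_k}(\yy_{k,i_k}^m)}^2}$ and summing over $k$ with weights $\gamma_{i_k}^2$ gives, by \eqref{eq:A-def}, for every $m$,
\begin{equation*}
\sum_{k=0}^{K-1}\gamma_{i_k}^2\Exp{\norm{\nabla f_{i_k}(\yy_{k,i_k}^m)}^2}\le A(\sigma^2+\zeta^2)+2A\rho^2\Exp{\norm{\nabla F(\xx^m)}^2}+2\rho^2L^2\sum_{k=0}^{K-1}\gamma_{i_k}^2\Exp{\norm{\yy_{k,i_k}^m-\xx^m}^2}.
\end{equation*}
Summing over $m=0,\dots,M-1$ already produces the first two terms of the claim, with constants $1$ and $2$ instead of $2$ and $4$.

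\emph{Second step.} For the staleness term I bound $\gamma_{i_k}^2\le\gamma_{\max}^2$ and apply \Cref{lemma:bias-ii}:
\begin{equation*}
\sum_{m=0}^{M-1}\sum_{k=0}^{K-1}\gamma_{i_k}^2\Exp{\norm{\yy_{k,i_k}^m-\xx^m}^2}\le\gamma_{\max}^2K\sum_{m=0}^{M-1}\sum_{k=0}^{K-1}\sum_{(l,c)\in\cI_k^m}\gamma_{i_l}^2\Exp{\norm{\nabla f_{i_l}(\yy_{l,i_l}^c)}^2}.
\end{equation*}
For $m=0$ the left-hand term vanishes, since all workers hold $\xx^0$. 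Hence only indices $(l,c)$ with $0\le c\le M-1$ appear on the right.

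\emph{Main obstacle: a counting argument.} I need to show that each pair $(l,c)$ lies in at most $K$ of the sets $\cI_k^m$, uniformly in $(l,c)$. A pair from cycle $c$ can only appear for $m\in\{c,c+1\}$.

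Because the schedule is periodic, for a fixed position $k$ the worker $i_k$ and the relative position $k'$ of its last read do not depend on $m$. Either $k'<k$, i.e. the read happened in the current cycle, in which case $\cI_k^m$ only ever contains current-cycle indices. Or the read happened in the previous cycle, in which case $\cI_k^m$ only ever contains previous-cycle indices. So for each fixed $k$, the pair $(l,c)$ can lie in at most one of $\cI_k^c$ and $\cI_k^{c+1}$. This yields the count $\le K$, and the triple sum is at most $K\,Q_M$.

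If this sharper count failed, a cruder bound $2K$ would only give coefficient $1$. That would not suffice, so this is the step to verify carefully.

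\emph{Conclusion.} Combining the two steps gives
\begin{equation*}
Q_M\le AM(\sigma^2+\zeta^2)+2A\rho^2\sum_{m=0}^{M-1}\Exp{\norm{\nabla F(\xx^m)}^2}+2\rho^2L^2\gamma_{\max}^2K^2\,Q_M.
\end{equation*}
Under $\gamma_{\max}\le\frac{1}{2KL\rho}$ the last coefficient is at most $\nicefrac{1}{2}$. Since $Q_M$ is a finite sum of finite second moments, which follows inductively from \Cref{assumption:bounded-gradient-variance} and smoothness, I can rearrange. This gives $\frac12 Q_M\le\dots$, and multiplying by $2$ yields exactly the claimed bound.
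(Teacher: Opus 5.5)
Your proof is correct and is essentially the paper's argument: you substitute \Cref{lemma:bias-iii}, bound the staleness sum via \Cref{lemma:bias-ii} plus a swap-of-summation multiplicity count, and absorb the resulting $2\rho^2L^2\gamma_{\max}^2K^2 Q_M\le \tfrac12 Q_M$ term. The one difference is that you bound $\gamma_{i_k}^2\le\gamma_{\max}^2$ before counting (multiplicity at most $K$, the same count the paper uses in \Cref{lemma:bias-term}), whereas the paper keeps the weights, bounds the weighted multiplicity by $A$, and only then uses $A\le\gamma_{\max}^2K$, so both reach the same coefficient; also, contrary to your wording, the $m=0$ staleness terms need not vanish (a worker that already delivered in cycle $0$ holds some $\xx_{k'}^0$ with $k'\ge 1$), and what your count actually needs, and what is true, is only that no $\cI_k^0$ refers to a cycle $-1$.
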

\begin{proof}
  Multiplying the bound derived in \Cref{lemma:bias-ii} by $\gamma_{i_k}^2$ and summing over all cycles $m=0, \dots, M-1$ and iterations $k=0, \dots, K-1$, we find
  \begin{eqnarray*}
    \sum_{m=0}^{M-1} \sum_{k=0}^{K-1} \gamma_{i_k}^2 \norm{\yy_{k,i_k}^m - \xx^m}^2 & \le & K \sum_{m=0}^{M-1} \sum_{k=0}^{K-1} \gamma_{i_k}^2 \sum_{(l,c) \in \cI_k^m} \gamma_{i_l}^2 \norm{\nabla f_{i_l}(\yy_{l,i_l}^c)}^2 \\
    & = & K \sum_{m'=0}^{M-1} \sum_{k'=0}^{K-1} \gamma_{i_{k'}}^2 \norm{\nabla f_{i_{k'}}(\yy_{k',i_{k'}}^{m'})}^2 \times \rbr{\sum_{(l,c) : (k',m') \in \cI_l^c} \gamma_{i_l}^2} .
  \end{eqnarray*}
  In the first line, we sum over the paths $\cI_k^m$ connecting $\yy_{k,i_k}^m$ and $\xx^m$, which we bound by considering the norms of the gradients computed along these paths, $\nabla f_{i_l}(\yy_{l,i_l}^c)$.
  To get to the second line, we instead consider the gradient in an iteration $(k',m')$, $\nabla f_{i_{k'}}(\yy_{k',i_{k'}}^{m'})$, and sum up the contribution of the paths on which it appears.
  Since the update schedule is cyclic and the stepsizes depend only on the worker index, the sum of squared stepsizes over any block of $K$ consecutive updates is $A$.
  For a fixed update $(k',m')$, all destination updates $(l,c)$ whose paths contain $(k',m')$ lie in such a block of at most one full cycle.
  We can therefore bound this factor uniformly by
  \begin{equation*}
    \sum_{(l,c) : (k',m') \in \cI_l^c} \gamma_{i_l}^2 \le \sum_{k=0}^{K-1} \gamma_{i_k}^2 = A ~.
  \end{equation*}
  Hence,
  \begin{eqnarray}
    \label{eq:q-aux-bound}
    \sum_{m=0}^{M-1} \sum_{k=0}^{K-1} \gamma_{i_k}^2 \Exp{\norm{\yy_{k,i_k}^m - \xx^m}^2} & \le & A K \sum_{m=0}^{M-1} \sum_{k=0}^{K-1} \gamma_{i_k}^2 \Exp{\sqnorm{\nabla f_{i_k}(\yy_{k,i_k}^m)}} = A K Q_M .
  \end{eqnarray}
  Now, using \Cref{lemma:bias-iii}, we have
  \begin{eqnarray*}
    Q_M & = & \sum_{m=0}^{M-1} \sum_{k=0}^{K-1} \gamma_{i_k}^2 \Exp{\norm{\nabla f_{i_k}(\yy_{k,i_k}^m)}^2} \\
    & \overset{\eqref{eq:bias-iii}}{\le} & \sum_{m=0}^{M-1} \sum_{k=0}^{K-1} \gamma_{i_k}^2 \rbr{\sigma^2 + \zeta^2 + 2 \rho^2 \Exp{\norm{\nabla F(\xx^m)}^2} + 2 L^2 \rho^2 \Exp{\norm{\yy_{k,i_k}^m - \xx^m}^2}} \\
    & = & A M \rbr{\sigma^2 + \zeta^2} + 2 A \rho^2 \sum_{m=0}^{M-1} \Exp{\norm{\nabla F(\xx^m)}^2} + 2 L^2 \rho^2 \sum_{m=0}^{M-1} \sum_{k=0}^{K-1} \gamma_{i_k}^2 \Exp{\norm{\yy_{k,i_k}^m - \xx^m}^2} \\
    & \overset{\eqref{eq:q-aux-bound}}{\le} & A M \rbr{\sigma^2 + \zeta^2} + 2 A \rho^2 \sum_{m=0}^{M-1} \Exp{\norm{\nabla F(\xx^m)}^2} + 2 A K L^2 \rho^2 Q_M \\
    & \le & A M \rbr{\sigma^2 + \zeta^2} + 2 A \rho^2 \sum_{m=0}^{M-1} \Exp{\norm{\nabla F(\xx^m)}^2} + 2 \gamma_{\max}^2 K^2 L^2 \rho^2 Q_M ~,
  \end{eqnarray*}

  where we use $A = \sum_{k=0}^{K-1} \gamma_{i_k}^2 \le \gamma_{\max} \sum_{k=0}^{K-1} \gamma_{i_k} \le \gamma_{\max}^2 K$ to obtain the last inequality.

  Gathering $Q_M$-terms on the left side,
  \begin{eqnarray*}
    \rbr{1 - 2 \gamma_{\max}^2 K^2 L^2 \rho^2} Q_M & \le & A M \rbr{\sigma^2 + \zeta^2} + 2 A \rho^2 \sum_{m=0}^{M-1} \Exp{\norm{\nabla F(\xx^m)}^2} ,
  \end{eqnarray*}
  we see that, if $\gamma_{\max} \le \frac{1}{2 K L \rho}$, then
  \begin{eqnarray*}
    Q_M & \le & 2 A M \rbr{\sigma^2 + \zeta^2} + 4 A \rho^2 \sum_{m=0}^{M-1} \Exp{\norm{\nabla F(\xx^m)}^2} .
  \end{eqnarray*}
\end{proof}
\begin{lemma}[Bias]
\label{lemma:bias-term}
    Under \Cref{assumption:unbiasedness,assumption:bounded-gradient-variance,assumption:global-function-nc,assumption:local-f-smooth,assumption:bounded-heterogeneity}, and if $\gamma_{\max} \le \frac{1}{2 K L \rho}$, then
    \begin{equation}
        \sum_{m=0}^{M-1} \Exp{\norm{\mathbf{b}_m}^2} \le 2 A^2 K^2 L_{\max}^2 M \rbr{\sigma^2 + \zeta^2} + 4 A^2 K^2 L_{\max}^2 \rho^2 \sum_{m=0}^{M-1} \Exp{\norm{\nabla F(\xx^m)}^2} .
    \end{equation}
\end{lemma}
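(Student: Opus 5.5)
The plan is to chain \Cref{lemma:bias-i}, \Cref{lemma:bias-ii} and \Cref{lemma:bias-iv}. First, apply \Cref{lemma:bias-i} pathwise for each cycle $m$, which gives $\norm{\mathbf{b}_m}^2 \le A L_{\max}^2 \sum_{k=0}^{K-1} \norm{\yy_{k,i_k}^m - \xx^m}^2$. Next, bound each staleness distance using \Cref{lemma:bias-ii}, namely $\norm{\yy_{k,i_k}^m - \xx^m}^2 \le K \sum_{(l,c) \in \cI_k^m} \gamma_{i_l}^2 \norm{\nabla f_{i_l}(\yy_{l,i_l}^c)}^2$. Then sum over $k$ and $m$ and take expectations.

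The key step is the exchange of summation already carried out in the proof of \Cref{lemma:bias-iv}, with one change: the weights $\gamma_{i_k}^2$ are now replaced by the constant $1$. Fix an update $(k',m')$ and count the destinations $(l,c)$ whose path $\cI_l^c$ contains it. Every path ends at a cycle iterate $\xx^c$, and $(k',m')$ only lies on paths ending at $\xx^{m'}$ or $\xx^{m'+1}$. Within the relevant block of at most one full cycle, there are at most $K$ such destinations. Hence
\begin{equation*}
  \sum_{m=0}^{M-1}\sum_{k=0}^{K-1} \Exp{\norm{\yy_{k,i_k}^m - \xx^m}^2} \le K \cdot K \sum_{m=0}^{M-1}\sum_{k=0}^{K-1} \gamma_{i_k}^2 \Exp{\norm{\nabla f_{i_k}(\yy_{k,i_k}^m)}^2} = K^2 Q_M .
\end{equation*}
This counting argument mirrors the one used for \eqref{eq:q-aux-bound}, with $A$ replaced by the count $K$. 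It is the step I expect to need the most care. Paths reaching back into cycle $m-1$ involve gradients from cycle $-1$ when $m=0$, and those indices must be handled. At initialization $\yy_{k,i}^0=\xx^0$, so $\norm{\yy_{k,i_k}^0 - \xx^0}=0$ for the first cycle, and such terms simply vanish or are empty. Paths for cycle $m$ may also reference gradients from cycle $m-1 \le M-2$, which are still included in $Q_M$. Beyond this, the multiplicity count must stay at most $K$.

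Combining the steps gives
\begin{equation*}
  \sum_{m=0}^{M-1}\Exp{\norm{\mathbf{b}_m}^2} \le A L_{\max}^2 K^2 Q_M .
\end{equation*}
Finally, insert the bound of \Cref{lemma:bias-iv}, $Q_M \le 2AM(\sigma^2+\zeta^2) + 4A\rho^2\sum_m \Exp{\norm{\nabla F(\xx^m)}^2}$, which is valid under the hypothesis $\gamma_{\max} \le \frac{1}{2KL\rho}$. This yields exactly $2A^2K^2L_{\max}^2 M(\sigma^2+\zeta^2) + 4A^2K^2L_{\max}^2\rho^2 \sum_m \Exp{\norm{\nabla F(\xx^m)}^2}$, as claimed.
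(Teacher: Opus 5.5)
Your proposal is correct and is essentially the paper's own proof. It applies \Cref{lemma:bias-i} pathwise, then \Cref{lemma:bias-ii}, then the same exchange of summation with multiplicity at most $K$ (the unweighted analogue of the step behind \eqref{eq:q-aux-bound}) to get $K^2 Q_M$, and finally \Cref{lemma:bias-iv}. One wording fix in your boundary remark: only $\yy_{0,i}^0=\xx^0$ holds, not $\yy_{k,i}^0=\xx^0$ for all $k$. What you actually need is that any worker that has not yet reported in cycle $0$ still holds $\xx^0=\xx_0^0$, so its path $\cI_k^0$ is empty and no cycle-$(-1)$ indices ever arise.
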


\begin{proof}
  Similar to the proof of \Cref{lemma:bias-iv}, we first bound
  \begin{eqnarray*}
    \sum_{m=0}^{M-1} \sum_{k=0}^{K-1} \norm{\yy_{k,i_k}^m - \xx^m}^2 & \le & K \sum_{m=0}^{M-1} \sum_{k=0}^{K-1} 1 \times \sum_{(l,c) \in \cI_k^m} \gamma_{i_l}^2 \norm{\nabla f_{i_l}(\yy_{l,i_l}^c)}^2 \\
    & = & K \sum_{m'=0}^{M-1} \sum_{k'=0}^{K-1} \gamma_{i_{k'}}^2 \norm{\nabla f_{i_{k'}}(\yy_{k',i_{k'}}^{m'})}^2 \times \underbrace{\rbr{\sum_{(l,c) : (k',m') \in \cI_l^c} 1}}_{\le K} ,
  \end{eqnarray*}
  where the last factor is bounded by $K$ because, for a fixed update $(k',m')$, all destination updates $(l,c)$ whose paths contain $(k',m')$ must lie within at most one full cycle after $(k',m')$.
  After $K$ updates, every worker has received a model update newer than $(k',m')$, so no later path can contain this fixed update.
  So 
  \begin{equation}
    \label{eq:bias-aux}
    \sum_{m=0}^{M-1} \sum_{k=0}^{K-1} \Exp{\norm{\yy_{k,i_k}^m - \xx^m}^2} \le K^2 Q_M .
  \end{equation}
  Now
  \begin{eqnarray*}
    \sum_{m=0}^{M-1} \Exp{\norm{\mathbf{b}_m}^2} & \overset{\eqref{eq:bias-i}}{\le} & A L_{\max}^2 \sum_{m=0}^{M-1} \sum_{k=0}^{K-1} \Exp{\norm{\yy_{k,i_k}^m - \xx^m}^2} \\
    & \overset{\eqref{eq:bias-aux}}{\le} & A K^2 L_{\max}^2 Q_M .
  \end{eqnarray*}

  If $\gamma_{\max} \le \frac{1}{2 K L \rho}$, we can apply \Cref{lemma:bias-iv} to obtain the desired bound:
  \begin{eqnarray*}
      \sum_{m=0}^{M-1} \Exp{\norm{\mathbf{b}_m}^2} & \le & 2 A^2 K^2 L_{\max}^2 M \rbr{\sigma^2 + \zeta^2} + 4 A^2 K^2 L_{\max}^2 \rho^2 \sum_{m=0}^{M-1} \Exp{\norm{\nabla F(\xx^m)}^2} .
  \end{eqnarray*}
\end{proof}
\begin{lemma}[Cycle Step Decomposition]
  \label{lemma:cycle-step-decomposition}
  Let the worker-specific stepsizes be chosen such that $\gamma_i \propto w_i \tau_i$.
  Then,
    \begin{equation}
      \label{eq:cycle-step-decomposition}
      \mathbf{S}_m \coloneqq \sum_{k=0}^{K-1} \gamma_{i_k} \nabla f_{i_k}(\yy_{k,i_k}^m) = \alpha \nabla F(\xx^m) + \mathbf{b}_m + \boldsymbol{\nu}_m ~,
    \end{equation}

    where $\alpha \coloneqq \sum_{k=0}^{K-1} \gamma_{i_k}$ is the cycle stepsize.
\end{lemma}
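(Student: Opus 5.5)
The decomposition is an exact algebraic identity; no inequalities are needed. The plan is to add and subtract terms inside $\mathbf{S}_m$ and then use the cyclic structure together with the proportionality $\gamma_i \propto w_i\tau_i$.

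First, split each summand of $\mathbf{S}_m$ by adding and subtracting $\nabla F_{i_k}(\yy_{k,i_k}^m)$ and $\nabla F_{i_k}(\xx^m)$:
\begin{equation*}
\gamma_{i_k}\nabla f_{i_k}(\yy_{k,i_k}^m) = \gamma_{i_k}\rbr{\nabla f_{i_k}(\yy_{k,i_k}^m)-\nabla F_{i_k}(\yy_{k,i_k}^m)} + \gamma_{i_k}\rbr{\nabla F_{i_k}(\yy_{k,i_k}^m)-\nabla F_{i_k}(\xx^m)} + \gamma_{i_k}\nabla F_{i_k}(\xx^m).
\end{equation*}
Summing over $k=0,\dots,K-1$, the first group is exactly $\boldsymbol{\nu}_m$ as defined in \eqref{eq:noise-def}, and the second group is exactly $\mathbf{b}_m$ as defined in \eqref{eq:bias-def}. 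It remains to show that $\sum_{k=0}^{K-1}\gamma_{i_k}\nabla F_{i_k}(\xx^m) = \alpha\nabla F(\xx^m)$.

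For this, regroup the sum by worker. Under \Cref{assumption:computation-times}, the cyclic schedule has worker $i$ appear exactly $K_i = \nicefrac{\tau_{\max}}{\tau_i}$ times per cycle. Since $\xx^m$ is fixed throughout the cycle,
\begin{equation*}
\sum_{k=0}^{K-1}\gamma_{i_k}\nabla F_{i_k}(\xx^m) = \sum_{i=1}^n K_i\gamma_i \nabla F_i(\xx^m).
\end{equation*}
Write $\gamma_i = c\, w_i\tau_i$ for some constant $c>0$. Then $K_i\gamma_i = c\,w_i\tau_{\max}$, so the sum equals $c\,\tau_{\max}\sum_i w_i\nabla F_i(\xx^m) = c\,\tau_{\max}\nabla F(\xx^m)$ by \eqref{eq:objective-appendix}. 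Likewise, $\alpha = \sum_k \gamma_{i_k} = \sum_i K_i\gamma_i = c\,\tau_{\max}\sum_i w_i = c\,\tau_{\max}$, because the weights sum to one. The two expressions agree, which proves \eqref{eq:cycle-step-decomposition}.

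The only point requiring care is the worker-count identity: each worker delivers exactly $K_i$ gradients within a cycle. This follows from the cycle definition together with the harmonic-period assumption. Everything else is bookkeeping.
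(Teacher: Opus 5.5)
Your proposal is correct and matches the paper's proof essentially line for line: you write $\gamma_i = c\,w_i\tau_i$, regroup by worker using the $K_i = \tau_{\max}/\tau_i$ occurrences per cycle to obtain $\alpha = c\,\tau_{\max}$ and $\sum_{k} \gamma_{i_k}\nabla F_{i_k}(\xx^m) = \alpha\nabla F(\xx^m)$, and then add and subtract $\nabla F_{i_k}(\yy_{k,i_k}^m)$ and $\nabla F_{i_k}(\xx^m)$ to identify $\mathbf{b}_m$ and $\boldsymbol{\nu}_m$. You are also right that the only substantive input is the per-cycle count $K_i$ from the cyclic schedule, and you make explicit the step $\sum_i w_i = 1$ that the paper uses implicitly.
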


\begin{proof}
  Let $\gamma_i = c w_i \tau_i$ for some $c > 0$.
  Then
  \begin{eqnarray*}
    \alpha & = & \sum_{k=0}^{K-1} \gamma_{i_k} \\
    & = & \sum_{i=1}^n \gamma_i K_i \\
    & = & \sum_{i=1}^n c w_i \tau_i \cdot \frac{\tau_{\max}}{\tau_i} \\
    & = & c \tau_{\max} ~,
  \end{eqnarray*}

  and consequently
  \begin{eqnarray}
    \sum_{k=0}^{K-1} \gamma_{i_k} \nabla F_{i_k}(\xx^m) & = & \sum_{i=1}^n \gamma_i K_i \nabla F_i(\xx^m) \notag \\
    & = & \sum_{i=1}^n c w_i \tau_i \cdot \frac{\tau_{\max}}{\tau_i} \cdot \nabla F_i(\xx^m) \notag \\
    & = & c \tau_{\max} \sum_{i=1}^n w_i \nabla F_i(\xx^m) \notag \\
    & = & \alpha \nabla F(\xx^m) ~. \label{eq:ideal-step}
  \end{eqnarray}

  For the cycle step, we now have
  \begin{eqnarray*}
      \mathbf{S}_m & = & \sum_{k=0}^{K-1} \gamma_{i_k} \nabla f_{i_k}(\yy_{k,i_k}^m) \\
      & = & \sum_{k=0}^{K-1} \gamma_{i_k} \nabla F_{i_k}(\xx^m) \\
      && + \sum_{k=0}^{K-1} \gamma_{i_k} \rbr{\nabla F_{i_k}(\yy_{k,i_k}^m) - \nabla F_{i_k}(\xx^m)} \\
      && \quad + \sum_{k=0}^{K-1} \gamma_{i_k} \rbr{\nabla f_{i_k}(\yy_{k,i_k}^m) - \nabla F_{i_k}(\yy_{k,i_k}^m)} \\
      & \overset{\eqref{eq:ideal-step},\eqref{eq:bias-def},\eqref{eq:noise-def}}{=} & \alpha \nabla F(\xx^m) + \mathbf{b}_m + \boldsymbol{\nu}_m ~.
  \end{eqnarray*}
\end{proof}

\begin{lemma}
\label{thm:near-stationarity-abstract-bias}
    Let the target objective be $F(\xx) = \sum_{i=1}^n w_i F_i(\xx)$, and suppose \Cref{assumption:computation-times,assumption:unbiasedness,assumption:bounded-gradient-variance,assumption:global-function-nc} hold. 
    Assume the worker-specific stepsizes are chosen such that $\gamma_i \propto w_i \tau_i$. 
    If the cycle stepsize $\alpha \coloneqq \sum_{k=0}^{K-1} \gamma_{i_k}$ satisfies $0 < \alpha \le \frac{1}{6 L}$, 
    then, after $M \ge 1$ cycles, the cycle iterates $\xx^m$ generated by \Cref{algo:asgd-iv} satisfy
    \begin{equation*}
        \frac{1}{M} \sum_{m=0}^{M-1} \Exp{\norm{\nabla F(\xx^m)}^2} \le \frac{2 \Delta}{\alpha M} + \frac{3 A L \sigma^2}{\alpha} + \rbr{\frac{2}{\alpha^2} + \frac{3 L}{\alpha}} \frac{1}{M} \sum_{m=0}^{M-1} \Exp{\norm{\mathbf{b}_m}^2} .
    \end{equation*}
\end{lemma}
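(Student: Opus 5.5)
The plan is to apply the descent lemma \eqref{eq:descent-lemma} directly to consecutive cycle iterates. Since $\xx^{m+1} = \xx^m - \mathbf{S}_m$, \Cref{lemma:cycle-step-decomposition} (valid because $\gamma_i \propto w_i \tau_i$) gives $\xx^{m+1} = \xx^m - \alpha \nabla F(\xx^m) - \mathbf{b}_m - \boldsymbol{\nu}_m$. Taking $\ExpSub{m}{\cdot}$ in \eqref{eq:descent-lemma} yields
\begin{equation*}
  \ExpSub{m}{F(\xx^{m+1})} \le F(\xx^m) - \alpha \norm{\nabla F(\xx^m)}^2 - \ExpSub{m}{\abr{\nabla F(\xx^m), \mathbf{b}_m}} - \ExpSub{m}{\abr{\nabla F(\xx^m), \boldsymbol{\nu}_m}} + \frac{L}{2} \ExpSub{m}{\norm{\mathbf{S}_m}^2} .
\end{equation*}
Because $\nabla F(\xx^m)$ is $\cF^m$-measurable, the noise inner product vanishes by \Cref{lemma:noise-term}. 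No conditional expectation of $\mathbf{b}_m$ is needed, since we will bound it pathwise.

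For the bias cross term, I will use \eqref{eq:young-fancy} with the specific weight $s = \alpha/2$:
\begin{equation*}
  \abr{\nabla F(\xx^m), -\mathbf{b}_m} \le \frac{\alpha}{4}\norm{\nabla F(\xx^m)}^2 + \frac{1}{\alpha}\norm{\mathbf{b}_m}^2 .
\end{equation*}
For the quadratic term, I will apply \eqref{eq:squared-sum} with three summands and then \Cref{lemma:noise-term}:
\begin{equation*}
  \ExpSub{m}{\norm{\mathbf{S}_m}^2} \le 3\alpha^2 \norm{\nabla F(\xx^m)}^2 + 3 \ExpSub{m}{\norm{\mathbf{b}_m}^2} + 3 A \sigma^2 .
\end{equation*}
The stepsize condition $\alpha \le \frac{1}{6L}$ is then used exactly once, to get $\frac{3L}{2}\alpha^2 \le \frac{\alpha}{4}$. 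The total coefficient of $\norm{\nabla F(\xx^m)}^2$ is therefore at most $-\alpha + \frac{\alpha}{4} + \frac{\alpha}{4} = -\frac{\alpha}{2}$. This leaves
\begin{equation*}
  \frac{\alpha}{2}\norm{\nabla F(\xx^m)}^2 \le F(\xx^m) - \ExpSub{m}{F(\xx^{m+1})} + \rbr{\frac{1}{\alpha} + \frac{3L}{2}} \ExpSub{m}{\norm{\mathbf{b}_m}^2} + \frac{3 L A \sigma^2}{2} .
\end{equation*}

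Finally, I will multiply by $2/\alpha$, take total expectations using \eqref{eq:tower-property}, and sum over $m = 0, \dots, M-1$. The function values telescope to $F(\xx^0) - \Exp{F(\xx^M)} \le \Delta$ by \eqref{ass:lower-bound}. Dividing by $M$ gives precisely $\frac{2\Delta}{\alpha M} + \frac{3AL\sigma^2}{\alpha} + \rbr{\frac{2}{\alpha^2} + \frac{3L}{\alpha}}\frac{1}{M}\sum_m \Exp{\norm{\mathbf{b}_m}^2}$.

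The only delicate point is getting these sharp constants. That requires two choices: using the three-term split (not a two-term split that groups $\mathbf{b}_m + \boldsymbol{\nu}_m$), and choosing the Young weight $s = \alpha/2$. These are what make the bias coefficient come out as exactly $2/\alpha^2 + 3L/\alpha$. I will also double-check that no correlation between $\mathbf{b}_m$ and $\boldsymbol{\nu}_m$ enters the argument. It does not: the inner product with $\boldsymbol{\nu}_m$ only involves the $\cF^m$-measurable vector $\nabla F(\xx^m)$, and $\norm{\mathbf{S}_m}^2$ is bounded without any cross terms.
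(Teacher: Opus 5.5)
Your proposal is correct and follows the paper's proof essentially step for step. Both arguments apply the descent lemma to the cycle iterates and use the decomposition from \Cref{lemma:cycle-step-decomposition}. Both apply Young's inequality with the split $\frac{\alpha}{4}$ versus $\frac{1}{\alpha}$, bound $\norm{\mathbf{S}_m}^2$ by the three-term squared sum, and use $\alpha \le \frac{1}{6L}$ to reach the $\frac{\alpha}{2}$ coefficient before telescoping. The only cosmetic difference is that you apply Young's inequality pathwise inside the conditional expectation, whereas the paper applies it to $\mathbb{E}_m[\mathbf{b}_m]$ and then invokes Jensen's inequality; both give the same bound.
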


\begin{proof}
  In \Cref{lemma:cycle-step-decomposition}, we defined the cycle step $\mathbf{S}_m$, so that for the cycle iterates of \Cref{algo:asgd-iv}
  $\mathbf{S}_m = \xx^m - \xx^{m+1}$ holds.

  By $L$-smoothness of the objective function $F$ (\Cref{assumption:global-function-nc}), we obtain the standard bound \citep{nesterov_lectures_2018}
  \begin{eqnarray*}
      F(\xx^{m+1}) & \le & F(\xx^m) + \abr{\nabla F(\xx^m), \xx^{m+1} - \xx^m} + \frac{L}{2} \norm{\xx^{m+1} - \xx^m}^2 \\
      & = & F(\xx^m) - \abr{\nabla F(\xx^m), \mathbf{S}_m} + \frac{L}{2} \norm{\mathbf{S}_m}^2 .
  \end{eqnarray*}
  Taking expectations conditional on information available at the beginning of cycle $m$, we find
  \begin{align}
    \label{eq:descent}
      \ExpSub{m}{F(\xx^{m+1})} & \le F(\xx^m) - \abr{\nabla F(\xx^m), \ExpSub{m}{\mathbf{S}_m}} + \frac{L}{2} \ExpSub{m}{\norm{\mathbf{S}_m}^2} \notag \\
      & \overset{\eqref{eq:cycle-step-decomposition}}{=} F(\xx^m) - \abr{\nabla F(\xx^m), \alpha \nabla F(\xx^m) + \ExpSub{m}{\mathbf{b}_m} + \ExpSub{m}{\boldsymbol{\nu}_m}} + \frac{L}{2} \ExpSub{m}{\norm{\mathbf{S}_m}^2} \notag \\
      & = F(\xx^m) - \alpha \norm{\nabla F(\xx^m)}^2 - \abr{\nabla F(\xx^m), \ExpSub{m}{\mathbf{b}_m}} + \frac{L}{2} \ExpSub{m}{\norm{\mathbf{S}_m}^2} ,
  \end{align}
  where we used \Cref{lemma:noise-term} to drop $\ExpSub{m}{\boldsymbol{\nu}_m} = 0$.
  We bound the inner product term
  \begin{align}
    \label{eq:aux-1}
      - \abr{\nabla F(\xx^m), \ExpSub{m}{\mathbf{b}_m}} & \overset{\eqref{eq:cauchy-schwarz}}{\le} \norm{\nabla F(\xx^m)} \norm{\ExpSub{m}{\mathbf{b}_m}} \notag \\
      & \overset{\eqref{eq:young-fancy}}{\le} \frac{\alpha}{4} \norm{\nabla F(\xx^m)}^2 + \frac{1}{\alpha} \norm{\ExpSub{m}{\mathbf{b}_m}}^2 \notag \\
      & \overset{\eqref{eq:jensen}}{\le} \frac{\alpha}{4} \norm{\nabla F(\xx^m)}^2 + \frac{1}{\alpha} \ExpSub{m}{\norm{\mathbf{b}_m}^2} .
  \end{align}

  Now,
  \begin{eqnarray*}
      \norm{\mathbf{S}_m}^2 & \overset{\eqref{eq:cycle-step-decomposition}}{=} & \norm{\alpha \nabla F(\xx^m) + \mathbf{b}_m + \boldsymbol{\nu}_m}^2 \\
      & \overset{\eqref{eq:squared-sum}}{\le} & 3 \alpha^2 \norm{\nabla F(\xx^m)}^2 + 3 \norm{\mathbf{b}_m}^2 + 3 \norm{\boldsymbol{\nu}_m}^2 ,
  \end{eqnarray*}

  and thus
  \begin{equation}
    \label{eq:aux-2}
      \ExpSub{m}{\norm{\mathbf{S}_m}^2} \le 3 \alpha^2 \norm{\nabla F(\xx^m)}^2 + 3 \ExpSub{m}{\norm{\mathbf{b}_m}^2} + 3 A \sigma^2 ,
  \end{equation}

  utilizing the second part of \Cref{lemma:noise-term}.
  
  Using \eqref{eq:aux-1} and \eqref{eq:aux-2} to further bound \eqref{eq:descent}, we obtain
  \begin{eqnarray*}
      \ExpSub{m}{F(\xx^{m+1})} & \overset{\eqref{eq:descent}}{\le} & F(\xx^m) - \alpha \norm{\nabla F(\xx^m)}^2 - \abr{\nabla F(\xx^m), \ExpSub{m}{\mathbf{b}_m}} + \frac{L}{2} \ExpSub{m}{\norm{\mathbf{S}_m}^2} \\
      & \overset{\eqref{eq:aux-1},\eqref{eq:aux-2}}{\le} & F(\xx^m) - \alpha \norm{\nabla F(\xx^m)}^2 \\
      && + \frac{\alpha}{4} \norm{\nabla F(\xx^m)}^2 + \frac{1}{\alpha} \norm{\ExpSub{m}{\mathbf{b}_m}}^2 \\
      && \quad + \frac{L}{2} \rbr{3 \alpha^2 \norm{\nabla F(\xx^m)}^2 + 3 \ExpSub{m}{\norm{\mathbf{b}_m}^2} + 3 A \sigma^2} \\
      & \overset{\eqref{eq:jensen}}{\le} & F(\xx^m) - \rbr{\frac{3}{4} \alpha - \frac{3 L}{2} \alpha^2} \norm{\nabla F(\xx^m)}^2 \\
      && + \frac{3 A L \sigma^2}{2} + \rbr{\frac{1}{\alpha} + \frac{3 L}{2}} \ExpSub{m}{\norm{\mathbf{b}_m}^2} .
  \end{eqnarray*}

  For $\alpha \le \frac{1}{6 L}$, we have $\rbr{\frac{3}{4} \alpha - \frac{3L}{2} \alpha^2} \ge \frac{\alpha}{2}$ and thus
  \begin{eqnarray*}
      \ExpSub{m}{F(\xx^{m+1})} & \le & F(\xx^m) - \frac{\alpha}{2} \norm{\nabla F(\xx^m)}^2 + \frac{3 A L \sigma^2}{2} + \rbr{\frac{1}{\alpha} + \frac{3L}{2}} \ExpSub{m}{\norm{\mathbf{b}_m}^2} .
  \end{eqnarray*}

  Rearranging gives
  \begin{eqnarray*}
      \norm{\nabla F(\xx^m)}^2 & \le & \frac{2 \rbr{F(\xx^m) - \ExpSub{m}{F(\xx^{m+1})}}}{\alpha} + \frac{3 A L \sigma^2}{\alpha} + \rbr{\frac{2}{\alpha^2} + \frac{3 L}{\alpha}} \ExpSub{m}{\norm{\mathbf{b}_m}^2} .
  \end{eqnarray*}

  Taking unconditional expectation, averaging, and applying the bound $F(\xx^0) - \Exp{F(\xx^{M})} \le F(\xx^0) - F^* = \Delta$ (\Cref{assumption:global-function-nc}) gives
  \begin{eqnarray*}
      \frac{1}{M} \sum_{m=0}^{M-1} \Exp{\norm{\nabla F(\xx^m)}^2} & \le & \frac{2 \sum_{m=0}^{M-1} \Exp{F(\xx^m) - F(\xx^{m+1})}}{\alpha M} \\
      && + \frac{3 A L \sigma^2}{\alpha} + \rbr{\frac{2}{\alpha^2} + \frac{3L}{\alpha}} \frac{1}{M} \sum_{m=0}^{M-1} \Exp{\ExpSub{m}{\norm{\mathbf{b}_m}^2}} \\
      & \le & \frac{2 \Delta}{\alpha M} + \frac{3 A L \sigma^2}{\alpha} + \rbr{\frac{2}{\alpha^2} + \frac{3L}{\alpha}} \frac{1}{M} \sum_{m=0}^{M-1} \Exp{\norm{\mathbf{b}_m}^2} .
  \end{eqnarray*}
\end{proof}

\begin{restatable}[Convergence to a Weighted Average]{theorem}{MasterBound}
  \label{thm:convergence-bound-general}
    Let the target objective be $F(\xx) = \sum_{i=1}^n w_i F_i(\xx)$, and suppose \Cref{assumption:computation-times,assumption:unbiasedness,assumption:bounded-gradient-variance,assumption:global-function-nc,assumption:local-f-smooth,assumption:bounded-heterogeneity} hold. 
    Assume the worker-specific stepsizes are chosen such that $\gamma_i \propto w_i \tau_i$. 
    If the maximum stepsize satisfies $0 < \gamma_{\max} \le \frac{1}{5 K L_{\max} \rho}$ 
    and the cycle stepsize $\alpha \coloneqq \sum_{k=0}^{K-1} \gamma_{i_k}$ satisfies $0 < \alpha \le \frac{1}{6 L}$, 
    then, after $M \ge 1$ cycles, the cycle iterates $\xx^m$ generated by \Cref{algo:asgd-iv} satisfy
    \begin{equation*}
        \frac{1}{M} \sum_{m=0}^{M-1} \Exp{\norm{\nabla F(\xx^m)}^2} \le c_0 \cdot \frac{\Delta}{\alpha M} + c_1 \cdot \frac{A}{\alpha} L \sigma^2 + c_2 \cdot \rbr{\frac{A}{\alpha}}^2 K^2 L_{\max}^2 \rbr{\sigma^2 + \zeta^2} ,
    \end{equation*}

    for some constants $c_0, c_1, c_2 > 0$, where $A \coloneqq \sum_{k=0}^{K-1} \gamma_{i_k}^2$.
\end{restatable}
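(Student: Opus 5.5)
The plan is to combine \Cref{thm:near-stationarity-abstract-bias} with \Cref{lemma:bias-term} and absorb the gradient-norm term that the bias bound reintroduces.

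First, I check that the stepsize hypotheses of both lemmas hold. The condition $\alpha \le \frac{1}{6L}$ is assumed directly. For \Cref{lemma:bias-term} we need $\gamma_{\max} \le \frac{1}{2KL\rho}$. Since $F = \sum_i w_i F_i$ is a convex combination, $L \le \sum_i w_i L_i \le L_{\max}$. Hence $\frac{1}{5KL_{\max}\rho} \le \frac{1}{2KL\rho}$, and this condition is implied by the hypothesis.

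Next, I plug the bias bound into the lemma. With $G \coloneqq \frac{1}{M}\sum_{m=0}^{M-1} \Exp{\norm{\nabla F(\xx^m)}^2}$, \Cref{thm:near-stationarity-abstract-bias} and \Cref{lemma:bias-term} give
\begin{equation*}
G \le \frac{2\Delta}{\alpha M} + \frac{3AL\sigma^2}{\alpha} + \rbr{\frac{2}{\alpha^2} + \frac{3L}{\alpha}}\rbr{2A^2K^2L_{\max}^2(\sigma^2+\zeta^2) + 4A^2K^2L_{\max}^2\rho^2 G}.
\end{equation*}
Because $\alpha \le \frac{1}{6L}$, we have $\frac{3L}{\alpha} \le \frac{1}{2\alpha^2}$, so the prefactor is at most $\frac{5}{2\alpha^2}$. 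The coefficient of $G$ on the right is then at most $10\rbr{\frac{A}{\alpha}}^2K^2L_{\max}^2\rho^2$.

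The main obstacle is showing that this coefficient is a constant strictly below $1$, so that $G$ can be moved to the left. The route is the bound $A = \sum_k \gamma_{i_k}^2 \le \gamma_{\max}\sum_k \gamma_{i_k} = \gamma_{\max}\alpha$, which gives $\frac{A}{\alpha} \le \gamma_{\max}$. The coefficient is therefore at most $10\gamma_{\max}^2K^2L_{\max}^2\rho^2 \le \frac{10}{25} = \frac{2}{5}$. Rearranging yields $\frac{3}{5}G \le \frac{2\Delta}{\alpha M} + 3\frac{A}{\alpha}L\sigma^2 + 5\rbr{\frac{A}{\alpha}}^2K^2L_{\max}^2(\sigma^2+\zeta^2)$. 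This gives the claim with $c_0 = \frac{10}{3}$, $c_1 = 5$ and $c_2 = \frac{25}{3}$.

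Two edge cases need separate handling. If $\rho = 0$, the condition on $\gamma_{\max}$ is vacuous and the $G$-term vanishes anyway. If $L_{\max} = 0$, the bias is identically zero.
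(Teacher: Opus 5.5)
Your proof is correct and follows the paper's argument: you plug \Cref{lemma:bias-term} into \Cref{thm:near-stationarity-abstract-bias}, bound the prefactor by $\frac{5}{2\alpha^2}$ using $\alpha \le \frac{1}{6L}$, and absorb the gradient-norm term via $A \le \gamma_{\max}\alpha$. The differences are small refinements. You explicitly verify the hypothesis $\gamma_{\max} \le \frac{1}{2KL\rho}$ of \Cref{lemma:bias-term} through $L \le L_{\max}$, which the paper leaves implicit. You also use the sharper coefficient bound $\frac{2}{5}$ instead of $\frac{1}{2}$, which yields constants $\frac{10}{3}, 5, \frac{25}{3}$ rather than the paper's $4, 6, 10$.
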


\begin{proof}
    Plug the bound on the cycle bias from \Cref{lemma:bias-term} into \Cref{thm:near-stationarity-abstract-bias} to obtain
    \begin{eqnarray*}
        \frac{1}{M} \sum_{m=0}^{M-1} \Exp{\norm{\nabla F(\xx^m)}^2} & \le & \frac{2 \Delta}{\alpha M} \\
        && + \frac{3 A L \sigma^2}{\alpha} \\
        && \quad + \rbr{\frac{2}{\alpha^2} + \frac{3L}{\alpha}} 2 A^2 K^2 L_{\max}^2 \rbr{\sigma^2 + \zeta^2} \\
        && \qquad + \rbr{\frac{2}{\alpha^2} + \frac{3L}{\alpha}} 4 A^2 K^2 L_{\max}^2 \rho^2 \frac{1}{M} \sum_{m=0}^{M-1} \Exp{\norm{\nabla F(\xx^m)}^2} .
    \end{eqnarray*}

    Now, if $\alpha \le \frac{1}{6 L}$, then also $\rbr{\frac{2}{\alpha^2} + \frac{3L}{\alpha}} \le \frac{2.5}{\alpha^2}$.
    Grouping terms involving the gradient norm on the left gives
    \begin{eqnarray*}
        \rbr{1 - \frac{10 A^2 K^2 L_{\max}^2 \rho^2}{\alpha^2}} \frac{1}{M} \sum_{m=0}^{M-1} \Exp{\norm{\nabla F(\xx^m)}^2} & \le & \frac{2 \Delta}{\alpha M} + \frac{3 A L \sigma^2}{\alpha} + \frac{5 A^2 K^2 L_{\max}^2 \rbr{\sigma^2 + \zeta^2}}{\alpha^2}
    \end{eqnarray*}

    We require the first factor on the left-hand side to exceed $\nicefrac{1}{2}$, or, equivalently, $\frac{10 A^2 K^2 L_{\max}^2 \rho^2}{\alpha^2} \le \frac{1}{2}$.
    Noting that 
    $$
      A = \sum_{k=0}^{K-1} \gamma_{i_k}^2 \le \gamma_{\max} \sum_{k=0}^{K-1} \gamma_{i_k} = \gamma_{\max} \alpha,
    $$
    this requirement is met if $10 \gamma_{\max}^2 K^2 L_{\max}^2 \rho^2 \le \frac{1}{2}$, or, if 
    $$
      \gamma_{\max} \le \frac{1}{5 K L_{\max} \rho} \le \frac{1}{\sqrt{20} K L_{\max} \rho}.
    $$

    Thus, for $\gamma_{\max} \le \frac{1}{5 K L_{\max} \rho}$, we find
    \begin{equation}
      \label{eq:masterbound}
        \frac{1}{M} \sum_{m=0}^{M-1} \Exp{\norm{\nabla F(\xx^m)}^2} \le 4 \frac{\Delta}{\alpha M} + 6 \frac{A}{\alpha} L \sigma^2 + 10 \frac{A^2}{\alpha^2} K^2 L_{\max}^2 \rbr{\sigma^2 + \zeta^2} .
    \end{equation}
\end{proof}


\subsection{Proof of \Cref{thm:r-asgd-bound} and \Cref{corollary:r-asgd-time-complexity}} 
\label{sub:proof_of_thm:r-asgd-bound_and_corollary:r-asgd-time-complexity}

\RASGDIterationBound*

\begin{proof}
  This is a special case of \Cref{thm:convergence-bound-general} with stepsizes chosen according to \eqref{eq:learning-rates-equal}.

  The cycle stepsize now becomes
  \begin{equation*}
    \alpha = \sum_{k=0}^{K-1} \gamma_{i_k} = \sum_{i=1}^n \gamma_i \cdot K_i = \sum_{i=1}^n \gamma \tau_i \frac{\tau_H}{n \tau_{\max}} \cdot \frac{\tau_{\max}}{\tau_i} = \gamma \tau_H ,
  \end{equation*}
  the sum of squared stepsizes
  \begin{equation*}
    A = \sum_{k=0}^{K-1} \gamma_{i_k}^2
    = \sum_{i=1}^n \gamma_i^2 \cdot K_i
    = \sum_{i=1}^n \gamma^2 \tau_i^2 \frac{\tau_H^2}{n^2 \tau_{\max}^2} \cdot \frac{\tau_{\max}}{\tau_i}
    = \gamma^2 \frac{\tau_H^2 \tau_A}{n \tau_{\max}} 
    = \gamma^2 \frac{\tau_H \tau_A}{K} ~,
  \end{equation*}
  and the maximum stepsize
  \begin{equation*}
    \gamma_{\max} = \gamma \tau_{\max} \frac{\tau_H}{n \tau_{\max}}
    = \gamma \frac{\tau_H}{n}
    = \gamma \frac{\tau_{\max}}{K} ~,
  \end{equation*}

  where we used the identity $\nicefrac{K}{n} = \nicefrac{\tau_{\max}}{\tau_H}$ (\Cref{assumption:computation-times}).

  Therefore, $\frac{A}{\alpha} = \frac{\gamma \tau_A}{K}$.
  Plugging this into \eqref{eq:masterbound} gives
  \begin{equation*}
    \frac{1}{M} \sum_{m=0}^{M-1} \Exp{\norm{\nabla F(\xx^m)}^2} \le 4 \frac{\Delta}{\alpha M} + 6 \frac{\gamma \tau_A L \sigma^2}{K} + 10 \gamma^2 \tau_A^2 L_{\max}^2 \rbr{\sigma^2 + \zeta^2} .
  \end{equation*}

  The bound on the cycle stepsize translates to 
  \begin{equation*}
    \alpha \le \frac{1}{6 L} \Leftrightarrow \gamma \le \frac{1}{6 L \tau_H} ~,
  \end{equation*}
  and that on the maximum stepsize to
  \begin{equation*}
    \gamma_{\max} \le \frac{1}{5 K L_{\max} \rho} \Leftrightarrow \gamma \le \frac{1}{5 L_{\max} \rho \tau_{\max}} ~.
  \end{equation*}
\end{proof}

\RASGDTimeComplexity*

\begin{proof}
  Under the assumptions of \Cref{thm:r-asgd-bound}, we have
  \begin{eqnarray*}
        \frac{1}{M} \sum_{m=0}^{M-1} \Exp{\norm{\nabla F(\xx^m)}^2} & \le & \underbrace{c_0 \cdot \frac{\Delta}{\gamma \tau_H M}}_{T_0} + \underbrace{c_1 \cdot \frac{\gamma \tau_A L \sigma^2}{K}}_{T_1} + \underbrace{c_2 \cdot \gamma^2 \tau_A^2 L_{\max}^2 \rbr{\sigma^2 + \zeta^2}}_{T_2} .
    \end{eqnarray*}

  To achieve $T_1 \le \frac{\varepsilon}{4}$, we require
  $$
    \gamma \le \frac{1}{4 c_1} \frac{\varepsilon K}{L \sigma^2 \tau_A} ~.
  $$
  To achieve $T_2 \le \frac{\varepsilon}{4}$, we likewise require 
  $$
    \gamma \le \frac{1}{\sqrt{4 c_2}} \frac{\sqrt{\varepsilon}}{L_{\max} \tau_A \sqrt{\sigma^2 + \zeta^2}}.
  $$

  The largest $\gamma$ satisfying these constraints, as well as those in \Cref{thm:r-asgd-bound}, is
  \begin{equation*}
    \hat\gamma \coloneqq \min\nbr{\frac{1}{4 c_1} \frac{\varepsilon K}{L \sigma^2 \tau_A}, \frac{1}{\sqrt{4 c_2}} \frac{\sqrt{\varepsilon}}{L_{\max} \tau_A \sqrt{\sigma^2 + \zeta^2}}, \frac{1}{5 L_{\max} \rho \tau_{\max}}, \frac{1}{6 L \tau_H}} .
  \end{equation*}

  To achieve $T_0 \le \frac{\varepsilon}{2}$ and bound the right-hand side of \Cref{thm:r-asgd-bound} by $\varepsilon$, the cycle count must satisfy $M \ge \frac{2 c_0 \Delta}{\varepsilon \hat\gamma \tau_H}$.
  Ignoring constant factors, this cycle bound can be expressed as
  \begin{equation*}
      M \gtrsim \frac{\Delta L \sigma^2}{\varepsilon^2 K \tau_H} \tau_A + \frac{\Delta L_{\max} \sqrt{\sigma^2 + \zeta^2}}{\varepsilon^{1.5} \tau_H} \tau_A + \frac{\Delta L_{\max} \rho}{\varepsilon \tau_H} \tau_{\max} + \frac{\Delta L}{\varepsilon} .
  \end{equation*}

  To obtain the wall-clock time complexity, we multiply the required number of cycles by the wall-clock duration of a single cycle.
  Under \Cref{assumption:computation-times}, a cycle is completed after $\tau_{\max} = \frac{K \tau_H}{n}$ time units.
  Thus, after
  \begin{equation*}
    \mathcal{O}\rbr{\frac{\Delta L \sigma^2}{n \varepsilon^2} \tau_A + \frac{\Delta L_{\max} \sqrt{\sigma^2 + \zeta^2}}{\varepsilon^{1.5}} \frac{\tau_{\max}}{\tau_H} \tau_A + \frac{\Delta L_{\max} \rho}{\varepsilon} \frac{\tau_{\max}}{\tau_H} \tau_{\max} + \frac{\Delta L}{\varepsilon} \tau_{\max}}
  \end{equation*}

  time units, we achieve the target accuracy.
\end{proof}


\subsection{Proof of \Cref{thm:v-asgd-bound} and \Cref{corollary:v-asgd-time-complexity}} 
\label{sub:proof_of_thm:v-asgd-bound_and_corollary:v-asgd-time-complexity}

\VASGDIterationBound*

\begin{proof}
  This is a special case of \Cref{thm:convergence-bound-general} with equal stepsizes $\gamma_i = \nicefrac{\gamma}{K}$.

  The cycle stepsize now becomes
  \begin{equation*}
    \alpha = \sum_{k=0}^{K-1} \gamma_{i_k} = K \cdot \frac{\gamma}{K} = \gamma ,
  \end{equation*}
  the sum of squared stepsizes
  \begin{equation*}
    A = \sum_{k=0}^{K-1} \gamma_{i_k}^2
    = K \cdot \rbr{\frac{\gamma}{K}}^2 = \frac{\gamma^2}{K},
  \end{equation*}
  and the maximum stepsize
  \begin{equation*}
    \gamma_{\max} = \frac{\gamma}{K} .
  \end{equation*}

  Therefore, $\frac{A}{\alpha} = \frac{\gamma}{K}$.
  Plugging this into \eqref{eq:masterbound} gives
  \begin{equation*}
    \frac{1}{M} \sum_{m=0}^{M-1} \Exp{\norm{\nabla \tilde{F}(\xx^m)}^2} \le 4 \frac{\tilde\Delta}{\gamma M} + 6 \frac{\gamma \tilde{L} \sigma^2}{K} + 10 \gamma^2 L_{\max}^2 \rbr{\sigma^2 + \tilde\zeta^2} .
  \end{equation*}

  The bound on the cycle stepsize translates to 
  \begin{equation*}
    \alpha \le \frac{1}{6 \tilde{L}} \Leftrightarrow \gamma \le \frac{1}{6 \tilde{L}},
  \end{equation*}
  and that on the maximum stepsize
  \begin{equation*}
    \gamma_{\max} \le \frac{1}{5 K L_{\max} \tilde\rho} \Leftrightarrow \gamma \le \frac{1}{5 L_{\max} \tilde\rho} .
  \end{equation*}
\end{proof}

\VASGDTimeComplexity*

\begin{proof}
  Under the assumptions of \Cref{thm:v-asgd-bound}, we have
  \begin{eqnarray*}
        \frac{1}{M} \sum_{m=0}^{M-1} \Exp{\norm{\nabla \tilde{F}(\xx^m)}^2} & \le & \underbrace{c_0 \cdot \frac{\tilde\Delta}{\gamma M}}_{T_0} + \underbrace{c_1 \cdot \frac{\gamma \tilde{L} \sigma^2}{K}}_{T_1} + \underbrace{c_2 \cdot \gamma^2 L_{\max}^2 \rbr{\sigma^2 + \tilde\zeta^2}}_{T_2} .
    \end{eqnarray*}

  To achieve $T_1 \le \frac{\varepsilon}{4}$, we require $\gamma \le \frac{1}{4 c_1} \frac{\varepsilon K}{\tilde{L} \sigma^2}$.
  To achieve $T_2 \le \frac{\varepsilon}{4}$, we likewise require $\gamma \le \frac{1}{\sqrt{4 c_2}} \frac{\sqrt{\varepsilon}}{L_{\max} \sqrt{\sigma^2 + \tilde\zeta^2}}$.

  The largest $\gamma$ satisfying these constraints, as well as those in \Cref{thm:v-asgd-bound}, is
  \begin{equation*}
    \hat\gamma \coloneqq \min\nbr{\frac{1}{4 c_1} \frac{K \varepsilon}{\tilde{L} \sigma^2}, \frac{1}{\sqrt{4 c_2}} \frac{\sqrt{\varepsilon}}{L_{\max} \sqrt{\sigma^2 + \tilde\zeta^2}}, \frac{1}{5 L_{\max} \tilde\rho}, \frac{1}{6 \tilde{L}}} .
  \end{equation*}

  To achieve $T_0 \le \frac{\varepsilon}{2}$ and bound the right-hand side of \Cref{thm:v-asgd-bound} by $\varepsilon$, the cycle count must satisfy $M \ge \frac{2 c_0 \tilde\Delta}{\varepsilon \hat\gamma}$.
  Ignoring constant factors, this cycle bound can be expressed as
  \begin{equation*}
      M \gtrsim \frac{\tilde\Delta \tilde{L} \sigma^2}{\varepsilon^2 K} + \frac{\tilde\Delta L_{\max} \sqrt{\sigma^2 + \tilde\zeta^2}}{\varepsilon^{1.5}} + \frac{\tilde\Delta L_{\max} \tilde\rho}{\varepsilon} + \frac{\tilde\Delta \tilde{L}}{\varepsilon} .
  \end{equation*}

  To obtain the wall-clock time complexity, we multiply the required number of cycles by the wall-clock duration of a single cycle.
  Under \Cref{assumption:computation-times}, a cycle is completed after $\tau_{\max} = \frac{K \tau_H}{n}$ time units.
  Thus, after
  \begin{equation*}
    \mathcal{O}\rbr{\frac{\tilde\Delta \tilde{L} \sigma^2}{n \varepsilon^2} \tau_H + \frac{\tilde\Delta L_{\max} \sqrt{\sigma^2 + \tilde\zeta^2}}{\varepsilon^{1.5}} \tau_{\max} + \frac{\tilde\Delta L_{\max} \tilde\rho}{\varepsilon} \tau_{\max} + \frac{\tilde\Delta \tilde{L}}{\varepsilon} \tau_{\max}}
  \end{equation*}

  time units, we achieve the target accuracy.
\end{proof}



\section{On the Necessity of Bounding Data Heterogeneity}
\label{sec:on_the_bounded_data_heterogeneity_assumption}

In this section, we present a simple counterexample to demonstrate that without a bounded heterogeneity assumption (such as \Cref{assumption:bounded-heterogeneity}), \rasgd with worker-wise constant stepsizes cannot guarantee convergence to an $\varepsilon$-stationary point.

Consider a two-worker setup with local objective functions
\begin{equation*}
    F_1(x) = \frac{x^2}{2} - cx, \quad F_2(x) = \frac{x^2}{2} + cx ~.
\end{equation*}

The global objective is the equal-weighted average $F(x) = \frac{x^2}{2}$, with $\nabla F(x) = x$ and a unique stationary point at $x^* = 0$.
The constant $c > 0$ captures the degree of data heterogeneity, $c = \norm{\nabla F_i(x) - \nabla F(x)}$.

Suppose $\tau_1 = 1, \tau_2 = 2$, so worker 1 takes two steps of size $\gamma$ per cycle while worker 2 takes only one of size $2 \gamma$.
Assume the cycle begins at a point $\abs{x_0} < 1$ so that $\norm{\nabla F(x_0)} < 1$.
First, worker 1 delivers the gradient computed at the initial model, $\nabla F_1(x_0) = x_0 - c$, and the server updates the model:
\begin{equation*}
    x_1^0 = x_0 - \gamma(x_0 - c) = x_0 (1 - \gamma) + \gamma c ~.
\end{equation*}

Worker 1 delivers their second update, $\nabla F_1(x_1^0) = x_1^0 - c$, and the server updates the model again:
\begin{equation*}
    x_2^0 = x_1^0 - \gamma(x_1^0 - c) = x_1^0(1-\gamma) + \gamma c = x_0(1-\gamma)^2 + c(2\gamma - \gamma^2) ~.
\end{equation*}

Worker 2 delivers their stale gradient, $\nabla F_2(x_0) = x_0 + c$, and the server updates the model:
\begin{equation*}
    x^1 = x_3^0 = x_2^0 - 2 \gamma (x_0 + c) = x_0(1 - 4\gamma + \gamma^2) - \gamma^2 c ~.
\end{equation*}

Thus, for any fixed $\gamma > 0$, the gradient norm at the end of the cycle can be arbitrarily large if there is no restrictions on $c$.
In particular, if 
\begin{equation*}
  c > \frac{1 + \left| x_0 (1 - 4\gamma + \gamma^2) \right|}{\gamma^2} ~,
\end{equation*}

then $\norm{\nabla F(x^1)} > 1$.

\paragraph{Relaxing the Assumption} 
\label{par:relaxing_the_assumption}

\citet{koloskova_sharper_2022,mishchenko_asynchronous_2022} impose the constraint 
\begin{equation}
  \label{eq:bounded-heterogeneity-simple}
  \norm{\nabla F_i(\xx) - \nabla F(\xx)}^2 \le \xi^2 , \quad \forall \xx \in \R^d, \quad \forall i ~.
\end{equation}

This assumption implies our \Cref{assumption:bounded-heterogeneity} and is therefore more restrictive.
Suppose \eqref{eq:bounded-heterogeneity-simple} holds, then
\begin{eqnarray*}
  \norm{\nabla F_i(\xx)}^2 & \overset{\eqref{eq:squared-sum}}{\le} & 2 \norm{\nabla F_i(\xx) - \nabla F(\xx)}^2 + 2 \norm{\nabla F(\xx)}^2 \\
  & \overset{\eqref{eq:bounded-heterogeneity-simple}}{\le} & 2 \xi^2 + 2 \norm{\nabla F(\xx)}^2 ,
\end{eqnarray*}

so \Cref{assumption:bounded-heterogeneity} also holds for $\zeta^2 = 2 \xi^2$ and $\rho^2 = 2$.

Note that \eqref{eq:bounded-heterogeneity-simple} is quite restrictive as it does not even allow for quadratics with different Hessians and smoothness constants.
To illustrate, consider two simple quadratics, $F_1(x) = \frac{a}{2} x^2$ and $F_2(x) = \frac{b}{2} x^2$. 
The deviation for the first worker is $\norm{\nabla F_1(x) - \nabla F(x)} = | \frac{a-b}{2} | |x|$. 
For this term to remain bounded by a constant $\xi^2$ for all $x \in \R$, we must have $a=b$. 

Our relaxed \Cref{assumption:bounded-heterogeneity} accommodates such cases by allowing the local gradient bounds to scale proportionally with the global gradient norm.


\section{Wall-Clock Time Complexities} 
\label{sec:wall_clock_time_complexities_of_concurrent_and_delay_adaptive_asgd}

\subsection{Concurrent ASGD} 
\label{sub:concurrent_asgd}

\citet{koloskova_sharper_2022} give a bound on the iteration complexity to achieve $\varepsilon$-stationarity.
The leading term of this bound is $\frac{\Delta L (\sigma^2 + \zeta^2)}{\varepsilon^2}$, where $\zeta^2$ is a constant bounding data heterogeneity (cf. \Cref{sec:on_the_bounded_data_heterogeneity_assumption}).

Because \koloskovaasgd employs uniform worker-sampling and keeps the number of gradients in-flight constant throughout, each worker will contribute $\nicefrac{M}{n}$ gradients in the long run, where $M$ denotes the number of iterations.
The time needed to compute these gradients is bottlenecked by the slowest worker and thus scales with the maximum of the computation times, $\tau_{\max}$, not their arithmetic mean, $\tau_A$.
In other words, the expected time to complete a large number $M$ of iterations of \koloskovaasgd will be performed in $\nicefrac{M \tau_{\max}}{n}$ units of time.
The leading term of the time complexity bound is therefore $\frac{\Delta L (\sigma^2 + \zeta^2)}{n \varepsilon^2} \tau_{\max}$ as stated in \Cref{table:complexities}.


\subsection{Delay-Adaptive ASGD} 
\label{sub:delay_adaptive_asgd}

In \mishchenkoasgd, stepsizes are scaled down in proportion to the gradient staleness.
In our model with a cyclic update schedule (cf. \Cref{sec:asyncsgd_with_a_cyclic_update_schedule}), these stepsizes are therefore deterministic and do not change per within-cycle iteration after the second cycle.
Let $\delta_k$ denote the number of model updates applied between worker $i_k$ receiving the model and the server applying the gradient in iteration $k$.
Then the stepsize applied is $\gamma_k = \frac{\gamma}{1 + \delta_k}$.
The total aggregate stepsize of steps taken by worker $i$ over one cycle is then
\begin{equation}
    \label{eq:cap-gamma}
    \Gamma_i \coloneqq \sum_{k : i_k = i} \gamma_k = \sum_{k : i_k = i} \frac{\gamma}{1 + \delta_k} ,
\end{equation}

and we can decompose the cycle step as before (cf. \Cref{lemma:cycle-step-decomposition}).
The objective function targeted by \mishchenkoasgd is a weighted-average, $\hat{F}(\xx) = \sum_{i=1}^n \hat{w}_i F_i(\xx)$, with weights $\hat{w}_i \propto \Gamma_i$.
If the local objective functions are linearly independent, then $\hat{F}$ coincides with the equal-weighted average $F$ only if all workers have the same computation time, $\tau_i = \tau_A$.
In this case, \mishchenkoasgd coincides with \rasgd and \vasgd.

In the general case, slow workers are penalized twice---due to the lower number of updates per cycle and the smaller stepsizes.
By letting faster workers take larger steps, \mishchenkoasgd may converge faster, albeit to the wrong objective function $\hat{F}$.

Under \Cref{assumption:computation-times}, if simultaneous updates are processed in ascending order of worker indices, the iteration-specific delay $\delta_k$ for every update delivered by worker $i$ remains constant across all iterations $k$ with $i_k = i$.
With a slight abuse of notation, the worker-specific delay is then 
\begin{equation}
    \label{eq:delta-i}
    \delta_i = \sum_{j \neq i} \frac{\tau_i}{\tau_j} = \tau_i \frac{n}{\tau_H} - 1 .
\end{equation}

Substituting \eqref{eq:delta-i} into \eqref{eq:cap-gamma}, we find $\Gamma_i = \gamma \frac{\tau_{\max} \tau_H}{n} \frac{1}{\tau_i^2}$.
With this, the cycle stepsize and sum of squared stepsizes become
\begin{equation*}
    \alpha = \gamma \frac{\tau_{\max} \tau_H}{n} \sum_{i=1}^n \tau_i^{-2} , \quad A = \gamma^2 \frac{\tau_{\max} \tau_H^2}{n^2} \sum_{i=1}^n \tau_i^{-3} .
\end{equation*}

Following the same steps as in \Cref{sub:proof_of_thm:r-asgd-bound_and_corollary:r-asgd-time-complexity,sub:proof_of_thm:v-asgd-bound_and_corollary:v-asgd-time-complexity}, we can now derive the cycle and wall-clock time complexities.
Under the same assumptions as in \Cref{thm:convergence-bound-general} (imposed on $\hat{F}$), the leading term in the cycle complexity becomes
\begin{equation*}
    \frac{\hat\Delta \hat{L} \sigma^2}{\varepsilon^2 \tau_{\max}} \frac{\sum_{i=1}^n \tau_i^{-3}}{\rbr{\sum_{i=1}^n \tau_i^{-2}}^2} ,
\end{equation*}

and that in the time complexity bound
\begin{equation*}
    \frac{\hat\Delta \hat{L} \sigma^2}{\varepsilon^2} \frac{\sum_{i=1}^n \tau_i^{-3}}{\rbr{\sum_{i=1}^n \tau_i^{-2}}^2} = \frac{\hat\Delta \hat{L} \sigma^2}{n \varepsilon^2} \tau_{DA} ,
\end{equation*}

where $\tau_{DA} \coloneqq n \frac{\sum_{i=1}^n \tau_i^{-3}}{\rbr{\sum_{i=1}^n \tau_i^{-2}}^2}$.
That is, the leading term of the time complexity of \mishchenkoasgd scales with $\tau_{DA}$, which collapses to the arithmetic mean if all workers have the same computation speed.
We can bound
\begin{equation*}
    \tau_H \le \tau_{DA} \le n \tau_{\min} ,
\end{equation*}

which shows that \mishchenkoasgd may converge faster than \rasgd---e.g., if there is one fast worker with $\tau_1 = 1$ and one slow worker with $\tau_2 = 10$, so $\tau_{DA} \approx 1.96 < 5.5 < \tau_A$---or slower---e.g., under near-homogeneity with a slight skew, $\tau_1 = 1$, $\tau_2 = \dots = \tau_{10}$, so $\tau_{DA} \approx 2.01 > 1.9 = \tau_A$---but the objective function targeted in such cases is, in general, not the equal-weighted average \eqref{eq:objective-intro}.

The leading term scaling with $\tau_A$ reported in \Cref{table:complexities} refers to the setting with homogeneous worker computation times under which \mishchenkoasgd can guarantee convergence to an $\varepsilon$-stationary point for arbitrarily small $\varepsilon$.



\section{Experimental Details} 
\label{sec:experimental_details}

For our experiments in \Cref{sec:experiments}, we consider an image classification problem on MNIST \citep{lecun_mnist_2010} with standard normalization.
To enforce a high degree of data heterogeneity, we assign each one of the ten classes to one of ten workers.
This is a special case of Dirichlet partitioning \citep{hsu_measuring_2019,yurochkin_bayesian_2019} with concentration parameter set to 0.
We trim the workers' data to ensure equal sample sizes, resulting in $5,421$ examples per client.

Our model is a two-layer MLP with ReLU activations in the hidden layer of dimension 128.
The workers compute stochastic gradients using minibatches of their local data of size 64.
The model is trained by minimizing cross-entropy loss.

We assign $\tau_i \in \{1,2,4,8,16\}$ to two workers each, making the fastest workers 16 times faster than the slowest.
In the fixed-computation setting, worker $i$ takes $\tau_i$ units of time to compute a stochastic gradient.
In the fluctuation computation-time setting, worker $i$'s computation times are sampled from an exponential distribution with mean $\tau_i$ instead.

For each method, we tune the stepsize parameter $\gamma$ within a fixed wall-clock budget of $30,000$ time units to minimize loss, and sweep the set $\{10^x: x=-4,\dots,0\}$.
The selected stepsizes all fall in the interior of this grid.

We report the training loss versus wall-clock time in \Cref{fig:loss} for the three methods under consideration.
Each method is run five times over different seeds.
The solid lines show the median over these runs and the shaded region corresponds to the minimum and maximum.
To enable these computations when computation times fluctuate, the loss trajectories have been linearly interpolated over a grid of size 200 covering the simulated time horizon.

We find that \rasgd outperforms \malenia and \ringleader in this setup.
Although our theoretical analysis suggests all three methods to have similar performance in the fixed-computation model, it appears that \rasgd benefits from the larger number of model updates.
This is consistent with the finding that, in the setting with fluctuating computation-times, the performance of \rasgd is not affected as it takes roughly the same number as updates as before, whereas \malenia and \ringleader are slowed down due to, in expectation, longer gathering phases.


\section{Further Experiments} 
\label{sec:further_experiments}

\subsection{Vanilla ASGD Targets the Frequency-Weighted Average} 
\label{sub:objective_inconsistency_of_vanilla_asgd}

To illustrate that \rasgd targets the right objective function \eqref{eq:objective-intro}, while \vasgd instead targets the frequency-weighted average of the local objective \eqref{eq:f-tilde}, we consider a simple example with two workers with computation times $\tau_1 = 1, \tau_2 = 2$, and local objective functions $F_1(x) = (x-4)^2$ and $F_2(x) = 2(x+3)^2$.
The unique minimizer of the equal-weighted average is $x^* = -\nicefrac{2}{3}$, while that of the frequency-weighted average is $\tilde x^* = +\nicefrac{1}{2}$.

We run both algorithms with the same cycle stepsize $\alpha = 0.01$, setting gradient stochasticity to zero to isolate the effects of staleness.
Their trajectories in the search space are shown in \Cref{fig:vanilla}.
Panel (a) shows the full sequence of iterates, including the within-cycle iterates, while panel (b) shows the cycle iterates only.
We see that \rasgd converges to the minimizer of the equal-weighted average $F$, while \vasgd converges to the minimizer of the frequency-weighted average $\tilde F$.
Moreover, \vasgd converges faster for the same cycle stepsize $\alpha$, consistent with our discussion of \Cref{corollary:v-asgd-time-complexity} where we have seen that the leading term scales with the harmonic mean of workers' computation times.
Note that the oscillations seen in the trajectories covering all iterates in panel (a) are absent when shifting to the cycle-iterate trajectories in (b).

Lastly, a close look at the trajectory of \vasgd in panel (b) shows that the convergence is not exactly to the stationary point.
This, too, is consistent with our theory:
Even in the absence of gradient stochasticity ($\sigma^2 = 0$), a neighborhood term reflecting the staleness bias persists.
As seen in \Cref{thm:r-asgd-bound,thm:v-asgd-bound}, this scales with $\gamma^2$ and can thus be efficiently controlled by choosing a smaller stepsize.

\begin{figure}[htbp]
  \centering
  \begin{subfigure}[b]{0.49\textwidth}
    \centering
    \includegraphics[width=\textwidth]{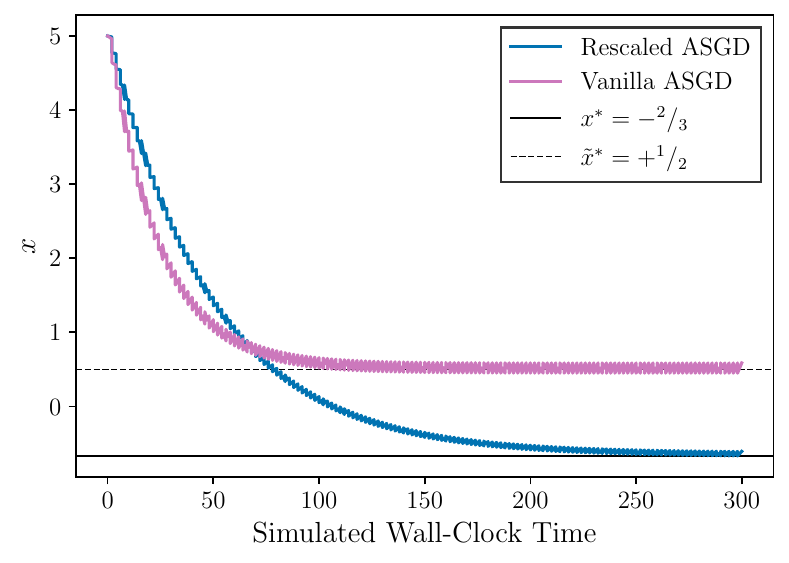}
    \caption{All iterates $x_k^m$.}
    \label{fig:1d-ex-all}
  \end{subfigure}
  \hfill
  \begin{subfigure}[b]{0.49\textwidth}
    \centering
    \includegraphics[width=\textwidth]{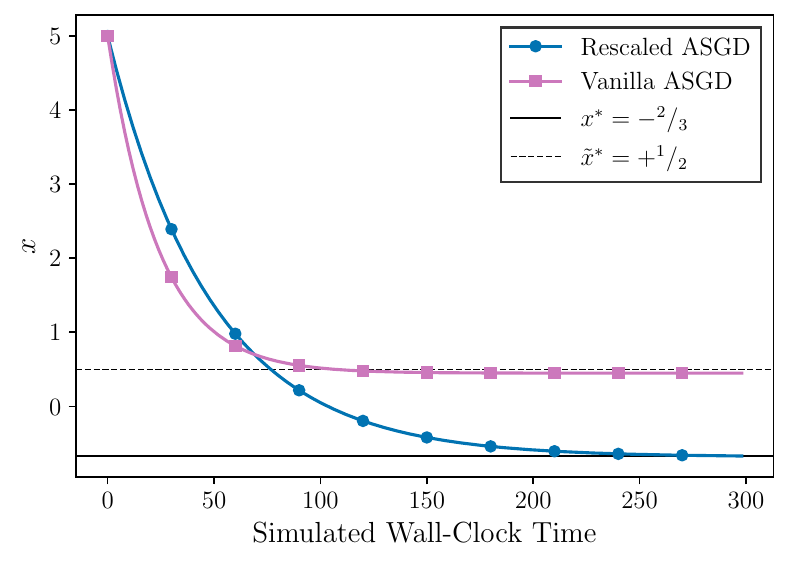}
    \caption{Cycle iterates $x^m$.}
    \label{fig:1d-ex-cycle}
  \end{subfigure}
  \caption{\rasgd targets the equal-weighted average $F$, \vasgd the frequency-weighted average $\tilde F$.}
  \label{fig:vanilla}
\end{figure}


\subsection{Delay-Adaptive SGD Exhibits Objective Inconsistency} 
\label{sub:why_delay_adaptive_sgd_struggles_with_data_heterogeneity}

\mishchenkoasgd \citep{mishchenko_asynchronous_2022} scales stepsizes down in proportion to the gradient staleness, i.e., the number of iterations that have passed between the worker receiving the model from the server and them delivering the gradient back.
In the fixed-computation model, this approach does the opposite of \rasgd.

To illustrate, consider another two-worker setting with one fast worker ($\tau_1 = 1$) and one slow worker ($\tau_2 = 100$).
99 out of 100 gradients delivered by the fast worker will exhibit no staleness, so \mishchenkoasgd does not scale the stepsizes down.
Merely one gradient of the fast worker in each cycle will be shrunk by a factor of two due to the slow worker delivering theirs.
The one gradient delivered by the slow worker, in contrast, will always be stale by 100 iterations, and thus the stepsize scaled down accordingly.

The search trajectory of \mishchenkoasgd will therefore be heavily biased towards the local objective function of the fast worker, even more so than under \vasgd.

\Cref{fig:adaptive} shows a simulation for $F_1(x) = (x-1)^2, F_2(x) = (x+1)^2$.
As expected, \mishchenkoasgd converges to a point close to $x_1^* = 1$, the minimizer of worker 1's objective function, whereas \rasgd converges to a small neighborhood around the minimizer, $x^* = 0$, of the equal-weighted average.

\begin{figure}[htbp]
  \centering
  \includegraphics[width=0.8\textwidth]{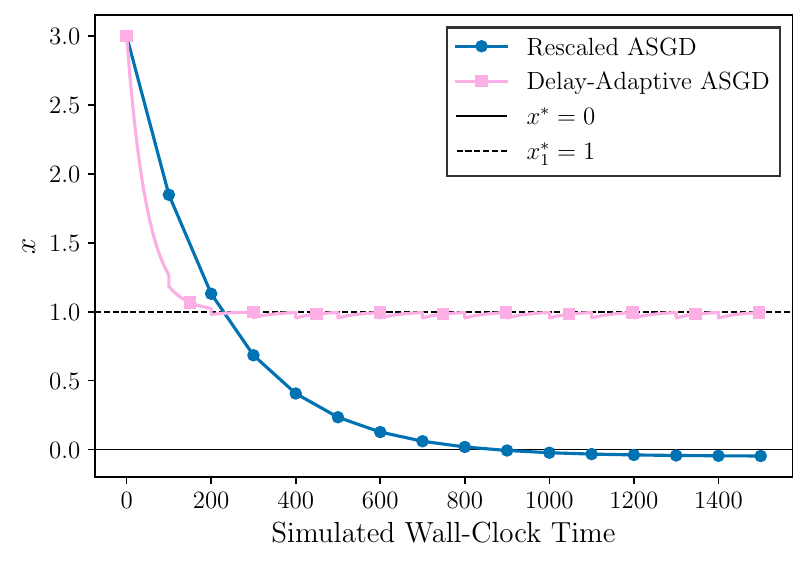}
  \caption{\mishchenkoasgd is heavily biased towards the local objective of the faster worker. \rasgd converges to a small neighborhood around the minimizer of the equal-weighted average.}
  \label{fig:adaptive}
\end{figure}



\end{document}

%% file: macros.tex
\usepackage{graphicx}
\usepackage{apptools}
\usepackage[flushleft]{threeparttable}
\usepackage{array,booktabs,makecell}
\usepackage{multirow}
\usepackage{nicefrac}
\usepackage{amsthm}
\usepackage{mathtools}
\usepackage{amsmath,amsfonts,bm,amssymb}
\usepackage{dsfont}
\usepackage{wrapfig}
\usepackage{caption}
\usepackage{siunitx}
\usepackage{thm-restate}
\usepackage{nccmath}
\usepackage{empheq}
\usepackage{bbm}
\usepackage{tabularx}

\usepackage{algorithmic}
\usepackage{algorithm}
\usepackage{filecontents}

\usepackage{color}
\usepackage{colortbl}
\definecolor{bgcolor}{rgb}{0.76,0.88,0.50}
\definecolor{bgcolor0}{rgb}{0.93,0.99,1}
\definecolor{bgcolor1}{rgb}{0.8,1,1}
\definecolor{bgcolor2}{rgb}{0.8,1,0.8}
\definecolor{bgcolor3}{rgb}{0.50,0.90,0.50}
\usepackage{tcolorbox}
\usepackage{pifont}

\definecolor{mydarkblue}{rgb}{0,0.16,0.54}
\definecolor{mydarkgreen}{RGB}{39,130,67}
\definecolor{mydarkorange}{RGB}{236,147,14}
\definecolor{mydarkred}{RGB}{192,47,25}
\definecolor{mydarkgreen}{RGB}{0,100,0}
\definecolor{ruby}{RGB}{155,17,30}
\definecolor{chili}{RGB}{191,0,0}
\definecolor{sangria}{RGB}{146,0,10}
\definecolor{burgundy}{RGB}{128,0,32}
\definecolor{darkred}{RGB}{132,0,0} 
\definecolor{cherry}{RGB}{192,0,0} 
\definecolor{myaccent}{rgb}{0,0.45,0.45}
\definecolor{anotherdarkred}{rgb}{0.45,0,0.05}

\definecolor{linkcolor}{rgb}{0.45,0,0.05} 

\newcommand{\crossmarkred}{{\color{mydarkred}\ding{56}}}
\newcommand{\checkmarkgreen}{\textbf{\color{mydarkgreen}\ding{52}}}

\definecolor{pastelred}{HTML}{FFB3BA}
\definecolor{pastelgreen}{HTML}{BDECB6}
\definecolor{pastelblue}{HTML}{AEC6CF}

\usepackage{todonotes}

\usepackage{xspace}
\usepackage{relsize}

\providecommand{\ee}{\mathbf{e}}
\providecommand{\uu}{\mathbf{u}}
\providecommand{\vv}{\mathbf{v}}
\providecommand{\xx}{\mathbf{x}}
\providecommand{\yy}{\mathbf{y}}

\newcommand{\norm}[1]{\left\| #1 \right\|}
\newcommand{\sqnorm}[1]{\left\| #1 \right\|^2}

\newcommand{\abs}[1]{\left| #1 \right|}

\newcommand{\nbr}[1]{\left\{#1\right\}} 
\newcommand{\rbr}[1]{\left(#1\right)} 
\newcommand{\abr}[1]{\left\langle#1\right\rangle} 

\newcommand{\R}{\mathbb{R}} 
\newcommand{\N}{\mathbb{N}} 

\newcommand{\Exp}[1]{{\mathbb{E}}\left[#1\right]}
\newcommand{\ExpSub}[2]{{\mathbb{E}}_{#1}\left[#2\right]}

\newcommand{\cD}{\mathcal{D}}

\newcommand{\cF}{\mathcal{F}}

\newcommand{\cI}{\mathcal{I}}

\newcommand{\cO}{\mathcal{O}}

\makeatletter
\newcommand{\vast}{\bBigg@{4}}

\def\<{\left\langle}
\def\>{\right\rangle}
\def\[{\left[}
\def\]{\right]}
\def\({\left(}
\def\){\right)}

\definecolor{myblue}{rgb}{0.3,0.25,0.2}
\definecolor{myorange}{rgb}{0.3,0.25,0.2}

\definecolor{alggray}{gray}{0.35} 
\newcommand{\algname}[1]{\text{\color{alggray}\sffamily\relscale{0.93}#1}\xspace}

\newcommand{\ringleader}{\algname{Ringleader~ASGD}}

\newcommand{\malenia}{\algname{Malenia~SGD}}

\newcommand{\naiveminibatch}{\algname{Naive~Minibatch~SGD}}

\newcommand{\asyncsgdtitle}{Asynchronous~SGD}
\newcommand{\asyncsgd}{\algname{\asyncsgdtitle}}

\newcommand{\sgd}{\algname{SGD}}

\usepackage{thmtools}
\usepackage{thm-restate}
\usepackage{mdframed}

\theoremstyle{plain}
\newtheorem{theorem}{Theorem}[section]

\newtheorem{lemma}[theorem]{Lemma}

\newtheorem{assumption}{Assumption}[section]
\theoremstyle{definition}
\newtheorem{definition}{Definition}[section]
\theoremstyle{remark}

\theoremstyle{plain} 

\newmdenv[
  font=\normalfont\bfseries,
  linecolor=black,
  linewidth=0.8pt,
  topline=false,
  bottomline=false,
  leftline=false,
  rightline=false,
  backgroundcolor=gray!13,
  skipabove=2pt,
  skipbelow=2pt,
  innertopmargin=-5pt,
  innerbottommargin=4pt,
  innerrightmargin=4pt,
  innerleftmargin=4pt,
]{myboxed}

\newmdtheoremenv[
  font=\normalfont\bfseries,
  linecolor=black,
  linewidth=0.8pt,
  topline=false,
  bottomline=false,
  leftline=false,
  rightline=false,
  backgroundcolor=gray!13,
  skipabove=2pt,
  skipbelow=2pt,
  innertopmargin=-5pt,
  innerbottommargin=4pt,
  innerrightmargin=4pt,
  innerleftmargin=4pt,
]{boxedtheorem}[theorem]{Theorem}

\newmdtheoremenv[
  font=\normalfont\bfseries,
  linecolor=black,
  linewidth=0.8pt,
  topline=false,
  bottomline=false,
  leftline=false,
  rightline=false,
  backgroundcolor=gray!13,
  skipabove=2pt,
  skipbelow=2pt,
  innertopmargin=-5pt,
  innerbottommargin=4pt,
  innerrightmargin=4pt,
  innerleftmargin=4pt,
]{boxedlemma}[theorem]{Lemma}

\newmdtheoremenv[
  font=\normalfont\bfseries,
  linecolor=black,
  linewidth=0.8pt,
  topline=false,
  bottomline=false,
  leftline=false,
  rightline=false,
  backgroundcolor=gray!13,
  skipabove=2pt,
  skipbelow=2pt,
  innertopmargin=-5pt,
  innerbottommargin=4pt,
  innerrightmargin=4pt,
  innerleftmargin=4pt,
]{boxeddefinition}[theorem]{Definition}

\newmdtheoremenv[
  font=\normalfont\bfseries,
  linecolor=black,
  linewidth=0.8pt,
  topline=false,
  bottomline=false,
  leftline=false,
  rightline=false,
  backgroundcolor=gray!13,
  skipabove=2pt,
  skipbelow=2pt,
  innertopmargin=-5pt,
  innerbottommargin=4pt,
  innerrightmargin=4pt,
  innerleftmargin=4pt,
]{boxedassumption}[theorem]{Assumption}

\newtheorem{innercustomthm}{Theorem}
\newenvironment{restate-theorem}[1]
  {\renewcommand\theinnercustomthm{#1}\innercustomthm}
  {\endinnercustomthm}

\newtheorem{innercustomlemma}{Lemma}
\newenvironment{restate-lemma}[1]
  {\renewcommand\theinnercustomlemma{#1}\innercustomlemma}
  {\endinnercustomlemma}

\newtheorem{innercustomproposition}{Proposition}
\newenvironment{restate-proposition}[1]
  {\renewcommand\theinnercustomproposition{#1}\innercustomproposition}
  {\endinnercustomproposition}

\newenvironment{restate-boxedtheorem}[1]
  {\renewcommand\theinnercustomthm{#1}\begin{myboxed}\begin{innercustomthm}}
  {\end{innercustomthm}\end{myboxed}}

\newenvironment{restate-boxedlemma}[1]
  {\renewcommand\theinnercustomlemma{#1}\begin{myboxed}\begin{innercustomlemma}}
  {\end{innercustomlemma}\end{myboxed}}

\newcommand*{\sketchproofname}{Sketch of Proof}